\newtheorem{definition}{Definition}[section]
\newtheorem{theorem}{Theorem}[section]
\newtheorem{prop}[theorem]{Proposition}
\newenvironment{sproof}
 {\vspace{-8pt}\begin{spacing}{0.3}\begin{proof}}
 {\end{proof}\end{spacing}\vspace{-3pt}}
\newcommand*\circled[1]{\tikz[baseline=(char.base)]{
            \node[shape=circle,draw,inner sep=0.5pt] (char) {\small #1};}}
\newcommand{\term}[1]{\protect\circled{#1}}
\newcolumntype{L}[1]{>{\raggedright\let\newline\\\arraybackslash\hspace{0pt}}m{#1}}
\begin{document}
%\linenumbers
\title{Trusted Multi-View Classification  with \\ Dynamic Evidential Fusion}
%Variational Direchlet Fusion for Trusted Multimodal Classification
\author{Zongbo~Han,
        Changqing~Zhang$^*$,
        Huazhu Fu, and
        Joey Tianyi Zhou
\IEEEcompsocitemizethanks{\IEEEcompsocthanksitem Z. Han and C. Zhang are with the College of Intelligence and Computing, Tianjin University, Tianjin 300072, China (e-mail: \{zongbo, zhangchangqing\}@tju.edu.cn). 
\IEEEcompsocthanksitem H. Fu is with the Institute of High Performance Computing (IHPC), Agency for Science, Technology and Research (A*STAR), Singapore 138632. (E-mail: hzfu@ieee.org)
\IEEEcompsocthanksitem J. T. Zhou is with the A*STAR Centre for Frontier AI Research (CFAR), Singapore 138632 (e-mail: zhouty@ihpc.a-star.edu.sg).
\IEEEcompsocthanksitem Corresponding author: Changqing Zhang.
}
}

\markboth{IEEE Transactions on Pattern Analysis and Machine Intelligence}%
{Shell \MakeLowercase{\textit{et al.}}: Bare Demo of IEEEtran.cls for Computer Society Journals}

\IEEEtitleabstractindextext{%
\begin{abstract}
\justifying{Existing multi-view classification algorithms focus on promoting accuracy by exploiting different views, typically integrating them into common representations for follow-up tasks. Although effective, it is also crucial to ensure the reliability of both the multi-view integration and the final decision, especially for noisy, corrupted and out-of-distribution data. Dynamically assessing the trustworthiness of each view for different samples could provide reliable integration. This can be achieved through uncertainty estimation. With this in mind, we propose a novel multi-view classification algorithm, termed  trusted multi-view classification (TMC), providing a new paradigm for multi-view learning by dynamically integrating different views at an evidence level. The proposed TMC can promote classification reliability by considering evidence from each view. Specifically, we introduce the variational Dirichlet to characterize the distribution of the class probabilities, parameterized with evidence from different views and integrated with the Dempster-Shafer theory. The unified learning framework induces accurate uncertainty and accordingly endows the model with both reliability and robustness against possible noise or corruption. Both theoretical and experimental results validate the effectiveness of the proposed model in accuracy, robustness and trustworthiness.}
\end{abstract}
\begin{IEEEkeywords}
Multi-view learning, Evidential deep learning, Varitional Dirichlet.
\end{IEEEkeywords}}
\maketitle
\IEEEdisplaynontitleabstractindextext
\IEEEpeerreviewmaketitle
\IEEEraisesectionheading{\section{Introduction}\label{sec:introduction}}
\IEEEPARstart{M}{}ulti-view data is common in real-world scenarios, and state-of-the-art multi-view learning methods have achieved significant success across a wide spectrum of applications. These methods are typically built on complex models \cite{liu2021self,lin2021completer,zhang2019cpm,xu2014large,liu2021one,peng2019comic,li2017discriminative} (usually integrating different views with deep neural networks (DNNs) ) and rely on large datasets with semantic annotations. Although existing multi-view learning algorithms provide promising classification performance, they tend to underestimate the uncertainty, and are thus vulnerable to yielding unreliable predictions, particularly when presented with noisy or corrupted views (\emph{e.g.}, information from abnormal sensors). Consequently, their deployment in safety-critical applications (\emph{e.g.}, machine-learning-aided medical 
diagnosis~\cite{esteva2017dermatologist} or autonomous driving~\cite{bojarski2016end}) is limited. This has inspired us to propose a new paradigm for multi-view classification with trusted decisions.

For multi-view learning, traditional methods usually assume an equal value for different views \cite{foresti2002distributed,simonyan2014two, shutova2016black, tsai2019multimodal} or learn a fixed weight factor for each \cite{natarajan2012multimodal,perez2019mfas,yan2004learning, atrey2010multimodal,poria2015deep}. In either case, the underlying assumption is that the qualities or importance of these views are essentially stable for all samples. However, in practice, the quality of a view may vary for different samples. Thus, it is crucial for designed models to be aware of this for adaption. For example, in machine-learning-aided multimodal medical diagnosis \cite{perrin2009multimodal,sui2018multimodal}, the algorithm may need to output a confident diagnosis based on a magnetic resonance (MR) image for one subject, while for another subject, a positron emission tomography (PET) image may be sufficient. Meanwhile, in image-text data classification, the text modality may be informative for one sample but noisy for another. Perceptual decisions are basically based on multiple sensory inputs whose reliabilities also vary over time ~\cite{hou2019neural}. Therefore, classification should be adaptive to dynamically changing multi-view correlations, and the decisions should be well explained according to multi-view inputs. Typically, we not only need to know the classification result, but also be able to answer ``How confident is the classification?" and ``Why is the confidence so high/low ?". As such, it is sensible to simultaneously model uncertainty for each sample which reflects the confidence of classification and uncertainty for each view of each sample which reflects the confidence from individual view.

Quantifying the prediction uncertainty is practical for enhancing the reliability of complex machine learning~\cite{charpentier2020posterior}. Uncertainty-based methods can be roughly divided into two main categories, \emph{i.e.}, Bayesian and non-Bayesian. Bayesian approaches characterize uncertainty by learning a distribution over the weights \cite{mackay1992bayesian, bernardo2009bayesian,neal2012bayesian}. Numerous Bayesian methods have been proposed, including Laplacian approximation \cite{mackay1992practical}, Markov Chain Monte Carlo (MCMC) \cite{neal2012bayesian} and variational techniques \cite{graves2011practical, ranganath2014black, blundell2015weight}. Unfortunately, compared with standard neural networks, due to the challenges of modeling the distribution of weights and convergence, Bayesian methods are computationally expensive. \cite{gal2016dropout} estimates the uncertainty by introducing dropout~\cite{srivastava2014dropout} as a Bayesian approximation in the testing phase, thereby reducing the computational cost. To avoid explicitly learning the distribution of parameters, recently a variety of non-Bayesian methods have been developed, including deep ensemble \cite{lakshminarayanan2017simple}, evidential deep learning \cite{sensoy2018evidential} and deterministic uncertainty estimation \cite{van2020uncertainty}. All of these methods are designed for estimating the uncertainty on single-view data, despite the fact that fusing multiple views guided by uncertainty could potentially improve both classification performance and reliability.

To promote trusted learning, in this work, we propose a novel multi-view classification method to elegantly integrate multi-view information for trusted decision making (shown in Fig.~\ref{fig:framework1}). Our model exploits different views at an evidence level instead of feature or output level as done previously, which produces a stable and reasonable uncertainty estimation and thus promotes both classification reliability and robustness. The Variational Dirichlet distribution is employed to model the distribution of the class probabilities, parameterized with evidence from different views and then the evidences are integrated with the interpretable Dempster-Shafer theory.  Compared with the conference version \cite{han2021trusted}, we significantly improve our work in the following aspects: (1) theoretical analysis to clarify the advantages of our model (e.g., proposition 3.1-3.4 with their proof in Sec. 3.3); (2) a more principled (variational) perspective to explain the desired Dirichlet distribution; (3) model improvement to enhance the interaction between different views (e.g., pseudo-view enhanced TMC in Sec. 3.4); (4) extensive experiments and discussions to further validate our model in potential applications (e.g., experiments for RGB-D scene recognition in Sec. 4.2). Overall, the proposed methodology supported by theoretical guarantees could provide trusted multi-view classification which is validated by sufficient empirical results.  In summary, the contributions of this paper are:
\begin{itemize}
\item [(1)] We first propose the  trusted multi-view classification (TMC) model, providing a new multi-view classification paradigm with reliable integration and decision explainability (benefiting from the uncertainty of each view).
\item [(2)] The proposed model is an effective and efficient framework (without any additional computations or neural network changes) for sample-adaptive multi-view integration. Further it integrates multi-view information at an evidence level with the Dempster-Shafer theory in an optimizable (learnable) way.
\item [(3)] Based on the subjective uncertainty explicitly learned for each view, the integration is conducted in a promising and theoretical guaranteed manner. This enables our model to improve both classification accuracy and trustworthiness.
\item [(4)] We conduct extensive experiments on datasets with multiple views, which clearly validate the superiority of our algorithm in accuracy, robustness, and reliability. This can be attributed to the benefits of our uncertainty estimation and multi-view integration strategy. The code is publicly available \footnote{\url{https://github.com/hanmenghan/TMC}}.
\end{itemize}

\section{Related Work}
\textbf{Multi-view Learning.}  Learning on data with multiple views has proven effective in a variety of tasks. CCA-based multi-view models \cite{hotelling1992relations,akaho2006kernel,wang2007variational,andrew2013deep,wang2015deep,wang2016deep} are representative ones that have been widely used in multi-view representation learning. These models essentially seek a common representation by maximizing the correlation between different views. Considering common and exclusive information, hierarchical multi-modal metric learning (HM3L) \cite{zhang2017hierarchical} explicitly learns shared multi-view and view-specific metrics, while $AE^2$-Nets \cite{zhang2019cpm} implicitly learn a complete (view-specific and shared multi-view) representation for classification. Recently, the methods \cite{tian2019contrastive,bachman2019learning,chen2020simple,hassani2020contrastive} based on contrastive learning have also achieved good performance. Due to its effectiveness, multi-view learning has been widely used in various applications \cite{kiela2018efficient,bian2017revisiting,kiela2019supervised,wang2020makes}. 

\textcolor{black}{\textbf{Late fusion models in multi-view, multi-modal and multi-kernel learning.} Late fusion is a widely used fusion strategy, which can be categorized as follows: naive fusion strategy, learnable classifier fusion, and confidence-based fusion. The naive fusion strategy is the most widely used method, which fuses decision results from different sources through decision averaging \cite{wang2019multi,shutova2016black,liu2018late}, decision voting \cite{liong2020amvnet,morvant2014majority}, or weighting multiple decisions \cite{potamianos2003recent, evangelopoulos2013multimodal, wang2021late,zhang2021late}. Besides, there are studies that integrate decisions from different sources through learnable models \cite{wei2019surface,wang2021mogonet,wang2019generative,ding2021cooperative}. Recently, confidence-based fusion has attracted attention \cite{subedar2019uncertainty,tian2020uno}. The significant difference between our method and them is that ours is an end-to-end fusion framework with theoretical guarantees.}

\textbf{Uncertainty-Based Learning.} DNNs have achieved great success in various tasks. However, since most deep models are essentially deterministic functions, their uncertainty cannot be obtained. %Aleatoric uncertainty depicts the noise inherent in the observations while epistemic uncertainty quantify the ignorance of the model. 
Bayesian neural networks (BNNs) \cite{denker1991transforming,
mackay1992practical,neal2012bayesian} endow deep models with uncertainty by replacing the deterministic weight parameters with distributions. To avoid the prohibitive computational cost of DNNs, MC-dropout \cite{gal2016dropout} performs dropout sampling from the weight during training and testing. Ensemble-based methods \cite{lakshminarayanan2017simple} train and integrate multiple deep networks and achieve promising performance. To further reduce the parameters of ensemble models, different enhancement strategies have been developed, including independent subnetworks \cite{havasi2021training} and subnetworks of different depths \cite{antoran2020depth}.  % \textcolor{red}{The above methods obtain uncertainty through multiple sampling, and it is difficult to conduct uncertainty based fusion in an end to end manner.} 
Instead of indirectly modeling uncertainty through network weights, the algorithm \cite{sensoy2018evidential} introduces the subjective logic theory to directly model it without ensemble or Monte Carlo sampling. Building upon RBF networks, the distance between test samples and prototypes can be used as the agency for deterministic uncertainty \cite{van2020uncertainty}. \textcolor{black}{Recently, uncertainty estimation algorithms based on Dirichlet distribution\cite{malinin2018predictive,Malinin2020Ensemble,sensoy2018evidential,charpentier2020posterior,kopetzki2021evaluating} have been proposed. PriorNet \cite{sensoy2018evidential} enables the model to be aware of out-of-distribution data by jointly exploiting out-of-distribution and in-distribution data. Evidential network \cite{sensoy2018evidential} models the Dirichlet distribution by introducing subjective logic. Ensemble distribution distillation \cite{Malinin2020Ensemble} obtains the Dirichlet distribution by distillation from the predictions of multiple models. PostNet \cite{charpentier2020posterior} employs normalizing flow and Bayesian loss to estimate uncertainty and obtain Dirichlet distribution during training. Median smoothing is applied to the Dirichlet model and significantly improves its ability to deal with adversarial examples \cite{kopetzki2021evaluating}.} %Using confidence predicting subnetworks, the uncertainty can then be obtained directly \cite{2020address}.

%\textbf{Uncertainty-based Learning.} Deep neural networks have achieved great success in various tasks. However since most deep models are essentially deterministic functions, the uncertainty of the model cannot be obtained. Bayesian neural networks (BNNs) \cite{denker1991transforming,mackay1992practical,neal2012bayesian} endow deep models with uncertainty by replacing the deterministic weight parameters with distributions. Since BNNs struggle in performing inference and usually come with prohibitive computational costs, a more scalable and practical approach, MC-dropout \cite{gal2016dropout}, was proposed. In this model, the inference is completed by performing dropout sampling from the weight during training and testing. Ensemble based methods \cite{lakshminarayanan2017simple} train and integrate multiple deep networks and also achieve promising performance. Instead of indirectly modeling uncertainty through network weights, the algorithm \cite{sensoy2018evidential} introduces the subjective logic theory to directly model uncertainty without ensemble or Monte Carlo sampling. Building upon RBF networks, the distance between test samples and prototypes can be used as the agency for deterministic uncertainty \cite{van2020uncertainty}. Benefiting from the learned weights of different tasks with homoscedastic uncertainty learning, \cite{kendall2018multi} achieves impressive performance in multi-task learning.

\textbf{Dempster-Shafer Evidence Theory (DST).}  DST, a theory on belief functions, was first proposed by Dempster \cite{dempster1967upper} as a generalization of the Bayesian theory to subjective probabilities \cite{dempster1968generalization}. Later, it was developed into a general framework to model epistemic uncertainty \cite{shafer1976mathematical}. In contrast to BNNs, which obtain uncertainty indirectly (e.g., multiple stochastic samplings from weight parameters or variational inferences), DST models it directly. DST allows beliefs from different sources to be combined by various fusion operators to obtain a new belief that considers all available evidence \cite{sentz2002combination, josang2012interpretation}. When faced with beliefs from different sources, Dempster's rule of combination tries to fuse the shared parts, and ignores the conflicting beliefs through normalization factors. A more specific implementation will be discussed later. \textcolor{black}{DS evidence theory has also been extensively studied in multiview learning, which mainly focuses on improving the Dempster’s combination rule \cite{atrey2010multimodal,liu2017weighted,bloch1996some,kantardzic2010click,basir2007engine,le1997application}. Different from previous works, we estimate uncertainty through a variational Dirichlet distribution and perform trusted decision fusion through a reduced version of the Dempster’s combination rule with theoretical guarantees.}
\begin{figure*}[!t]
\centering
\subfigure[Overview of the  trusted multi-view classification]{
\begin{minipage}[t]{0.7\linewidth}
\centering
\includegraphics[width=1\linewidth,height=0.38\linewidth]{./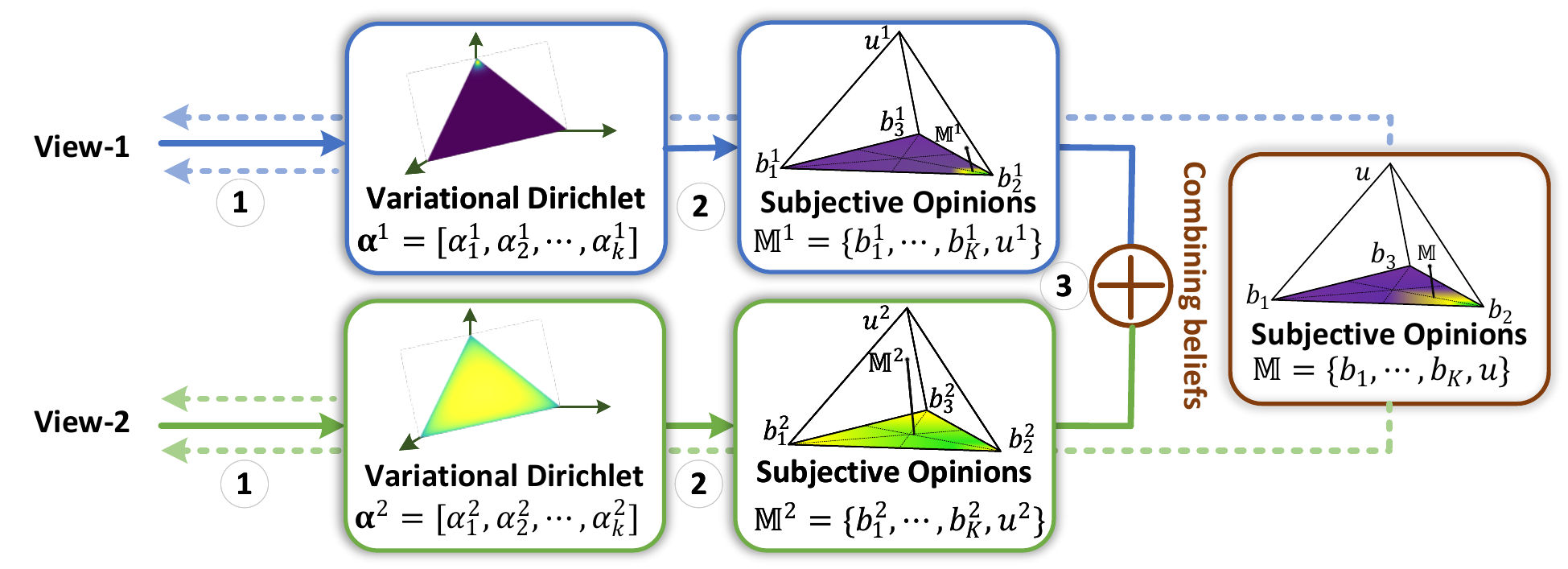}
\centering
\end{minipage}
\label{fig:framework1}}
\subfigure[Combining beliefs]{
\begin{minipage}[t]{0.250\linewidth}
\centering
\includegraphics[width=1\linewidth,height=1\linewidth]{./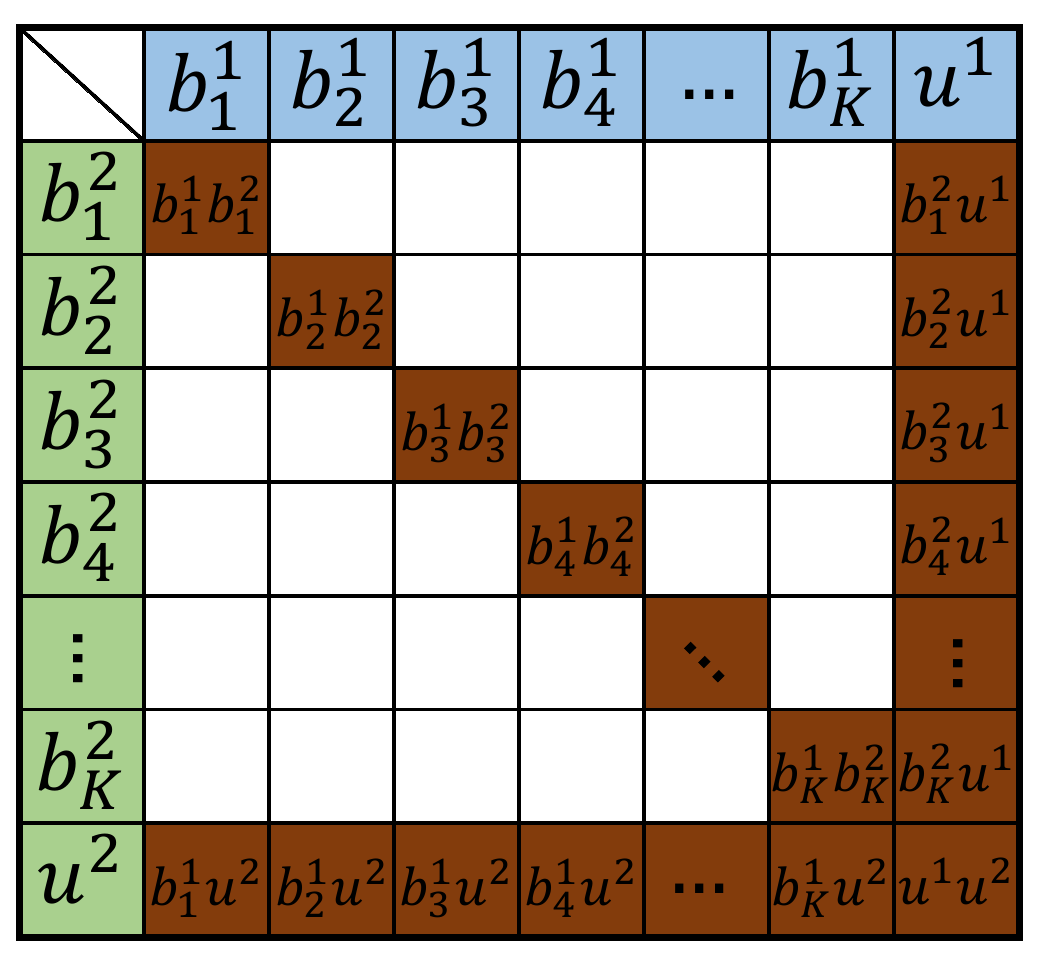}
\end{minipage}
\label{fig:framework2}}
\label{fig:framework}
\caption{Illustration of our algorithm. For clarity, we only use two views in the Fig.~\ref{fig:framework1}, where view-1 produces a confident observation and view-2 produces an uncertain observation. In this situation, the final obtained subjective opinions will be mainly determined by the confident view-1. The overall process of the model is composed of the following steps. Firstly, the Dirichlet distribution of each view is obtained using variational approximation (\term{1}). Then, the obtained Dirichlet distribution induces the subjective opinions including the belief and uncertainty masses (\term{2}). \textcolor{black}{Unlike frequency probability, subjective opinion can express the unknown explicitly through uncertainty. The subjective opinions $\mathbb{M}$ can be considered as a point in a higher-dimensional standard simplex space.} To integrate the subjective opinions from different 
views, DST-based combination rule is adopted (\term{3}). In training, the parameters of the model are updated along the dotted lines. The DST combination rule and an example are shown in Definition~\ref{def2} and Fig.~\ref{fig:framework2}, respectively. Specifically, given two sets of beliefs (blue and green blocks), we recombine the compatible parts of the two sets (brown blocks) and ignore the mutually exclusive parts (white blocks) to obtain the combined beliefs.}
\end{figure*}

\section{Trusted Multi-View Classification}
To obtain trusted multi-view fusion based on uncertainties of different views, reliable single-view uncertainty is required. The softmax output is usually considered as the confidence of prediction. However, this can lead to over confidence, even for erroneous prediction since the largest softmax output is usually used \cite{moon2020confidence,van2020uncertainty}. To capture the predictive uncertainty for single view, the Dirichlet distribution is considered to provide more trusted predictions \cite{sensoy2018evidential, NEURIPS2018_3ea2db50, Malinin2020Ensemble, NEURIPS2020_0eac690d}. It is intractable to fuse multiple Dirichlet distributions of different views for a closed-form solution. Thus, to flexibly exploit the uncertainty from multiple Dirichlet distributions, the subjective logic is introduced to estimate the uncertainty of different views. Accordingly, we propose an evidence based combination rule for trusted fusion. Systematic theoretical analysis is conducted to prove the effectiveness of the proposed method.

\subsection{Variational Dirichlet for Class Distribution}
\label{sec:varitional}
For a $K$-class classification problem, softmax operator is widely used in deep neural networks to convert the continuous output to class probabilities $\boldsymbol{\mu}=[\mu_1,\cdots,\mu_K]$, where $\sum_{k=1}^{K}\mu_k=1$. Mathematically, the class probabilities $\boldsymbol{\mu}$ can be regarded as a multinomial distribution parameters which describes the probability of $K$ mutually exclusive events \cite{bishop2006pattern}. Although effective, softmax operator usually leads to over confidence \cite{moon2020confidence,van2020uncertainty}. For this issue, we try to obtain a Dirichlet distribution which can be considered as the conjugate prior of the multinomial distribution \cite{bishop2006pattern} and provide a predictive distribution. Accordingly, uncertainty could be obtained according to the Dirichlet distribution to alleviate overconfidence problem \cite{sensoy2018evidential, NEURIPS2018_3ea2db50, Malinin2020Ensemble}. The definition of Dirichlet distribution can be found in Def.~\ref{def:Dir}.

\begin{definition} (\textbf{Dirichlet Distribution})
\label{def:Dir}
The Dirichlet distribution is parameterized by its $K$ concentration parameters $\mathbf{\boldsymbol{\alpha}}=[\alpha_1, \ldots, \alpha_K]$. The probability density function of the Dirichlet distribution is given by:
\begin{equation}Dir(\mathbf{\boldsymbol{\mu}} \mid \boldsymbol{\alpha})=\left\{\begin{array}{ll}
\frac{1}{B(\boldsymbol{\alpha})} \prod_{i=1}^{K} \mu_{i}^{\alpha_{i}-1} & \text { for } \mathbf{\boldsymbol{\mu}} \in \mathcal{S}_{K} ,\\
0 & \text { otherwise ,}
\end{array}\right.\end{equation}
where $\mathcal{S}_{K}$ is the $K-1$ dimensional unit simplex, defined as:
\begin{equation}
\mathcal{S}_{K}=\left\{\mathbf{\boldsymbol{\mu}} \mid \sum_{i=1}^{K} \mu_{i}=1 \text { and } 0 \leq \mu_{1}, \ldots, \mu_{K} \leq 1\right\},
\end{equation}
and $B(\boldsymbol{\alpha})$ is the K-dimensional multinomial beta function.
\label{def:dirichlet}
\end{definition}

\textbf{Variational Lower Bound}. Given a multi-view observation $\mathbb{X}=\{\mathbf{x}^m\}_{m=1}^{M}, \mathbb{X}\in \mathcal{X}$ and the corresponding label $\mathbf{y}\in \mathcal{Y}$, \textcolor{black}{where $\mathcal{X}$ and $\mathcal{Y}$ are sets of all observations and corresponding labels, respectively.}  For view $m$, we introduce the following generative process:
\begin{equation}
\begin{aligned}
\label{eq:gen}
\boldsymbol{\mu}^m &\sim p_{\theta^m}(\boldsymbol{\mu}^m\mid \mathbf{x}^m)=Dir(\boldsymbol{\mu}^m\mid \boldsymbol{\alpha}^m) \\
& \text{and} \quad \mathbf{y} \sim p(\mathbf{y} \mid \boldsymbol{\mu}^m) = Mult(\mathbf{y} \mid \boldsymbol{\mu}^m).
\end{aligned}
\end{equation}
$\boldsymbol{\mu}^m=[\mu_1^m, \cdots, \mu_K^m]$ are the parameters of the multinomial distribution, which are also regarded as the probabilities of multiple classes. $p_{\theta}(\boldsymbol{\mu}^m\mid \mathbf{x}^m)$ is a probabilistic encoder inducing the Dirichlet distribution $Dir(\boldsymbol{\mu}^m\mid \boldsymbol{\alpha}^m)$ which is inpsired by variational autoencoder \cite{kingma2014auto}. $Mult(\mathbf{y}\mid \boldsymbol{\mu}^m)$ is a multinomial distribution with parameters $\boldsymbol{\mu}^m$. Different from most deep neural networks that directly obtain multinomial distribution via softmax operator, Eq.~\ref{eq:gen} considers the conjugate prior of the multinomial distribution. Then the marginal likelihood of $p_{\theta^m}{(\mathbf{y}\mid \mathbf{x}^m)}$ can be decomposed as (the details are in the appendix):
\begin{equation}
\begin{aligned}
\label{eq:likelihood}
& \log p_{\theta^m}(\mathbf{y}\mid \mathbf{x}^m) = \\
& D_{KL}\left[q_{\theta^m}{(\boldsymbol{\mu}^m\mid \mathbf{x}^m)\| p_{\theta^m}(\boldsymbol{\mu}^m \mid \mathbf{x}^m, \mathbf{y})}\right] + \mathcal{L}(\mathbf{x}^m, \mathbf{y}),
\end{aligned}
\end{equation}
where  $q_{\theta^m}(\boldsymbol{\mu}^m \mid \mathbf{x}^m)$ is a probabilistic encoder with parameters $\theta^m$, and $\mathcal{L}(\mathbf{x}^m, \mathbf{y})$ can be decomposed as:
\begin{equation}
\begin{aligned}
\label{eq:loss_kl}
\mathcal{L}(\mathbf{x}^m, \mathbf{y}) = & \mathbb{E}_{q_{\theta^m}(\boldsymbol{\mu}^m\mid \mathbf{x}^m)}\left[\log p(\mathbf{y}\mid \boldsymbol{\mu}^m)\right]\\
& - D_{KL}\left[q_{\theta^m}(\boldsymbol{\mu}^m \mid \mathbf{x}^m)\| p_{\theta^m}(\boldsymbol{\mu}^m \mid \mathbf{x}^m))\right].
\end{aligned}
\end{equation}

Since the Kullback–Leibler divergence in Eq.~\ref{eq:likelihood} is always non-negative, and then $\mathcal{L}(\mathbf{x}^m, \mathbf{y})$ is the lower bound of the marginal likelihood $\log p_{\theta^m}(\mathbf{y}\mid \mathbf{x}^m)$. Therefore, we could maximize the marginal likelihood by optimizing $\mathcal{L}(\mathbf{x}^m, \mathbf{y})$.

\textbf{Decomposition of $\mathcal{L}(\mathbf{x}^m, \mathbf{y})$}. The first term in Eq.~\ref{eq:loss_kl} is essentially the integral of the traditional cross-entropy loss on the simplex determined by $Dir(\boldsymbol{\mu}^m \mid \boldsymbol{\alpha}^m)$, which can be formalized as: 
\begin{equation}
\begin{aligned}
& \mathbb{E}_{q_{\theta^m}(\boldsymbol{\mu}^m\mid \mathbf{x}^m)} \left[\log p(\mathbf{y}\mid \boldsymbol{\mu}^m)\right]\\ 
& = \mathbb{E}_{Dir(\boldsymbol{\mu}^m\mid \boldsymbol{\alpha}^m)}\left[\log p(\mathbf{y}\mid \boldsymbol{\mu}^m)\right]
= \sum_{k=1}^{K} y_{k}\left(\psi\left(\alpha_{k}^{m}\right) - \psi\left(S^{m}\right)\right),
\end{aligned}
\end{equation}
where $y_k$ is the $k$-th element of $\mathbf{y}$ represented by a onehot vector, $\psi(\cdot)$ is the \emph{digamma} function, and $S^m=\sum_{k=1}^K\alpha^m_k$ is the Dirichlet strength corresponding to the Dirichlet distribution $Dir(\boldsymbol{\mu}^m\mid \boldsymbol{\alpha}^m)$.

The second term in Eq.~\ref{eq:loss_kl} is essentially a prior constraint to ensure a more reasonable Dirichlet distribution. For multi-classification, reasonable $\boldsymbol{\mu}$ should satisfy $\mu_k \gg \mu_j$, where $k$ corresponds to the groundtruth label and $k\neq j$. Correspondingly, the reasonable Dirichlet distribution should be concentrated on the vertex corresponding to the groundtruth label on a simplex. To obtain concentrated Dirichlet distribution, we replace the softmax operator with a non-negative activation, and add $1$ to the outputs. Formally, assuming the neural network output is $\mathbf{o}$, then the obtained Dirichlet distribution parameters $\boldsymbol{\alpha} = \sigma(\mathbf{o}) + \mathbf{1}$, where $\sigma$ is non-negative activation function (e.g., Softplus, ReLU). In this manner, the obtained parameters $\boldsymbol{\alpha} = [\alpha_1, \cdots, \alpha_K]$ of Dirichlet distribution satisfy $\alpha_k\geq1$ where $ K\geq k\geq 1$. For multi-classification, given label $\mathbf{y}$, a reasonable Dirichlet distribution should be concentrated on the vertex corresponding to the label $\mathbf{y}$ on a simplex, which implies that the parameters of the Dirichlet distribution should be as close as possible to $\mathbf{1}$ except for that of the correct label. Formally, we can get the following equation to add the prior constraint:
\begin{equation}
\begin{aligned}
\label{eq:dkl}
 D_{KL}&\left[q_{\gamma}(\boldsymbol{\mu}^m \mid \mathbf{x}^m) \| p_{\gamma}(\boldsymbol{\mu}^m \mid \mathbf{x}^m))\right] \\
& \simeq D_{KL}\left[Dir(\boldsymbol{\mu}^m \mid \tilde{\boldsymbol{\alpha}}^m) \|Dir(\boldsymbol{\mu}^m \mid [1, \cdots, 1]) \right],
\end{aligned}
\end{equation}
where $\tilde{\boldsymbol{\alpha}}^m = \mathbf{y} + (1-\mathbf{y})\odot\boldsymbol{\alpha}^m$ is the Dirichlet distribution after replacing the $\alpha_k$ corresponding to the groundtruth label with $1$ which can avoid penalizing the Dirichlet parameter of the groundtruth class to $1$. Then Eq.~\ref{eq:dkl} can be derived as: 
\begin{equation}
\label{eq:kl_d}
\begin{array}{l}
D_{KL}\left[Dir(\boldsymbol{\mu}^m \mid \tilde{\boldsymbol{\alpha}}^m) \|Dir(\boldsymbol{\mu}^m \mid [1, \cdots, 1]) \right] \\
\quad=\log \left(\frac{\Gamma\left(\sum_{k=1}^{K} \tilde{\alpha}_{k}\right)}{\Gamma(K) \prod_{k=1}^{K} \Gamma\left(\tilde{\alpha}_{k}\right)}\right) \\
\quad\quad+\sum_{k=1}^{K}\left(\tilde{\alpha}_{k}-1\right)\left[\psi\left(\tilde{\alpha}_{k}\right)-\psi\left(\sum_{j=1}^{K} \tilde{\alpha}_{j}\right)\right],
\end{array}
\end{equation}
where $\Gamma(\cdot)$ is the \emph{gamma} function.

In practice, the overall variational objective function for view $m$ can be written as: 
\begin{equation}
\begin{aligned}
\mathcal{L}&(\mathbf{x}^m, \mathbf{y})=\mathbb{E}_{q_{\theta}(\boldsymbol{\mu}^m\mid \mathbf{x}^m)} \left[\log p(\mathbf{y}\mid \boldsymbol{\mu}^m)\right] \\ 
& - \lambda_t D_{KL}\left[Dir(\boldsymbol{\mu}^m \mid \tilde{\boldsymbol{\alpha}}^m) \|Dir(\boldsymbol{\mu}^m \mid [1, \cdots, 1]) \right],
\end{aligned}
\end{equation}
where $\lambda_t$ balances the expected classification error and KL-regularization \cite{higgins2017beta}. We gradually increase $\lambda_{t}$ to prevent the network from overemphasizing the KL divergence in the beginning of training, which may result in insufficient exploration of the parameter space and output a nearly flat uniform distribution. 

\subsection{Uncertainty and Evidence Theory}
\label{sec:evidence}
After obtaining the Dirichlet distributions from different views, we now focus on quantifying the uncertainty of the Dirichlet distribution to flexibly integrate multiple Dirichlet distributions. The fusion of multiple Dirichlet distributions is intractable in practice, since there is no closed-form solution for integrating multiple Dirichlet distributions. Therefore, the subjective logic \cite{jsang2018subjective} is introduced. In the context of multi-class classification, subjective logic provides a theoretical framework to associate the parameters of the Dirichlet distribution with the belief and uncertainty. In this way, we could quantify the uncertainty for classification, jointly modeling the probability of each class and the overall uncertainty of the current classification. For the $K$-class classification problem, subjective logic assigns belief mass $\{b_k\}_{k=1}^{K}$ to each class and an overall uncertainty mass $u$ to the whole classes frame based on the Dirichlet distribution. For the $m^{th}$ view, the $K+1$ mass values are all non-negative and their sum is one:
\begin{equation}
u^m+\sum_{k=1}^{K}{b_k^m} = 1, 
\label{eq:0}
\end{equation}
where $u^m \geq 0$ and $b_k^m \geq 0$ denote the overall uncertainty for view $m$ and the belief mass of the $k^{th}$ class, respectively.

Given a Dirichlet distribution $Dir(\boldsymbol{\mu}^m\mid \boldsymbol{\alpha}^m)$, subjective logic \cite{jsang2018subjective} and evidential classification \cite{sensoy2018evidential,bao2021evidential} associate the \emph{evidence} $\boldsymbol{e}^m=[e_1^m, \cdots, e_K^m]$ with the concentration parameters of the Dirichlet distribution $\boldsymbol{\alpha}^m=[\alpha_1^m, \cdots, \alpha_K^m]$. The \emph{evidence} refers to the metrics acquired from the input to support classification, which is closely related to the expected concentration parameters of the Dirichlet distribution. Specifically, the 
relationship between $e_k^m$ and $\alpha_k^m$ is $e_k^m=\alpha_k^m-1$.  Therefore, the belief mass $b_k^m$ and the uncertainty $u^m$ are obtained by:
\begin{equation}
b_{k}^m=\frac{e_k^m}{S^m}=\frac{\alpha_{k}^m-1}{S^m}\quad \text {and} \quad u^m = \frac{K}{S^m},
\label{eq:sl}
\end{equation}
where $S^m=\sum_{i=1}^{K}{(e^m_i+1)}=\sum_{i=1}^{K}{\alpha^m_i}$ is known as the Dirichlet strength. From Eq.~\ref{eq:sl}, it is easy to infer that the more evidence obtained for the $k^{th}$ class, the higher the assigned belief mass. Correspondingly, the less the total evidence obtained, the higher the overall uncertainty for classification. The belief assignment can be considered as subjective opinion. Given a subjective opinion, the mean of the corresponding Dirichlet distribution $\hat{\mathbf{\mu}}^m$ for the class probability $\hat{\mu}_{k}^m$ is computed as $\hat{\mu}_{k}^m=\frac{\alpha_{k}^m}{S^m}$ \cite{frigyik2010introduction}.

\begin{figure*}[!t]
\centering
\subfigure[Confident prediction]{
\centering
\begin{minipage}[t]{0.23\linewidth}
\centering
\includegraphics[width=0.9\linewidth,height=0.75\linewidth]{./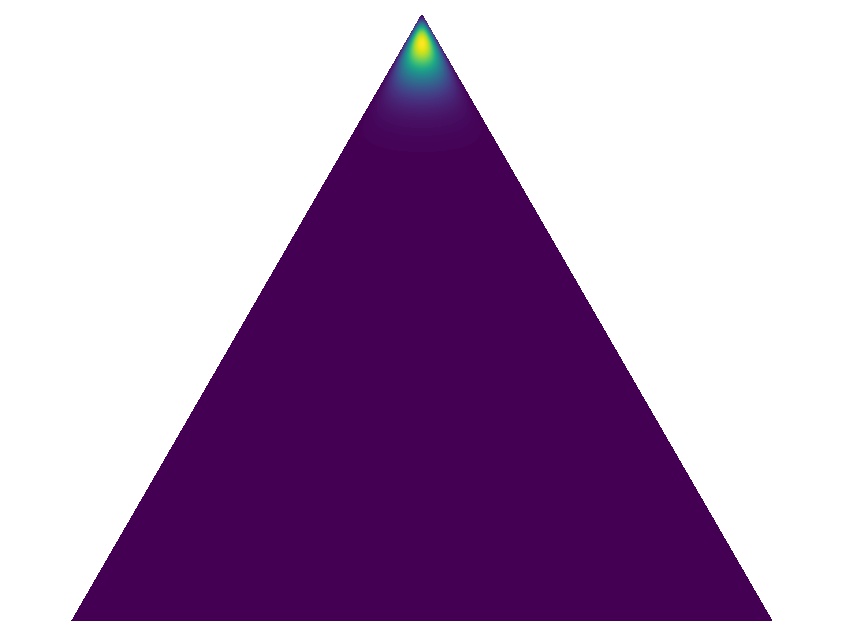}
\centering
\end{minipage}
}
\subfigure[{None evidence}]{
\begin{minipage}[t]{0.23\linewidth}
\centering
\includegraphics[width=0.9\linewidth,height=0.75\linewidth]{./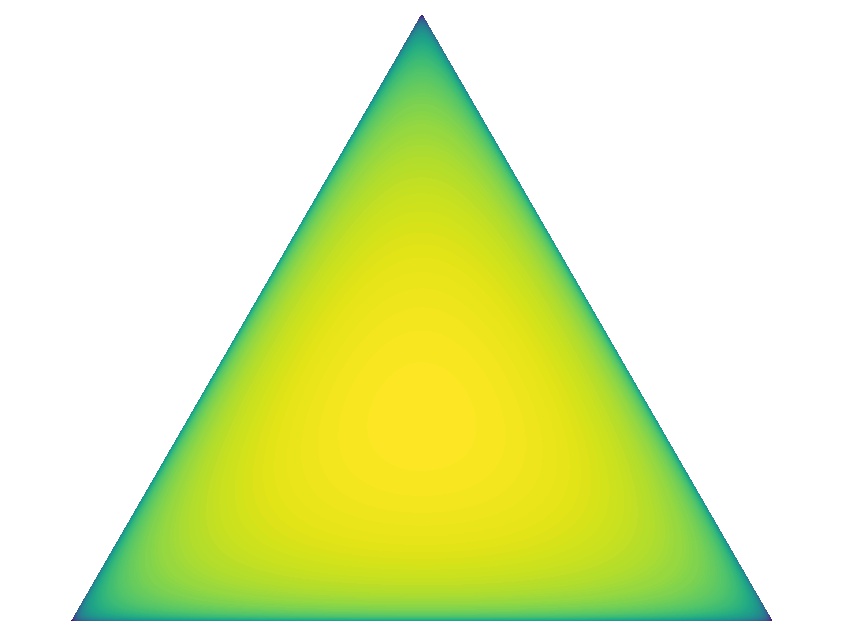}
\end{minipage}}
\centering
\subfigure[{Conflicting evidence}]{
\begin{minipage}[t]{0.23\linewidth}
\centering
\includegraphics[width=0.9\linewidth,height=0.75\linewidth]{./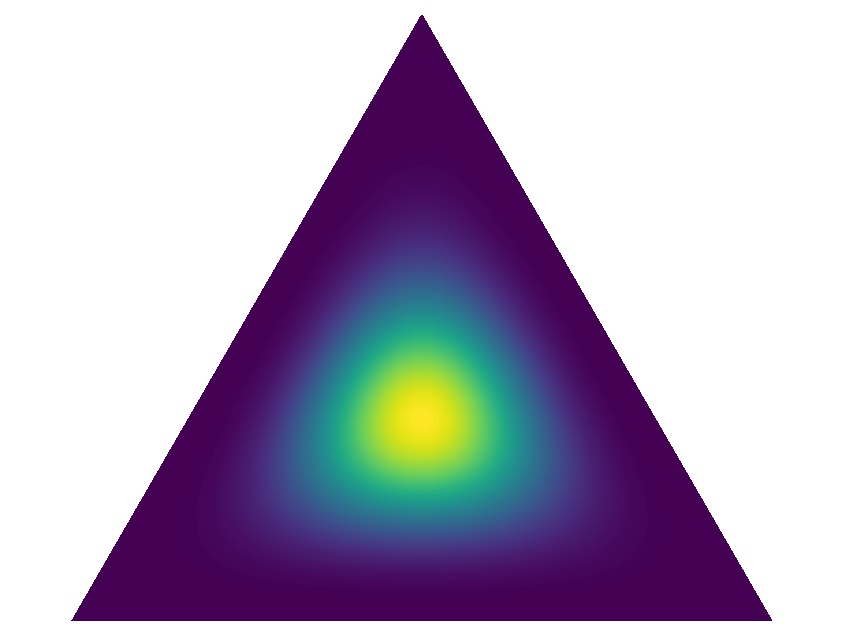}
\end{minipage}}
\subfigure[{Subjective opinion $(\mathbb{M})$}]{
\centering
\begin{minipage}[t]{0.23\linewidth}
\centering
\includegraphics[width=0.9\linewidth,height=0.8\linewidth]{./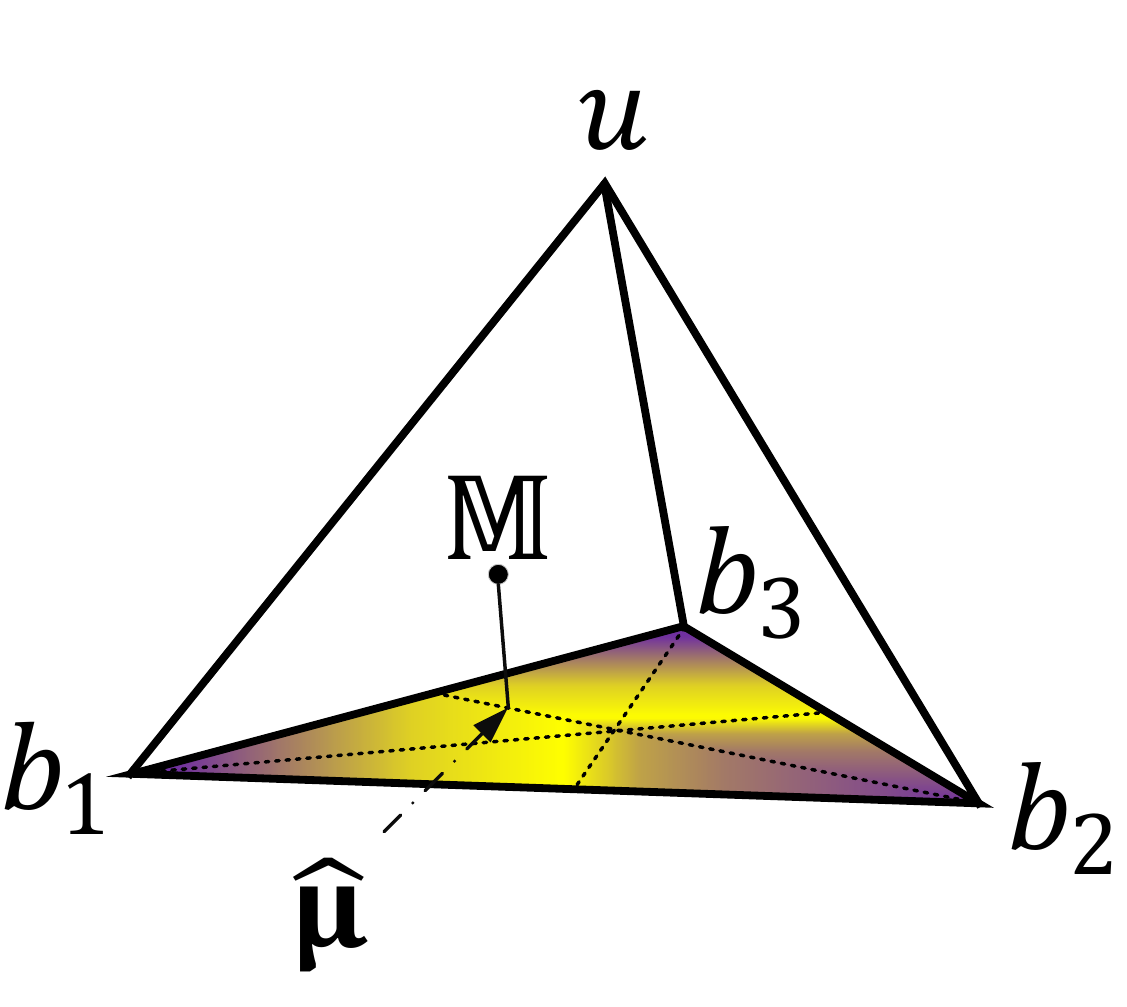}
\centering
\end{minipage}
}
\centering
\caption{Typical examples of Dirichlet distribution \cite{NEURIPS2018_3ea2db50, Malinin2020Ensemble} and subjective opinion. Refer to the text for details.}
\label{fig:dirichlet}
\end{figure*}

\textbf{Differences from Existing Classifiers.} First, in traditional neural network the output can be regarded as one point on a simplex, while the Dirichlet distribution characterizes the probability of each point on a simplex. Therefore, the proposed method models the second-order probability and overall uncertainty of the output with a Dirichlet distribution. Second, in traditional DNN-based classifiers, the softmax function is used in the last layer, which usually leads to over-confidence \cite{sensoy2018evidential,guo2017calibration}. In our model, we introduce variational Dirichlet distribution to avoid the issue by adding overall uncertainty mass. Moreover, existing uncertainty-based algorithms~\cite{gal2016dropout,lakshminarayanan2017simple} generally require additional computation to quantify uncertainty. The uncertainty is estimated during inference for these models~\cite{gal2016dropout,lakshminarayanan2017simple}, preventing them seamlessly promoting accuracy, robustness and uncertainty estimation in a unified framework. These limitations of existing algorithms (\emph{e.g.}, inability to directly obtain uncertainty) also impede their extension for trusted multi-view classification.

To intuitively illustrate above difference, we provide examples for a triple classification task. Assuming $\mathbf{\boldsymbol{\alpha}}=[41, 2, 2]$, we have $\mathbf{e}=[40, 1, 1]$. The corresponding Dirichlet distribution in Fig.~\ref{fig:dirichlet}(a) yields a sharp distribution where the density is centered at the top of a standard 2-simplex. This implies that sufficient evidence is collected for accurate classification. In contrast, $\mathbf{\boldsymbol{\alpha}}=[1.0001, 1.0001, 1.0001]$, provides weak evidence $\mathbf{e}=[0.0001, 0.0001, 0.0001]$ for classification. Then, the uncertainty mass is $u\approx 1$. In this case, the evidence induces a flat distribution over the simplex, as shown in Fig.~\ref{fig:dirichlet}(b). Finally, when $\mathbf{\boldsymbol{\alpha}}=
[6, 6, 6]$, since each class has stronger evidence and thus produces a relatively low uncertainty (Fig.~\ref{fig:dirichlet}(c)). However, the classification is still uncertain. We avoid this by exploiting the Kullback-Leibler (KL) divergence scheme which adds a prior constraint to ensure a more reasonable Dirichlet distribution, elaborated in Sec.~\ref{sec:varitional}.
%there is also a high uncertainty, as shown in Fig.~\ref{fig:dirichlet}(c), even though the overall uncertainty is reduced compared to the second case
As shown in Fig.~\ref{fig:dirichlet}(d), to describe the uncertainty, it is easy to transform a Dirichlet distribution with a standard 3-simplex, i.e., a regular tetrahedron with vertices (1,0,0,0), (0,1,0,0), (0,0,1,0) and (0,0,0,1) in $\mathbf{R}^4$, based on the subjective logic theory (Eq.~\ref{eq:0} and Eq.~\ref{eq:sl}). The point ($\mathbb{M}$) inside the simplex indicates an opinion $\big\{\{b_k\}_{k=1}^{3}, u\big\}$. The expectation of the Dirichlet distribution ($\hat{\boldsymbol{\mu}}$) is the projection of $\mathbb{M}$ on the bottom.

\subsection{DS-Combination Rule for Multi-View Classification}
\label{sec:ds}
After estimating the uncertainty of each view, we now focus on the adaptation to data with multiple views. We note that the Dempster-Shafer theory allows opinions (belief and uncertainty masses) from different sources to be integrated, producing a comprehensive opinion that takes all the available evidence into account. Unfortunately, for $K$-class classification, the number of input and output masses in the original Dempster's combination rule is very large (at most $2^K$), making it overly complex in application. To adapt Dempster's combination rule for our model, we propose the reduced Dempster's combination rule (Definition~\ref{def2}), which has the following advantages. First, the input masses are trimmed according to the $K$-class classification task, and thus the complexity of the fusion process is also reduced and efficient. Second, the reduced Dempster's combination rule allows us to conduct a theoretical analysis (Sec.~\ref{sec:analysis}). Finally, the reduced Dempster's combination rule inherits the essence (fusion based on uncertainty) of DST, enabling it to provide more  trusted classification.
%We note that Dempster–Shafer theory allows evidence from different sources to be integrated arriving at a degree of belief (represented by a mathematical object called belief function) that takes all the available evidence into account (see Definition~\ref{def2}). 
Specifically, we need to combine $M$ independent sets of probability mass assignments $\{\mathbb{M}^m\}_{m=1}^M$, where $\mathbb{M}^m=\big\{\{b_k^m\}_{k=1}^{K}, u^m\big\}$, to obtain a joint mass $\mathbb{M}=\big\{\{b_k\}_{k=1}^{K}, u\big\}$ (step \term{4} in Fig.~\ref{fig:framework1}).

\begin{definition} (\textbf{Reduced Dempster's Combination Rule for $K$-Class Classification})
\label{def2} The combination (termed joint mass) $\mathbb{M}=\big\{\{b_k\}_{k=1}^{K}, u\big\}$ is calculated from the two sets of masses $\mathbb{M}^1= \big\{\{b_k^1\}_{k=1}^{K}, u^1\big\}$ and $\mathbb{M}^2=\big\{\{b_k^2\}_{k=1}^{K}, u^2\big\}$ with the following rule:
\begin{equation}
\mathbb{M}=\mathbb{M}^1\oplus \mathbb{M}^2.
\end{equation}
The more specific calculation rule can be formulated as follows:
\begin{equation}
\begin{aligned}
& b_{k}=\frac{1}{1-C}( b^1_kb^2_k + b^1_ku^2 + b^2_ku^1), u = \frac{1}{1-C}u^1u^2,
\label{eq:fusion1}
\end{aligned}
\end{equation}
where $C = \sum_{i\neq j}{b^1_i b^2_j}$ is a measure of the amount of conflict between the two mass sets (the white blocks in Fig.~\ref{fig:framework2}), and the scale factor $\frac{1}{1-C}$ is used for normalization. 
\end{definition}

The opinion $\mathbb{M}$ is formed by combining the opinions $\mathbb{M}^1$ and $\mathbb{M}^2$. Intuitively, the combined belief mass ($b_k$) and overall uncertainty ($u$) correspond to the brown blocks in Fig.~\ref{fig:framework2}. More specifically, the combination rule ensures: (1) When both views are of high uncertainty (large $u^1$ and $u^2$), the final classification must have low confidence (small $b_k$); (2) When both views are of low uncertainty (small $u^1$ and $u^2$), the final classification may have high confidence (large $b_k$); (3) When only one view is of low uncertainty (only $u^1$ or $u^2$ is large), the final classification only depends on the confident view; (4) When the opinions from the two views are in conflict, both $C$ and $u$ will accordingly increase.

Given data with $M$ views, we can obtain the above-mentioned mass for each. Then, we can combine these beliefs from different views with our combination rule. Specifically, we fuse the belief and uncertainty masses from different views by using the following rule:
\begin{equation}
\begin{aligned}
\mathbb{M}=\mathbb{M}^1\oplus \mathbb{M}^2 \oplus \cdots \mathbb{M}^V.
\label{eq:fusion2}
\end{aligned}
\end{equation}
After obtaining the joint mass $\mathbb{M}=\big\{\{b_k\}_{k=1}^{K}, u\big\}$, we can induce the corresponding joint evidence from multiple views according to Eq.~\ref{eq:sl}, and then the parameters of the Dirichlet distribution are obtained as:
\begin{equation}
\begin{aligned}
S = \frac{K}{u} , e_k =  b_k \times S \text{ and } \alpha_k = e_k+1.
\label{eq:sl2}
\end{aligned}
\end{equation}
With the combination rule, we can obtain the estimated multi-view joint evidence $\mathbf{\boldsymbol{e}}$ and the corresponding parameters of the joint Dirichlet distribution $Dir(\boldsymbol{\mu}\mid \mathbf{\boldsymbol{\alpha}})$, producing the final probability of each class and the overall uncertainty. Compared with softmax, subjective uncertainty is more suitable for the fusion of multiple decisions. Subjective logic provides an additional mass function ($u$) that allows the model to perceive the lack of evidence. In our model, subjective logic provides uncertainty for each view and also for the overall classification, which is important for trusted classification and explainable fusion.

\subsection{Theoretical Analysis of the Combination Rule}
\label{sec:analysis}
To illustrate the advantages of using the reduced combination rule for trusted multi-view classification, we theoretically analyze it by providing four propositions from the perspectives of \textit{classification accuracy} and \textit{uncertainty estimation}.

Specifically, given the original opinion $\mathbb{M}^o=\big\{\{b^o_k\}_{k=1}^{K}, u^o\big\}$, we aim to theoretically analyze whether the integration of another opinion $\mathbb{M}^a=\big\{\{b^a_k\}_{k=1}^{K}, u^a\big\}$ will reduce the performance (both in terms of accuracy and uncertainty estimation) of the classifier. Suppose that after fusing $\mathbb{M}^o$ and $\mathbb{M}^a$, the new opinion is $\mathbb{M}=\big\{\{b_k\}_{k=1}^{K}, u\big\}$. Specifically, we have the following conclusions:
\begin{itemize}
    \item[\textbf{Accuracy}:] (1) According to Prop.~\ref{prop1}, integrating an additional opinion into the original opinion can potentially improve the classification accuracy of the model. (2) According to Prop.~\ref{prop2}, when integrating an additional opinion into the original opinion, the possible performance degradation of the model is limited under mild conditions. %In more extreme cases, the performance of the model will not reduce.
    \item[\textbf{Uncertainty}:] (3) According to Prop.~\ref{prop3}, integrating an additional opinion reduces the integrated uncertainty mass. (4) According to Prop.~\ref{prop4}, when the uncertainties of two views are both large, the integrated uncertainty will also be large.
\end{itemize}
The following propositions provide theoretical analysis to support the above conclusions.

\begin{prop}
%\vspace{-0.3em}
\label{prop1}
Under the conditions $b_t^a\geq b_m^o$, where $t$ is the index of ground-truth class and $b_m^o$ is the largest belief mass in $\{b_k^o\}_{k=1}^{K}$, integrating another opinion $\mathbb{M}^a$ makes the new opinion satisfy $b_t\geq b_t^o$.
\end{prop}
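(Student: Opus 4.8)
The plan is to unfold the reduced combination rule of Definition~\ref{def2}, turn the claimed inequality $b_t \ge b_t^o$ into a polynomial inequality in the mass values, and then exhibit that polynomial as a sum of two manifestly non-negative terms using the hypotheses $b_t^a \ge b_m^o \ge b_t^o$ together with $u^o, u^a \ge 0$.

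\textbf{Step 1 (rewrite the normalizer).} From Eq.~\ref{eq:0}, $\sum_{k=1}^{K} b_k^o = 1 - u^o$ and $\sum_{k=1}^{K} b_k^a = 1 - u^a$, so $\sum_{i,j} b_i^o b_j^a = (1-u^o)(1-u^a)$ and hence $C = \sum_{i\ne j} b_i^o b_j^a = (1-u^o)(1-u^a) - \sum_{k=1}^{K} b_k^o b_k^a$, which yields the closed form
\[
1-C = u^o + u^a - u^o u^a + \sum_{k=1}^{K} b_k^o b_k^a = u^o(1-u^a) + u^a + \sum_{k=1}^{K} b_k^o b_k^a \ge 0 .
\]
As is standard for Dempster's rule I would work in the non-degenerate regime $C<1$, so $1-C>0$.

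\textbf{Step 2 (reduce to a polynomial inequality).} Since $1-C>0$, the target $b_t = \frac{1}{1-C}\bigl(b_t^o b_t^a + b_t^o u^a + b_t^a u^o\bigr) \ge b_t^o$ is equivalent to $b_t^o b_t^a + b_t^o u^a + b_t^a u^o \ge b_t^o(1-C)$. Substituting the closed form of $1-C$, cancelling the common term $b_t^o u^a$, and grouping the terms carrying the factor $u^o$, this is equivalent to
\[
u^o\bigl(b_t^a - b_t^o + b_t^o u^a\bigr) + b_t^o\Bigl(b_t^a - \textstyle\sum_{k=1}^{K} b_k^o b_k^a\Bigr) \ge 0 .
\]
Then in \textbf{Step 3 (use the hypotheses)} I invoke the assumptions: since $b_m^o = \max_k b_k^o$ and $t$ is the ground-truth index, $b_t^o \le b_m^o \le b_t^a$, so $b_t^a - b_t^o \ge 0$ and the first summand is non-negative; for the second, bound $\sum_{k=1}^{K} b_k^o b_k^a \le b_m^o \sum_{k=1}^{K} b_k^a = b_m^o(1-u^a)$, giving $b_t^a - \sum_{k=1}^{K} b_k^o b_k^a \ge b_t^a - b_m^o(1-u^a) \ge b_m^o - b_m^o(1-u^a) = b_m^o u^a \ge 0$, using $b_t^a \ge b_m^o$ once more. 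Both summands are non-negative, hence the inequality of Step 2 holds and $b_t \ge b_t^o$.

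I expect the only genuine difficulty to be the bookkeeping in Steps 1--2: obtaining the clean expression for $1-C$ in terms of $u^o$, $u^a$ and $\sum_k b_k^o b_k^a$, and then spotting the regrouping that isolates the two non-negative pieces; once that is done the conclusion is immediate from $b_t^a \ge b_m^o \ge b_t^o$. As a remark, for the iterated fusion of Eq.~\ref{eq:fusion2} the property propagates by induction, treating the partially fused opinion as the ``original'' one at each step whenever the analogous condition holds for the next view.
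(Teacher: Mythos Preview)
Your proof is correct and uses essentially the same ingredients as the paper's: the paper also bounds $\sum_k b_k^o b_k^a \le b_m^o(1-u^a)$ in the denominator and uses $b_t^a \ge b_m^o \ge b_t^o$ in the numerator, arriving at $b_t \ge \dfrac{b_t^o(b_t^a+u^a+u^o)}{b_m^o+u^a+u^o} \ge b_t^o$. The only cosmetic difference is that you clear the denominator first and exhibit the resulting polynomial as a sum of two non-negative pieces, whereas the paper keeps the fraction and proceeds by a short chain of numerator/denominator bounds; the underlying inequalities are identical.
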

\begin{sproof}
\begin{equation}
\begin{aligned}
\nonumber
b_t & = \frac{b_t^o b_t^a + b_t^o u^a + b_t^a u^o}{\sum_{k=1}^Kb_k^ob_k^a+u^a+u^o-u^ou^a}\\
&\geq \frac{b_t^o( b_t^a + u^a + u^o)}{b_m^o(1-u^ a)+u^a+u^o-u^ou^a}\\
&\geq \frac{b_t^o( b_t^a + u^a + u^o)}{b_m^o+u^a+u^o}\geq b_t^o.
\end{aligned}
\end{equation}
\end{sproof}
%\vspace{-0.5cm}
\begin{prop}
\label{prop2}
When $u^a$ is large, $b_t^o-b_t$ will be limited, and it will have a negative correlation with $u^a$. As a special case, when $u^a$ is large enough (i.e., $u^a = 1$), integrating another opinion will not reduce the performance (i.e., $b_t = b_t^o$).
\end{prop}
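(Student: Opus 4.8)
\emph{Proof strategy.} My plan is to write $b_t^o-b_t$ as a single fraction and to control its numerator by a quantity of order $1-u^a$ while keeping the denominator bounded away from zero. First I would record that the common normalizer $1-C$ in Definition~\ref{def2} satisfies
\begin{equation*}
D \;:=\; 1-C \;=\; \sum_{k=1}^{K} b_k^o b_k^a + u^o + u^a - u^o u^a ,
\end{equation*}
which follows from $\sum_k b_k^o = 1-u^o$ and $\sum_k b_k^a = 1-u^a$ (hence $\sum_{i,j} b_i^o b_j^a = (1-u^o)(1-u^a)$ and $C=(1-u^o)(1-u^a)-\sum_k b_k^o b_k^a$); this is the same denominator already appearing in the proof of Prop.~\ref{prop1}. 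Since $C\in[0,1)$ we have $D>0$, and from the fusion rule $b_t = \tfrac1D\bigl(b_t^o b_t^a + b_t^o u^a + b_t^a u^o\bigr)$, so
\begin{equation*}
b_t^o - b_t \;=\; \frac{b_t^o D - \bigl(b_t^o b_t^a + b_t^o u^a + b_t^a u^o\bigr)}{D} \;=:\; \frac{N}{D}.
\end{equation*}

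Next I would bound $D$ and $N$ separately using only that all belief and uncertainty masses lie in $[0,1]$ and sum to one within each opinion. For the denominator, rewriting $D=\sum_k b_k^o(b_k^a+u^a)+u^o$ and using $b_k^a+u^a\ge u^a$ gives $D\ge u^a(1-u^o)+u^o = u^a+(1-u^a)u^o \ge u^a$. For the numerator, substituting the expanded form of $D$ and cancelling the $b_t^o u^a$ terms yields
\begin{equation*}
N \;=\; b_t^o\sum_{k} b_k^o b_k^a - b_t^o b_t^a + b_t^o u^o(1-u^a) - b_t^a u^o ,
\end{equation*}
and I would bound this termwise: grouping the first two terms as $b_t^o b_t^a(b_t^o-1)+b_t^o\sum_{k\ne t}b_k^o b_k^a$ shows they are $\le \sum_{k\ne t}b_k^a \le 1-u^a$; the third term is $\le 1-u^a$; the last term is nonpositive. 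Hence $N\le 2(1-u^a)$, so
\begin{equation*}
b_t^o - b_t \;\le\; \frac{2(1-u^a)}{u^a} \;=\; \frac{2}{u^a}-2 ,
\end{equation*}
which is nonnegative, strictly decreasing in $u^a$ on $(0,1]$, and equal to $0$ at $u^a=1$; this is exactly the asserted ``limited degradation that is negatively correlated with $u^a$'' (if $b_t^o-b_t$ happens to be negative, the bound holds a fortiori).

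The special case is then immediate: $u^a=1$ forces $b_k^a=0$ for all $k$ by $\sum_k b_k^a = 1-u^a$, hence $C=\sum_{i\ne j}b_i^o b_j^a=0$ and $D=1$, so $b_t = b_t^o b_t^a + b_t^o u^a + b_t^a u^o = b_t^o$. The step I expect to be the main obstacle is the numerator manipulation: one must expand $N$ and choose the groupings so that every surviving term visibly carries a factor $1-u^a$, since the naive estimate $b_t^o-b_t\le b_t^o\le 1$ says nothing about the large-$u^a$ regime; a secondary point is to make ``negative correlation'' precise, which I would do by exhibiting the explicit decreasing-in-$u^a$ upper bound above rather than by an informal argument.
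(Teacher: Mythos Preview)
Your proof is correct. Both you and the paper produce an explicit upper bound on $b_t^o-b_t$ that is decreasing in $u^a$ and vanishes at $u^a=1$, but the route is different. The paper never forms the single fraction $N/D$; instead it bounds $b_t$ from below by enlarging the denominator of the fusion formula (via $\sum_k b_k^o b_k^a \le b_m^a\sum_k b_k^o \le b_m^a \le 1-u^a$) and shrinking its numerator (dropping the $b_t^ob_t^a$ and $b_t^au^o$ terms), arriving at
\[
b_t^o-b_t \;\le\; b_t^o\,\frac{1+u^o}{\frac{1}{1-u^a}+u^o}\,.
\]
Your approach instead isolates $N=b_t^oD-(b_t^ob_t^a+b_t^ou^a+b_t^au^o)$ and bounds $N\le 2(1-u^a)$ and $D\ge u^a$ separately. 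The paper's bound is sharper---since $b_t^o(1+u^o)\le(1-u^o)(1+u^o)\le 1$, it never exceeds $1-u^a$, whereas your bound $2(1-u^a)/u^a$ blows up as $u^a\to 0$---and it retains the dependence on $b_t^o$ and $u^o$. On the other hand, your argument is a bit more transparent in that it makes the $(1-u^a)$ factor appear directly in the numerator, and your treatment of the special case $u^a=1$ (forcing all $b_k^a=0$, hence $C=0$, $D=1$, $b_t=b_t^o$) is cleaner than inferring it as a limit of the bound. Either argument fully supports the proposition as stated.
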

%\vspace{-0.5cm}
\begin{sproof}
\begin{equation}
\begin{aligned}
\nonumber
b_t^o-b_t&=b_t^o - \frac{b_t^o b_t^a + b_t^o u^a + b_t^a u^o}{\sum_{k=1}^Kb_k^ob_k^a+u^a+u^o-u^ou^a}\\
&\leq b_t^o - \frac{b_t^o u^a}{b_m^a+u^a+u^o-u^au^o}\\
&\leq b_t^o - \frac{b_t^o u^a}{1+u^o-u^au^o}=b_t^o\frac{1+u^o}{\frac{1}{1-u^a}+u^o}.
% &=\frac{b_t^o+b_t^ou^o-b_t^ou^au^o-b_t^o u^a}{1+u^o-u^au^o}\\
\end{aligned}
\end{equation}
\end{sproof}
\begin{prop}\label{prop3}
After integrating another opinion $\mathbb{M}^a$ into the original opinion $\mathbb{M}^o$, the obtained uncertainty mass $u$ will be reduced, i.e., $u \leq u^o$.
\end{prop}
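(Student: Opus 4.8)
The plan is to work directly from the closed form of the fused uncertainty given in Definition~\ref{def2}, namely $u=\frac{1}{1-C}\,u^o u^a$ with $C=\sum_{i\neq j} b_i^o b_j^a$, and to reduce the claim $u\le u^o$ to a single inequality on $C$. If $u^o=0$ then $u=0\le u^o$ and there is nothing to prove, and likewise if $u^a=0$ then $u=0\le u^o$; so we may assume $u^o>0$ and $u^a>0$. Dividing through by $u^o$, the inequality $u\le u^o$ becomes equivalent to $\frac{u^a}{1-C}\le 1$, i.e.\ to $C\le 1-u^a$.

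The second step is to rewrite $C$ using the normalization constraint Eq.~\ref{eq:0} for each view. Writing $B^o=\sum_{i=1}^K b_i^o = 1-u^o$ and $B^a=\sum_{j=1}^K b_j^a = 1-u^a$, we have
\begin{align*}
C &= \sum_{i\neq j} b_i^o b_j^a = \Big(\sum_{i=1}^K b_i^o\Big)\Big(\sum_{j=1}^K b_j^a\Big) - \sum_{k=1}^K b_k^o b_k^a \\
&= B^o B^a - \sum_{k=1}^K b_k^o b_k^a .
\end{align*}
Consequently,
\[ (1-u^a) - C = B^a - C = B^a(1-B^o) + \sum_{k=1}^K b_k^o b_k^a = u^o B^a + \sum_{k=1}^K b_k^o b_k^a \ge 0, \]
since $u^o$, $B^a$, and all $b_k^o, b_k^a$ are non-negative. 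This gives exactly $C\le 1-u^a$, hence $1-C\ge u^a>0$, so $\frac{u^a}{1-C}\le 1$ and therefore $u=\frac{u^o u^a}{1-C}\le u^o$. (The hypothesis $C<1$, needed for the normalization factor to be well defined, is built into the combination rule and is in fact implied by $1-C\ge u^a>0$.) For $M>2$ views the statement follows by applying the two-view case repeatedly along the fusion chain Eq.~\ref{eq:fusion2}.

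I do not anticipate a genuine obstacle here: the only real step is spotting the algebraic identity $C=B^oB^a-\sum_k b_k^o b_k^a$ and combining it with $B^o\le 1$ (equivalently $u^o\ge 0$), after which the bound $C\le 1-u^a$ — and hence $u\le u^o$ — falls out immediately. The minor care points are to dispatch the degenerate cases $u^o=0$ or $u^a=0$ separately and to note that well-definedness of $\frac{1}{1-C}$ is not an extra assumption but a consequence of the computation.
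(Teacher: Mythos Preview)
Your proof is correct and follows essentially the same route as the paper. Both arguments hinge on the identity $1-C=\sum_{k=1}^K b_k^o b_k^a + u^a + u^o - u^o u^a$ (equivalently your $C=B^oB^a-\sum_k b_k^o b_k^a$) and then use non-negativity of the belief masses to deduce $1-C\ge u^a$, from which $u=\frac{u^o u^a}{1-C}\le u^o$ is immediate; the paper just presents this as a two-step chain $\frac{u^o u^a}{1-C}\le \frac{u^o u^a}{u^a+u^o-u^o u^a}\le u^o$ without explicitly isolating the degenerate cases.
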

\begin{sproof}
\begin{equation}
\begin{aligned}
\nonumber
u&=\frac{u^o u^a}{1-C}=\frac{u^o u^a}{\sum_{k=1}^Kb_k^ob_k^a+u^a+u^o-u^ou^a}\\
&\leq \frac{u^o u^a}{u^a + u^o -u^o u^a} \leq u^o.
\end{aligned}
\end{equation}
\end{sproof}
\begin{prop}
\label{prop4}
$u$ has a positive correlation with $u^a$ and $u^o$. 
\end{prop}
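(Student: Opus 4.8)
The plan is to write $u$ explicitly as a function of $u^a$ and $u^o$ using the fusion rule, and then show it is monotonically increasing in each argument by examining the sign of the relevant partial derivative (or, equivalently, by an elementary algebraic comparison). From Definition~\ref{def2} and Eq.~\ref{eq:fusion1}, together with the identity $1 - C = \sum_{k=1}^K b_k^o b_k^a + u^a + u^o - u^o u^a$ already used in the proofs of Prop.~\ref{prop1}--\ref{prop3}, we have
\begin{equation}
\nonumber
u = \frac{u^o u^a}{\sum_{k=1}^K b_k^o b_k^a + u^a + u^o - u^o u^a}.
\end{equation}
Write $P := \sum_{k=1}^K b_k^o b_k^a \ge 0$, which does not depend on $u^a$ or $u^o$ directly in the sense that varying one of the uncertainties (with the belief masses regarded as fixed, or at least with $P$ treated as a nonnegative constant) leaves $P$ unchanged. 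Then $u = \dfrac{u^o u^a}{P + u^a + u^o - u^o u^a}$.

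First I would treat $u$ as a function of $u^a$ with $u^o$ and $P$ held fixed, and compute
\begin{equation}
\nonumber
\frac{\partial u}{\partial u^a} = \frac{u^o (P + u^a + u^o - u^o u^a) - u^o u^a (1 - u^o)}{(P + u^a + u^o - u^o u^a)^2} = \frac{u^o (P + u^o)}{(1-C)^2} \ge 0,
\end{equation}
where the numerator simplifies because the $u^a$-dependent terms cancel, leaving $u^o(P + u^o)$, which is nonnegative since $u^o \ge 0$ and $P \ge 0$. By the symmetry of the expression under swapping $(u^o,\{b_k^o\}) \leftrightarrow (u^a,\{b_k^a\})$, the same computation gives $\partial u / \partial u^o = u^a(P + u^a)/(1-C)^2 \ge 0$. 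Hence $u$ is nondecreasing in both $u^a$ and $u^o$, which is the claimed positive correlation. Alternatively, to avoid differentiation one can verify directly that for $u_1^a \le u_2^a$ the cross-multiplied inequality $u^o u_1^a (P + u_2^a + u^o - u^o u_2^a) \le u^o u_2^a (P + u_1^a + u^o - u^o u_1^a)$ reduces to $u^o u_1^a (P + u^o) \le u^o u_2^a (P + u^o)$, which is immediate.

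The main obstacle here is not the calculus but a modeling subtlety: the belief masses $\{b_k^o\}$ and $\{b_k^a\}$ are not truly independent of the uncertainties, since within a single opinion $u + \sum_k b_k = 1$. The clean statement therefore should be read as: with the belief-mass profile of each view fixed (equivalently, with $P$ treated as a fixed nonnegative parameter, or after noting that $P \ge 0$ always), $u$ increases with each $u^m$. I would state this assumption explicitly and then the sign analysis above closes the argument; the nonnegativity of $P$ is the only fact needed and it follows from $b_k^m \ge 0$ in Eq.~\ref{eq:0}. I would also remark that combined with Prop.~\ref{prop3} this gives the intuition stated after Definition~\ref{def2}: the fused uncertainty is squeezed below $\min(u^o, u^a)$ yet still grows when both inputs are uncertain (and, via the conflict term $C$ hidden in $1-C$, when the views disagree).
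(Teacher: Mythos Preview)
Your proof is correct, but it takes a different route from the paper's. The paper does not differentiate; instead it rewrites $u$ by dividing numerator and denominator of
\[
u=\frac{u^a u^o}{\sum_{k=1}^K\bigl(b^a_kb^o_k + b^a_ku^o + b^o_ku^a\bigr)+u^a u^o}
\]
by $u^a u^o$, obtaining
\[
u=\frac{1}{\sum_{k=1}^K\bigl(\tfrac{b^a_kb^o_k}{u^a u^o} + \tfrac{b^a_k}{u^a} + \tfrac{b^o_k}{u^o}\bigr)+1},
\]
and then simply observes that every summand in the denominator is (with the belief masses held fixed) a nonincreasing function of $u^a$ and of $u^o$, so $u$ is nondecreasing in each. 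Note that the paper uses the equivalent form $1-C=\sum_k b^a_k b^o_k + u^o\sum_k b^a_k + u^a\sum_k b^o_k + u^a u^o$ rather than your $P+u^a+u^o-u^a u^o$; the two are related by substituting $\sum_k b^m_k = 1-u^m$.

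Both arguments rest on exactly the same modeling assumption that you (rightly) flag and the paper leaves implicit: the $b_k^m$ are treated as fixed while $u^m$ varies, even though within a single opinion they are tied by Eq.~\ref{eq:0}. What the paper's rewriting buys is a one-line proof with no calculus, since the monotonicity is read off term by term; what your derivative computation buys is an explicit rate $\partial u/\partial u^a = u^o(P+u^o)/(1-C)^2$, which is slightly more informative and makes the symmetry in $u^a,u^o$ just as visible. Your alternative cross-multiplication argument is essentially the discrete version of the same computation. Either way the result follows.
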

\begin{proof}
\begin{equation}
\begin{aligned}
\nonumber
u & = \frac{u^a u^o}{\sum_{k=1}^K\{b^a_kb^o_k + b^a_ku^o + b^o_ku^a\}+u^a u^o}\\
& = \frac{1}{\sum_{k=1}^K\{\frac{b^a_kb^o_k}{u^a u^o} + \frac{b^a_k}{u^a} + \frac{b^o_k}{u^o}\}+1}.
\end{aligned}
\end{equation}
\end{proof}

Note that, although the above analysis is based on the integration of two opinions, it can easily be generalized to multiple opinions.

\begin{figure}[!t]
\centering
\includegraphics[width=0.95\linewidth,height=0.345\linewidth]{./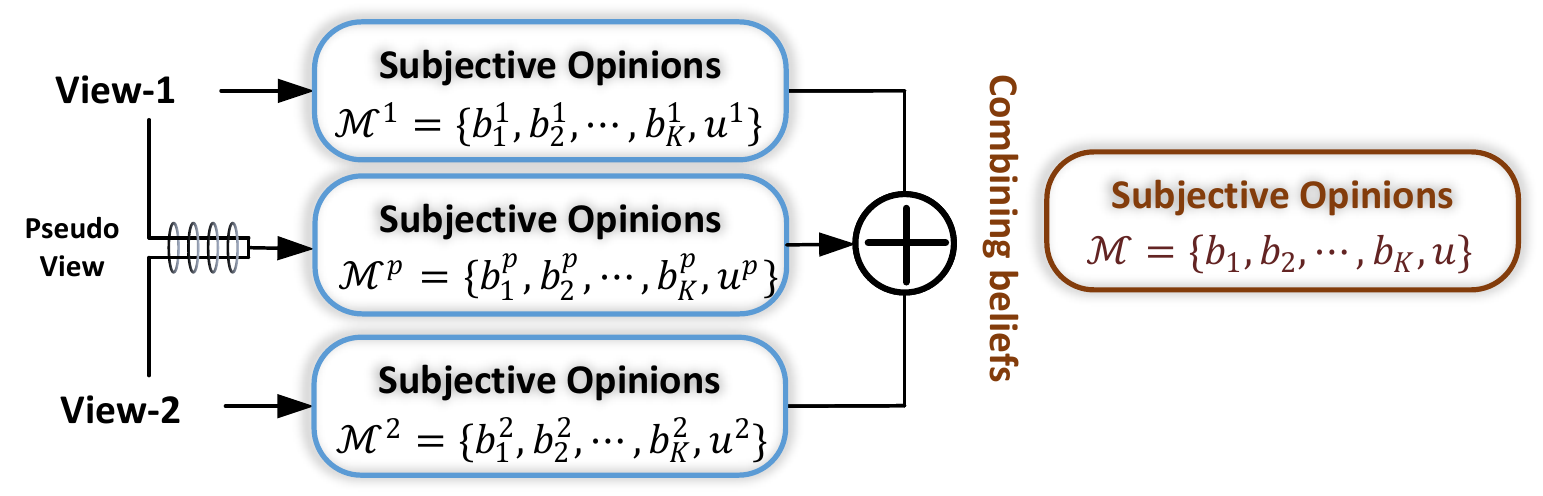}
{\caption{Overview of enhanced  trusted multi-view classification, where the subjective opinions of each view can be obtained using steps \term{1} and \term{2} in Fig.~\ref{fig:framework}.}}
\label{fig:framework_extension}
\end{figure}

\subsection{Pseudo-View Enhanced TMC}
\label{sec:pseudo}
Although the proposed TMC can provide trusted classification, it lacks full interaction between different views due to the late-fusion strategy. Specifically, unlike in the traditional multi-view learning algorithms, the interaction between different views only exists in the decision-making layer in TMC, which may limit its performance in some scenarios, e.g., for scenes requiring complementary information from different views at the feature or representation level. To address this, we introduce a pseudo-view to promote interaction between different views, while maintaining the trustworthiness of the original model. As discussed in Sec.~\ref{sec:analysis}, integrating a new opinion into the original one can potentially improve the performance of the model under reasonable conditions and avoid negative effects. More specifically, we add the pseudo-view (view-$p$), which is generated in a flexible way based on the original views. The framework of the model is shown in Fig~\ref{fig:framework_extension}.
%\textcolor{black}{Compared with the traditional multi-view learning algorithm, TMC lacks the information interaction between different views because of obtaining the uncertainty and classification results of different views through $V$ independent networks. The interaction between different views only exists in the decision-making layer of the model. To enhance the interactivity between different views while maintaining the trustworthiness of the original model, pseudo-views can be employed. According to Sec.~\ref{sec:analysis}, integrating new masses into the original masses can improve the performance of the model under reasonable conditions while ensuring that it will not negatively affect the original results.}

A direct way to ensure that the generated pseudo-view contains complementary information from multiple views is to simply concatenate the original features. In addition, we can flexibly adopt appropriate multi-view representation learning algorithms according to specific applications. Assume that the belief and uncertainty masses corresponding to the pseudo-view are $\mathbb{M}^p = \{\{b_k^p\}_{k=1}^{K},u^p\}$. Then, comprehensive results can be derived as $\mathbb{M}=\mathbb{M}^o\oplus\mathbb{M}^p$, where $\mathbb{M}^o = \mathbb{M}^1\oplus \mathbb{M}^2 \oplus \cdots \mathbb{M}^M$.

Unlike traditional early-fusion-based multi-view classification methods, our TMC can dynamically evaluate the quality of different views and make trusted decisions based on the uncertainty. Furthermore, in contrast to TMC, the proposed enhanced  trusted multi-view classification (ETMC) can simultaneously explore interactions between different views at a representation level by adding a pseudo-view to improve model performance, with theoretical guarantees. % However, introducing complex view interaction will of course increase the complexity of the model. %(2) In some cases (e.g., some modalities are corrupted), the pseudo-modality may slightly reduce the performance.

\subsection{Training to Form Multi-View Opinions}
\label{sec:3.5}
We now focus on jointly training neural networks to obtain Dirichlet distribution for each view, which can then be used to infer the corresponding masses $\{\mathbb{M}^v\}_{v=1}^V$ and $\mathbb{M}$.
% The neural networks collect evidence from each modality to reach a classification opinion. To this end, the conventional neural-network-based classifier is naturally transformed into the evidence-based classifier. Specifically, the softmax operation of a conventional neural-network-based classifier is replaced with an activation function (\emph{e.g.}, RELU, Softplus) to ensure that the network outputs non-negative evidence $\boldsymbol{e}$. Accordingly, the parameters of the Dirichlet distribution can be obtained.
In conventional neural-network-based classifiers, the cross-entropy loss is widely used:
\begin{equation}
    \mathcal{L}_{c e} = -\sum_{j=1}^{K} y_{i j}\log (p_{i j}),
    \label{eq:crossentropy}
\end{equation}
where $p_{i j}$ is the predicted probability for the $i$th sample belonging to class $j$. 

In our model, to obtain the Dirichlet distribution, varitional loss function for single view is proposed in Sec. \ref{sec:varitional}. The loss function for view $m$ can be written as:
\textcolor{black}{
\begin{equation}
\begin{aligned}
\mathcal{L}^m&(\mathbf{x}^m, \mathbf{y}) =  \mathbb{E}_{q_{\theta}(\boldsymbol{\mu}^m\mid \mathbf{x}^m)} \left[\log p(\mathbf{y}\mid \boldsymbol{\mu}^m)\right] 
\\
& - \lambda_{t} K L\left[D\left(\boldsymbol{\mu}^m \mid \tilde{\boldsymbol{\alpha}}^m\right) \| D\left(\boldsymbol{\mu}^m |[1, \cdots, 1]\right)\right].
\label{eq:la}
\end{aligned}
\end{equation}
}
Considering $N$ i.i.d. multi-view observations and the corresponding labels $\{\{\mathbf{x}^m_n\}_{m=1}^M, \mathbf{y}_n\}_{n=1}^N$, we employ a multi-task strategy to enable all views to jointly form reasonable subjective opinions and accordingly improve the fused opinion:
\textcolor{black}{
\begin{equation}
\begin{aligned}
\label{eq:loss}
& \mathcal{L}^{overall} = \sum_{i=1}^{N} \mathcal{L}^{fused}(\{\mathbf{x}^m_n\}_{m=1}^M, \mathbf{y}_n)+ \sum_{i=1}^{N}\sum_{m=1}^{M}\mathcal{L}^m(\mathbf{x}_n^m, \mathbf{y}_n),  \\
& \text{with } \mathcal{L}^{fused}(\{\mathbf{x}^m_n\}_{m=1}^M, \mathbf{y}_n) = \mathbb{E}_{\boldsymbol{\mu}\sim Dir(\boldsymbol{\mu}\mid \boldsymbol{\alpha})} \left[\log p(\mathbf{y}\mid \boldsymbol{\mu})\right] \\ 
&\quad \quad - \lambda_t D_{KL}\left[Dir(\boldsymbol{\mu} \mid \tilde{\boldsymbol{\alpha}}) \|Dir(\boldsymbol{\mu} \mid [1, \cdots, 1]) \right],
\end{aligned}
\end{equation}
}
where $Dir(\boldsymbol{\mu\mid \boldsymbol{\alpha}})$ is obtained with subjective logic and DS-combination rule which has been elaborated in Sec.~\ref{sec:evidence} and Sec.~\ref{sec:ds} respectively.

After introducing the pseudo-view proposed in Sec.~\ref{sec:pseudo}, the objective function is:
\textcolor{black}{
\begin{equation}
\begin{aligned}
\label{eq:loss_with_p}
& \mathcal{L}^{overall} = \sum_{i=1}^{N} \mathcal{L}^{fused}(\{\mathbf{x}^m_n\}_{m=1}^M, \mathbf{y}_n) \\&+ \sum_{i=1}^{N} \mathcal{L}^{pseudo}(\{\mathbf{x}^m_n\}_{m=1}^M, \mathbf{y}_n) +\sum_{i=1}^{N}\sum_{m=1}^{M}\mathcal{L}^m(\mathbf{x}_n^m, \mathbf{y}_n), \\&
\text{with  } \mathcal{L}^{pseudo}(\{\mathbf{x}^m_n\}_{m=1}^M, \mathbf{y}_n)= \mathbb{E}_{\boldsymbol{\mu}^p\sim Dir(\boldsymbol{\mu}^p\mid \boldsymbol{\alpha}^p)} \left[\log p(\mathbf{y}\mid \boldsymbol{\mu}^p)\right] \\ 
&\quad \quad - \lambda_t D_{KL}\left[Dir(\boldsymbol{\mu}^p \mid \tilde{\boldsymbol{\alpha}^p}) \|Dir(\boldsymbol{\mu}^p \mid [1, \cdots, 1]) \right],
\end{aligned}
\end{equation}
}
where $Dir(\boldsymbol{\mu}^p\mid \boldsymbol{\alpha}^p)$ is obtained with pseudo-view classifier.

The optimization process for the proposed model is summarized in Algorithm~\ref{alg:alg1}.

\begin{algorithm}[!h]
\SetAlgoLined
\caption{Algorithm for Trusted Multi-View Classification}
\textbf{/*Training*/}\\
\KwIn{Multi-view dataset: $\mathcal{D} = \{\{\mathbf{X}_n^m\}_{m=1}^{M}, y_n\}_{n=1}^N$.}
\textbf{Initialize:} {Initialize the parameters of the neural network}.\\
\While{not converged}{
\For{$m=1:M$ }
{
$Dir(\boldsymbol{\mu}^m\mid \mathbf{x}^m)\leftarrow$ variational network output;\\
Subjective opinion $\mathbf{M}^m\leftarrow Dir(\boldsymbol{\mu}^m\mid \mathbf{x}^m)$ ;\\
}
Obtain joint opinion $\mathbb{M}$ with Eq.~\ref{eq:fusion2};\\
Obtain $Dir(\boldsymbol{\mu} \mid \boldsymbol{\alpha})$ with Eq.~\ref{eq:sl2};\\
%Obtain the overall loss with Eq.~\ref{eq:loss}, $\boldsymbol{\alpha}$ and $\{\mathbf{\boldsymbol{\alpha}}^v\}_{v=1}^{V}$;\\
Obtain the overall loss by updating $\boldsymbol{\alpha}$ and $\{\mathbf{\boldsymbol{\alpha}}^v\}_{v=1}^{V}$ in Eq.~\ref{eq:loss};\\
Maximize Eq.~\ref{eq:loss} and update the networks with gradient descent;
}
\KwOut{networks parameters.}%,\\
\textbf{/*Test*/}\\
Calculate the joint belief and the uncertainty masses.
\label{alg:alg1}
\end{algorithm}

%-------------------------------------------------
\section{Experiments}
In this section, we conduct experiments on diverse scenarios to comprehensively evaluate our algorithm. Specifically, we conduct experiments on six real-world multi-feature datasets, and then adopt our method for end-to-end evaluation on RGB-D scene recognition and image-text classification tasks.
\subsection{Experiments on Vector-Type Data}
\begin{table*}[!htbp]
\setlength{\arraycolsep}{0.3pt}
  \centering
  \small 
  \caption{Evaluation of the classification performance. The best two results are in bold and marked with a superscript.}
    \begin{tabular}{cccccccc}
    \toprule
    Data  & Metric & MCDO  & DE    & UA    & EDL   & TMC & ETMC \\
    \midrule
    \multirow{2}[1]{*}{Handwritten} & ACC   & 97.37$\pm$0.80 & 98.30$\pm$0.31 & 97.45$\pm$0.84 & 97.67$\pm$0.32 & \textbf{98.51$\pm$0.15}$^2$ &  \textbf{98.75$\pm$0.00}$^1$ \\
          & AUROC & 99.70$\pm$0.07 & 99.79$\pm$0.05 & 99.67$\pm$0.10 & 99.83$\pm$0.02 & \textbf{99.97$\pm$0.00}$^1$ & \textbf{99.95$\pm$0.00}$^2$ \\
              \hline
    \multirow{2}[0]{*}{CUB} & ACC   & 89.78$\pm$0.52 & 90.19$\pm$0.51 & 89.75$\pm$1.43 & 89.50$\pm$1.17 & \textbf{91.00$\pm$0.42}$^2$ & \textbf{91.04$\pm$0.69}$^1$\\
          & AUROC & \textbf{99.29$\pm$0.03}$^1$ & 98.77$\pm$0.03 & 98.69$\pm$0.39 & 98.71$\pm$0.03 & 99.06$\pm$0.03 & \textbf{99.13$\pm$0.03}$^2$\\
              \hline
    \multirow{2}[0]{*}{PIE} & ACC  & 84.09$\pm$1.45 & 70.29$\pm$3.17 & 83.70$\pm$2.70 & 84.36$\pm$0.87 & \textbf{91.99$\pm$1.01}$^2$ & \textbf{93.75$\pm$1.08}$^1$\\
          & AUROC & 98.90$\pm$0.31 & 95.71$\pm$0.88 & 98.06$\pm$0.56 & 98.74$\pm$0.17 & \textbf{99.69$\pm$0.05}$^2$ &\textbf{99.77$\pm$0.05}$^1$\\
              \hline
    \multirow{2}[0]{*}{Caltech101} & ACC   & 91.73$\pm$0.58 & 91.60$\pm$0.82 & 92.37$\pm$0.72 & 90.84$\pm$0.56 & \textbf{92.93$\pm$0.20}$^1$ & \textbf{92.83$\pm$0.33}$^2$\\
          & AUROC & \textbf{99.91$\pm$0.01}$^2$ & \textbf{99.94$\pm$0.01}$^1$ & 99.85$\pm$0.05 & 99.74$\pm$0.03 & 99.90$\pm$0.01 & 99.89$\pm$0.02\\
          \hline
    \multirow{2}[1]{*}{Scene15} & ACC   & 52.96$\pm$1.17 & 39.12$\pm$0.80 & 41.20$\pm$1.34 & 46.41$\pm$0.55 & \textbf{67.74$\pm$0.36}$^2$ & \textbf{71.61$\pm$0.28}$^1$\\
          & AUROC & 92.90$\pm$0.31 & 74.64$\pm$0.47 & 85.26$\pm$0.32 & 91.41$\pm$0.05 & \textbf{95.94$\pm$0.02}$^2$ & \textbf{96.17$\pm$0.02}$^1$\\
         \hline
    \multirow{2}[0]{*}{HMDB} & ACC   & 52.92$\pm$1.28 & 57.93$\pm$1.02 & 53.32$\pm$1.39 & 59.88$\pm$1.19 & \textbf{65.26$\pm$0.76}$^2$ & \textbf{69.43$\pm$0.67}$^1$\\
          & AUROC & 93.57$\pm$0.28 & 94.01$\pm$0.21 & 91.68$\pm$0.69 & 94.00$\pm$0.25 & \textbf{96.18$\pm$0.10}$^1$ & \textbf{95.58$\pm$0.12}$^2$\\
    \bottomrule
    \end{tabular}%
  \label{tab:addlabel}%
\end{table*}%
\begin{figure*}[!htbp]
\centering
\subfigure[Handwritten]{
\centering
\begin{minipage}[t]{0.3\linewidth}
\centering
\includegraphics[width=1\linewidth,height=0.7\linewidth]{./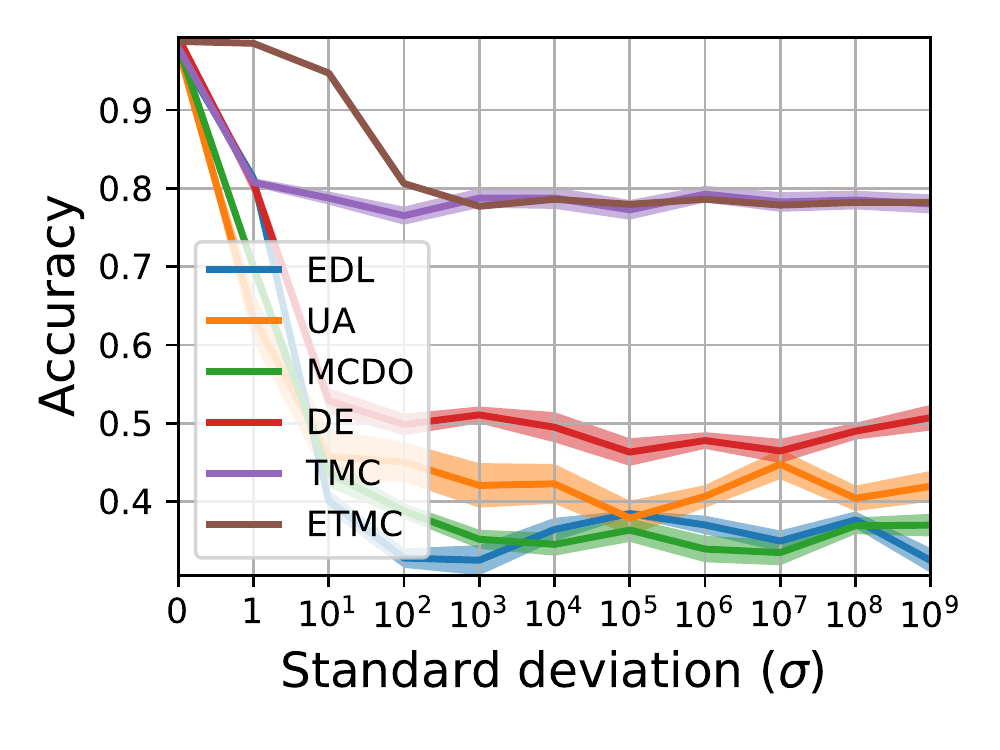}
\centering
\end{minipage}}
\subfigure[CUB]{
\begin{minipage}[t]{0.3\linewidth}
\centering
\includegraphics[width=1\linewidth,height=0.7\linewidth]{./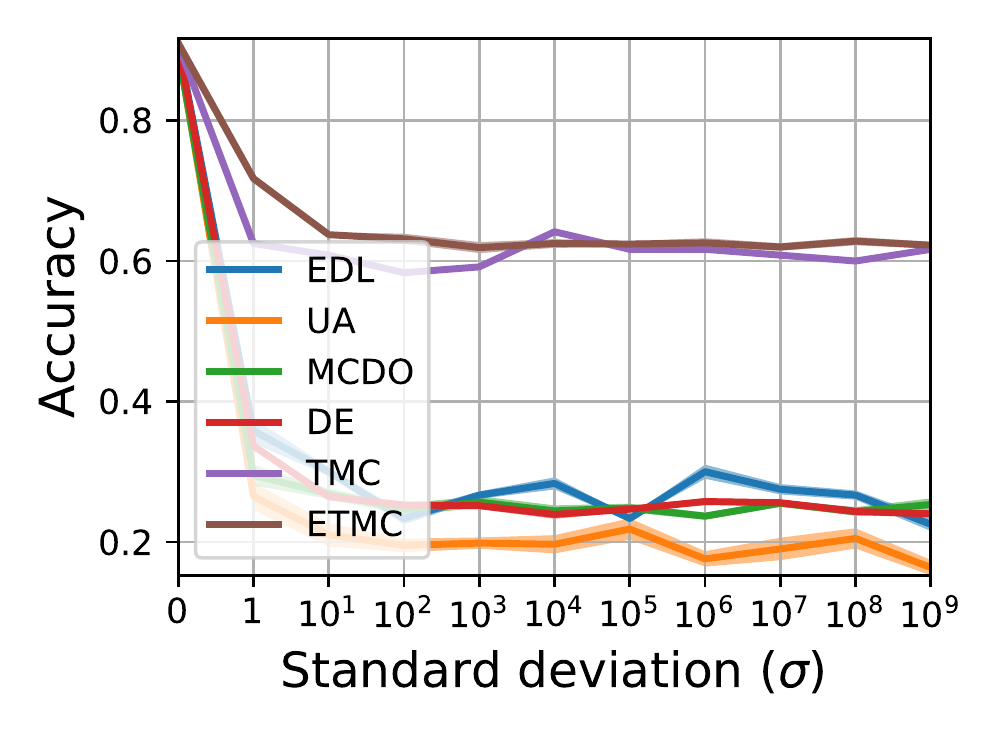}
\end{minipage}}
\centering
\subfigure[PIE]{
\begin{minipage}[t]{0.3\linewidth}
\centering
\includegraphics[width=1\linewidth,height=0.7\linewidth]{./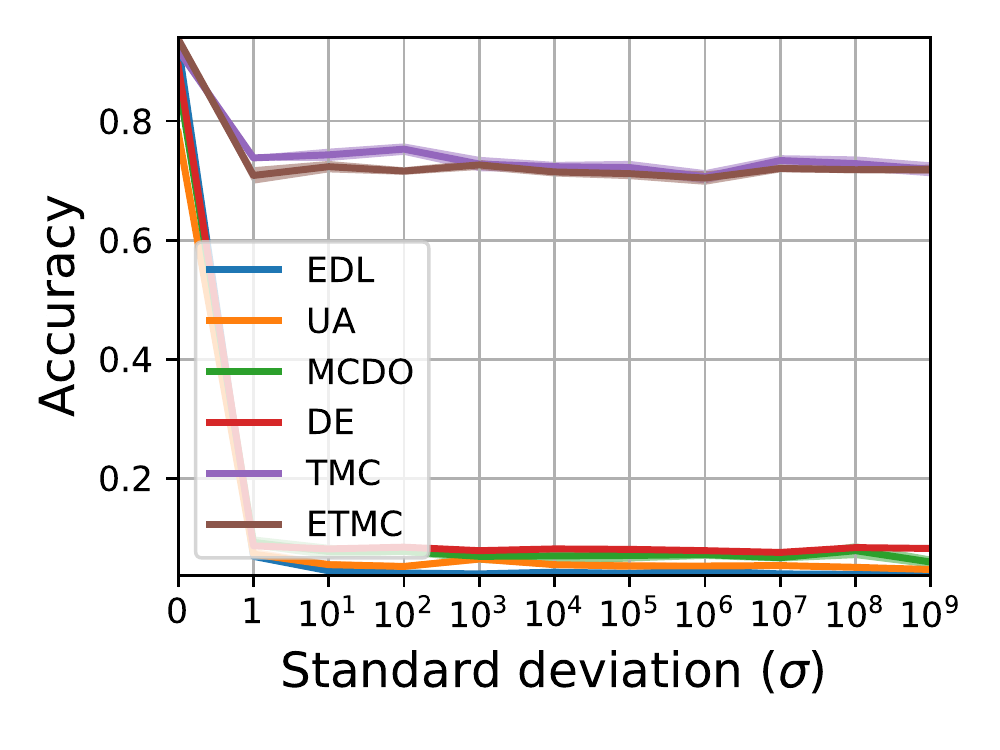}
\end{minipage}}
\subfigure[Caltech101]{
\begin{minipage}[t]{0.3\linewidth}
\centering
\includegraphics[width=1\linewidth,height=0.7\linewidth]{./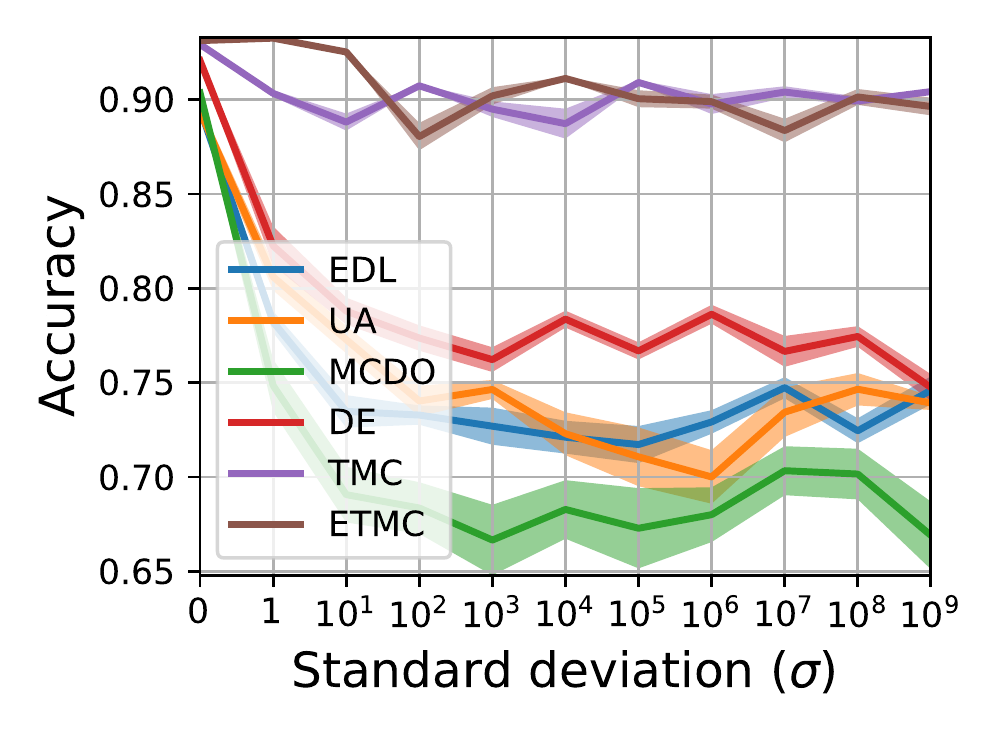}
\end{minipage}}
\centering
\centering
\subfigure[Scene15]{
\begin{minipage}[t]{0.3\linewidth}
\centering
\includegraphics[width=1\linewidth,height=0.7\linewidth]{./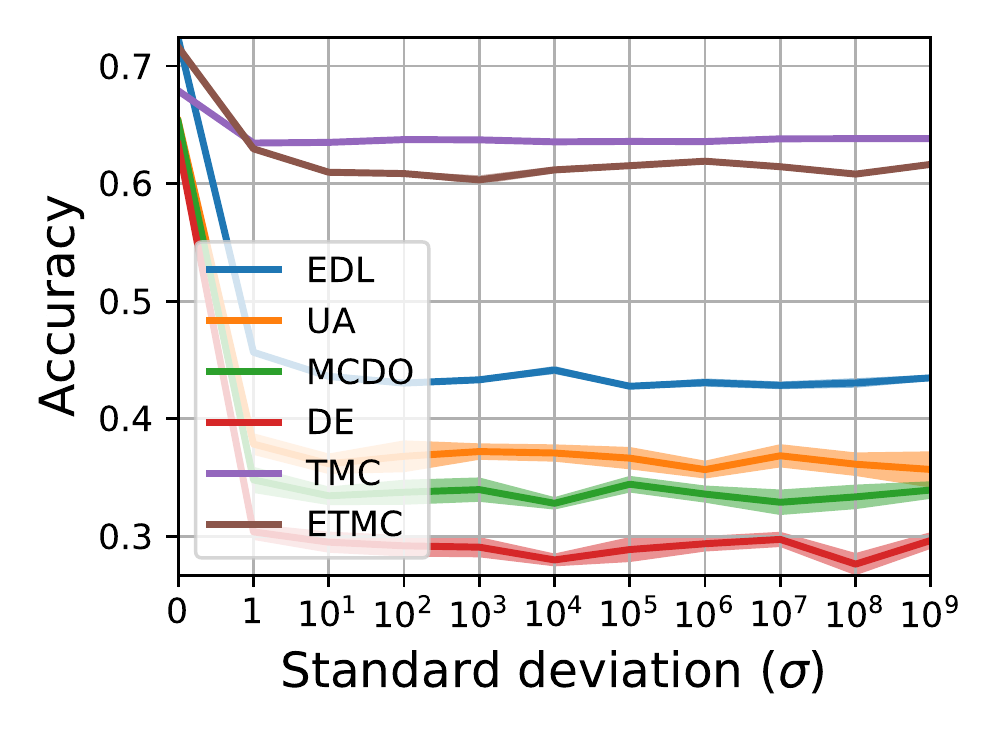}
\end{minipage}}
\subfigure[HMDB]{
\begin{minipage}[t]{0.3\linewidth}
\centering
\includegraphics[width=1\linewidth,height=0.7\linewidth]{./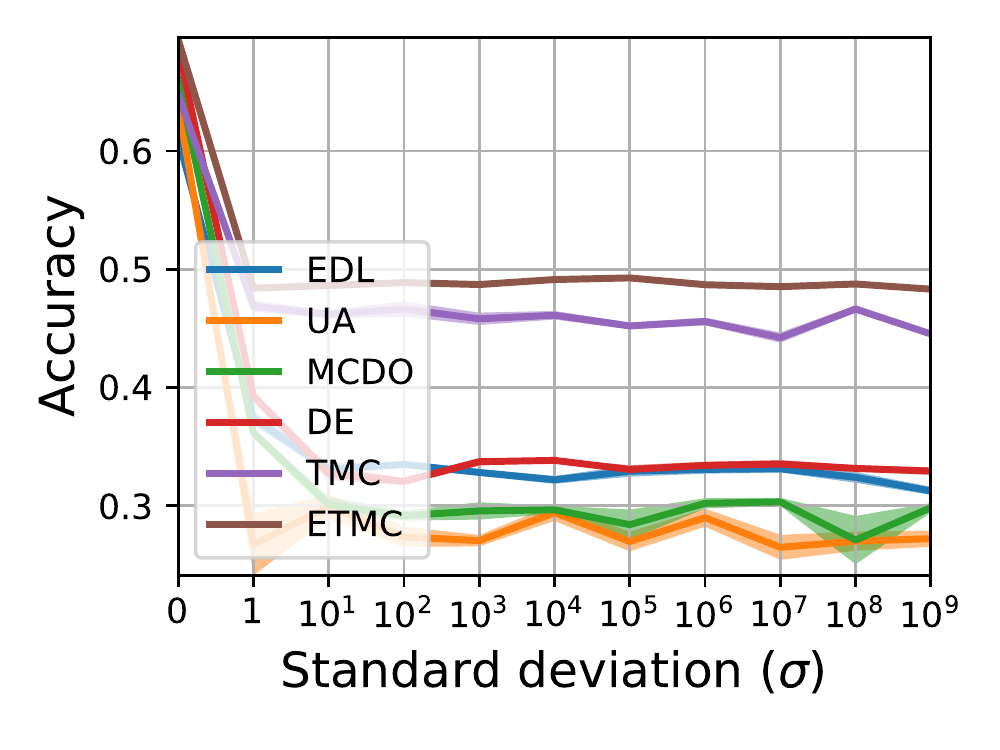}
\end{minipage}}
\centering
\caption{Performance comparison on multi-view data with different levels of noise.}
\label{fig:linechart}
\end{figure*}
First, we conduct experiments on six real-world datasets associated with multiple types of feature vectors, which can be conveniently used to validate our multi-view classification algorithms due to the simplified networks. These datasets are as follows: $\diamond$ \textbf{Handwritten}\footnote{https://archive.ics.uci.edu/ml/datasets/Multiple+Features} consists of 2,000 samples of 10 classes from digits `0' to `9', with 200 samples per class, where six types of descriptors are used. $\diamond$ \textbf{CUB} \cite{wah2011caltech} consists of 11,788 images associated with text descriptions for 200 different categories of birds. We use the first 10 categories, employing GoogleNet and doc2vec to extract image features and corresponding text features, respectively. $\diamond$ \textbf{Caltech101} \cite{fei2004learning} consists of 8,677 images from 101 classes. We extract two types of deep features with DECAF and VGG19, respectively. $\diamond$ \textbf{PIE}\footnote{http://www.cs.cmu.edu/afs/cs/project/PIE/MultiPie/Multi-Pie/Home.html} consists of 680 facial images of 68 subjects. Three types of features including intensity, LBP and Gabor, are extracted. $\diamond$ \textbf{Scene15}\cite{fei2005bayesian} consists of 4,485 images from 15 indoor and outdoor scene categories. Three types features (GIST, PHOG and LBP) are extracted. $\diamond$ \textbf{HMDB}\cite{kuehne2011hmdb} is one of the largest human action recognition datasets. It contains 6,718 samples from 51 categories of actions, where HOG and MBH features are extracted.
%\textcolor{red}{\textbf{Experiments setup.} We first compare our algorithm with single-view classifiers to validate the effectiveness of our algorithm in utilizing multiple views. Then, we apply existing classifiers to multi-view features and conduct experiments under different levels of noise to investigate their ability in identifying multi-view OOD samples. Finally, We compared our method with traditional multi-view algorithms.}

We compare our method with the following uncertainty based models: (a) MCDO (Monte Carlo dropout) \cite{gal2016dropout} casts dropout network training as approximate inference in a Bayesian neural network; (b) DE (deep ensemble) \cite{lakshminarayanan2017simple} is a simple, non-Bayesian method, which involves training multiple deep models; (c) UA (uncertainty-aware attention) \cite{heo2018uncertainty} generates attention weights following a Gaussian distribution with a learned mean and variance, which allows  heteroscedastic uncertainty to be captured and yields a more accurate calibration of prediction uncertainty; (d) EDL (evidential deep learning) \cite{sensoy2018evidential} designs a predictive distribution for classification by placing a Dirichlet distribution on the class probabilities.

For the proposed TMC and ETMC, we employ fully connected networks for all datasets, and $l_2$-norm regularization is used with a trade-off parameter of 0.0001. For ETMC, we concatenate the original feature vectors to construct the pseudo-view. The Adam \cite{kingma2014adam} optimizer is used to train the network, and five-fold cross-validation is employed to tune the hyperparameters. For all datasets, $20\%$ of samples are used as the test set. We run all methods 30 times and report the mean values and standard deviations.

\begin{figure*}[!t]
\centering
\subfigure[Handwritten]{
\centering
\begin{minipage}[t]{0.3\linewidth}
\centering
\includegraphics[width=0.9\linewidth,height=0.744\linewidth]{./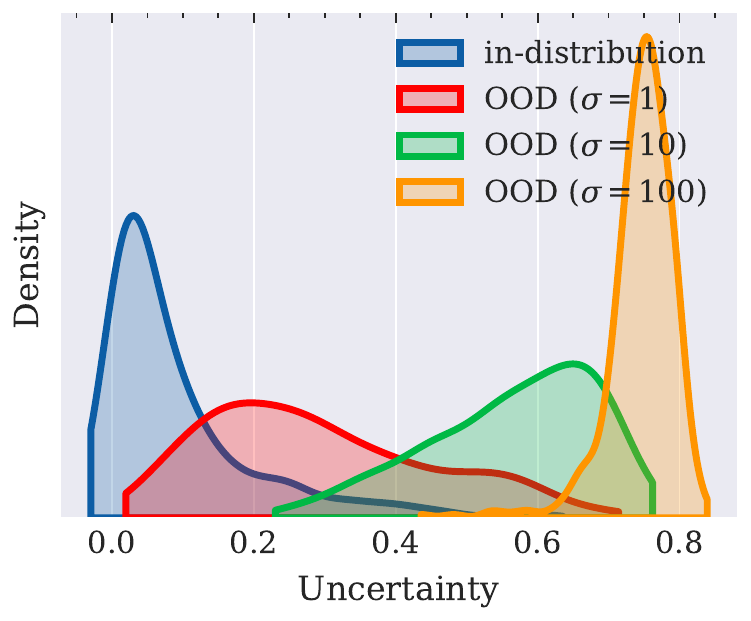}
\centering
\end{minipage}}
\subfigure[CUB]{
\begin{minipage}[t]{0.3\linewidth}
\centering
\includegraphics[width=0.9\linewidth,height=0.744\linewidth]{./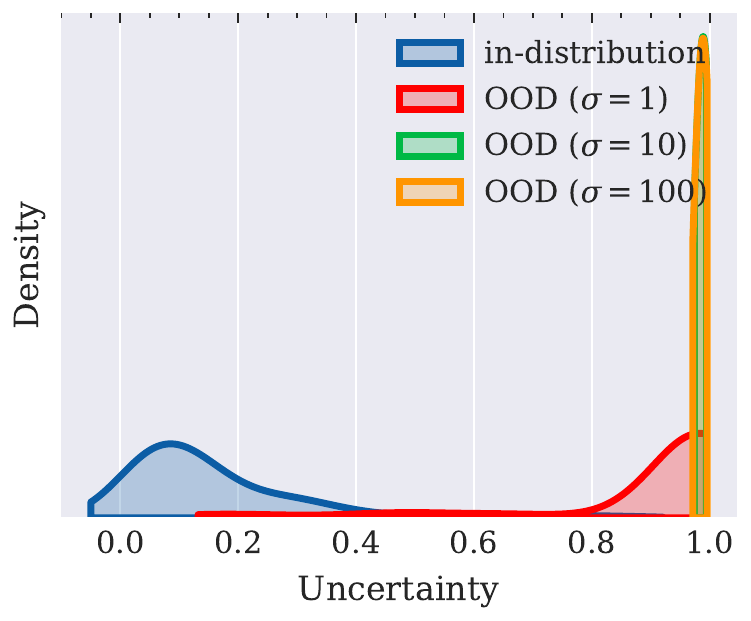}
\end{minipage}}
\centering
\subfigure[PIE]{
\begin{minipage}[t]{0.3\linewidth}
\centering
\includegraphics[width=0.9\linewidth,height=0.744\linewidth]{./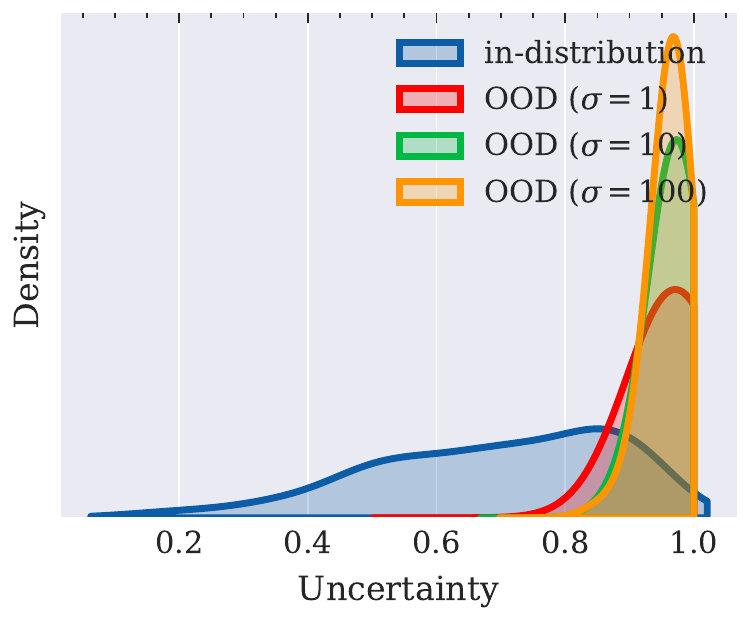}
\end{minipage}}
\centering
\subfigure[Caltech101]{
\begin{minipage}[t]{0.3\linewidth}
\centering
\includegraphics[width=0.9\linewidth,height=0.744\linewidth]{./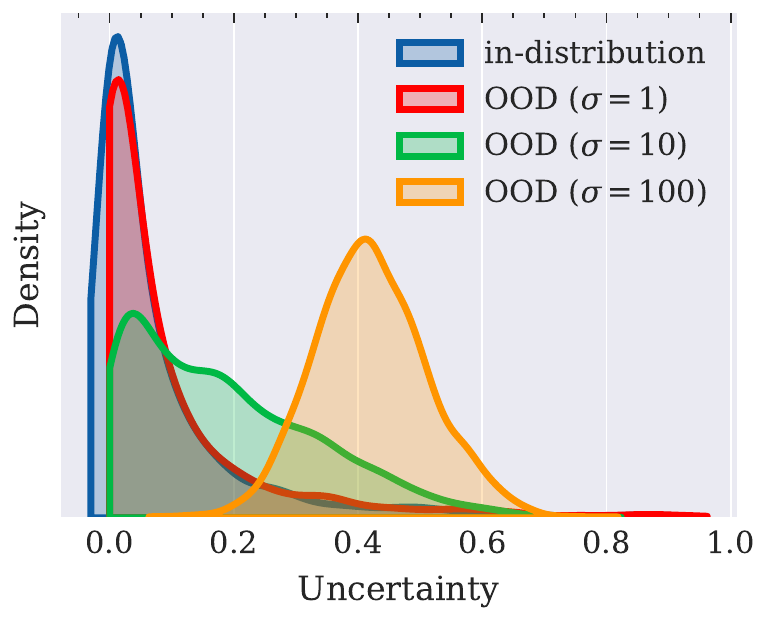}
\end{minipage}}
\centering
\subfigure[Scene15]{
\begin{minipage}[t]{0.3\linewidth}
\centering
\includegraphics[width=0.9\linewidth,height=0.744\linewidth]{./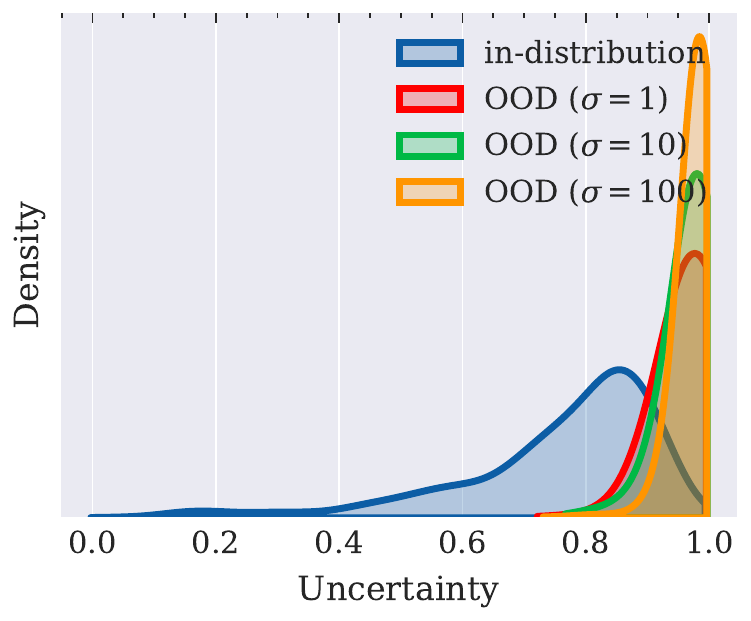}
\end{minipage}}
\centering
\subfigure[HMDB]{
\begin{minipage}[t]{0.3\linewidth}
\centering
\includegraphics[width=0.9\linewidth,height=0.744\linewidth]{./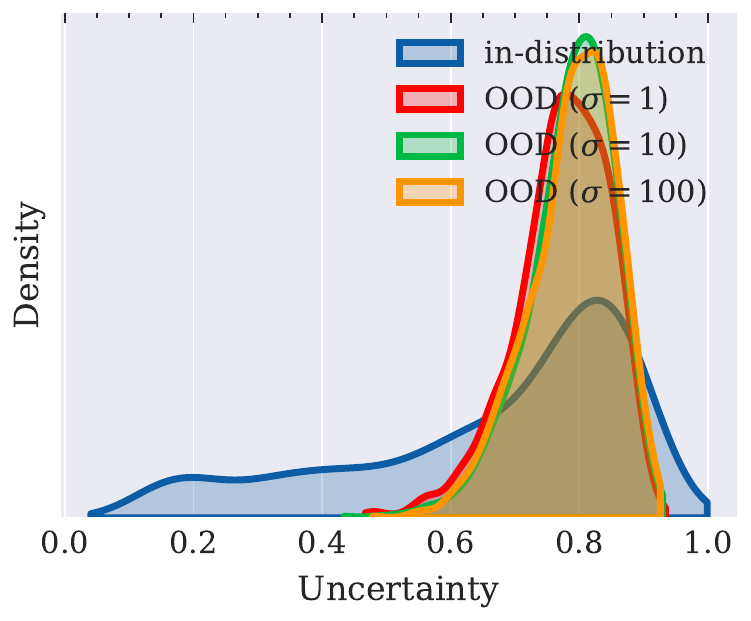}
\end{minipage}}
\caption{Density of uncertainty obtained using TMC algorithm.}
\label{fig:noisedensity}
\end{figure*}

\textbf{Comparison with Uncertainty-Based Methods Using the Best View.} First, we compare our method with existing uncertainty-based classification models, and report the experimental results in Table~\ref{tab:addlabel}. Existing uncertainty-based methods are designed for single-view data, so we report their results using the best-performing view. As shown in Table~\ref{tab:addlabel}, our method outperforms all models on all datasets in terms of accuracy. Taking the results on PIE and Scene15 as examples, our ETMC improves the accuracy by about 9.4\% and 18.7\% compared to the second-best models (EDL/MCDO). Compared with TMC, benefiting from the pseudo view, ETMC usually achieves better performance. Although the proposed model is more effective than single-view uncertainty-based methods, we will further investigate the performance when all algorithms utilize multiple views.

\begin{figure*}[!htbp]
\centering
\subfigure[Handwritten]{
\centering
\begin{minipage}[t]{0.3\linewidth}
\centering
\includegraphics[width=0.9\linewidth,height=0.744\linewidth]{./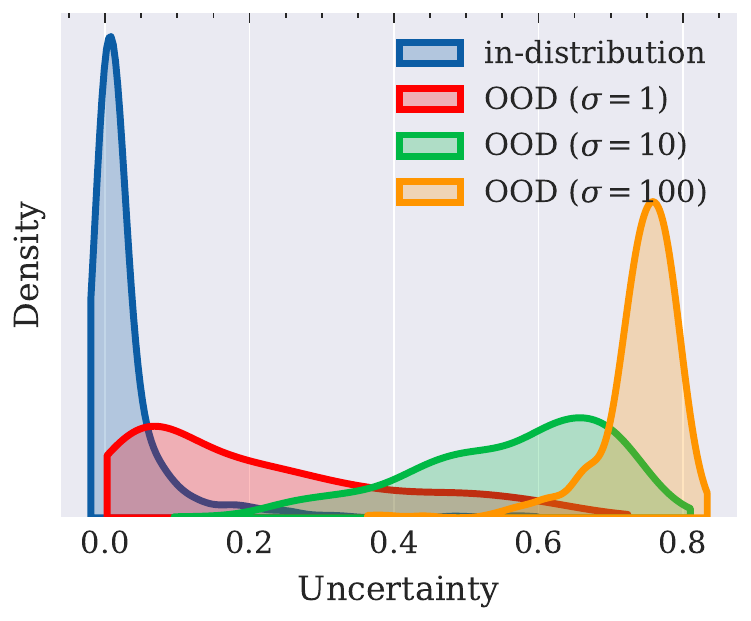}
\centering
\end{minipage}}
\subfigure[CUB]{
\begin{minipage}[t]{0.3\linewidth}
\centering
\includegraphics[width=0.9\linewidth,height=0.744\linewidth]{./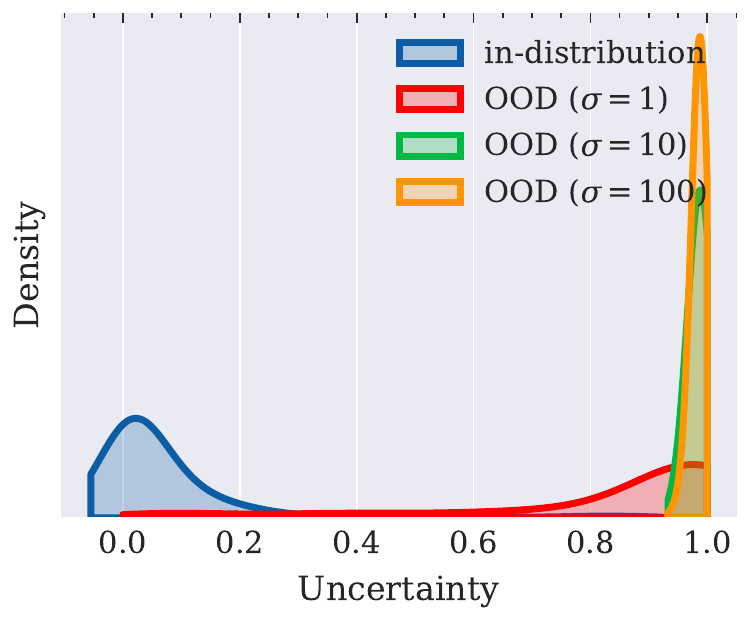}
\end{minipage}}
\centering
\subfigure[PIE]{
\begin{minipage}[t]{0.3\linewidth}
\centering
\includegraphics[width=0.9\linewidth,height=0.744\linewidth]{./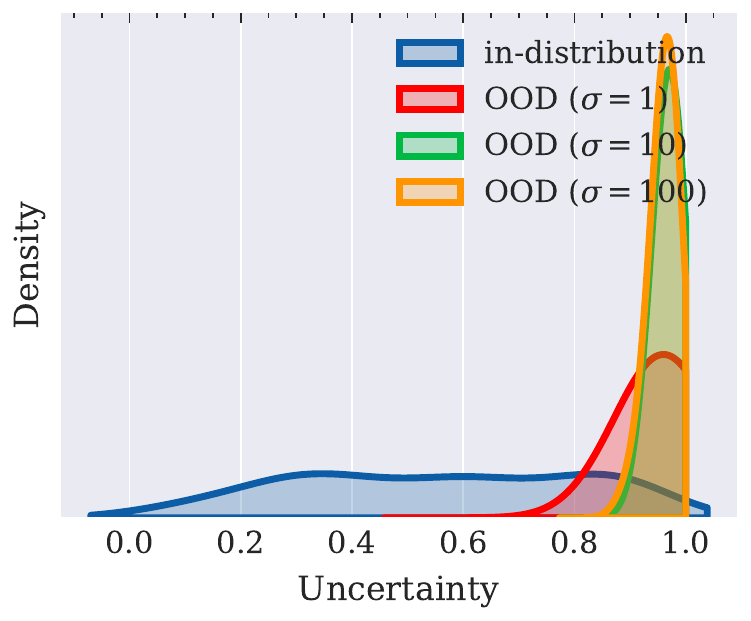}
\end{minipage}}
\centering
\subfigure[Caltech101]{
\begin{minipage}[t]{0.3\linewidth}
\centering
\includegraphics[width=0.9\linewidth,height=0.744\linewidth]{./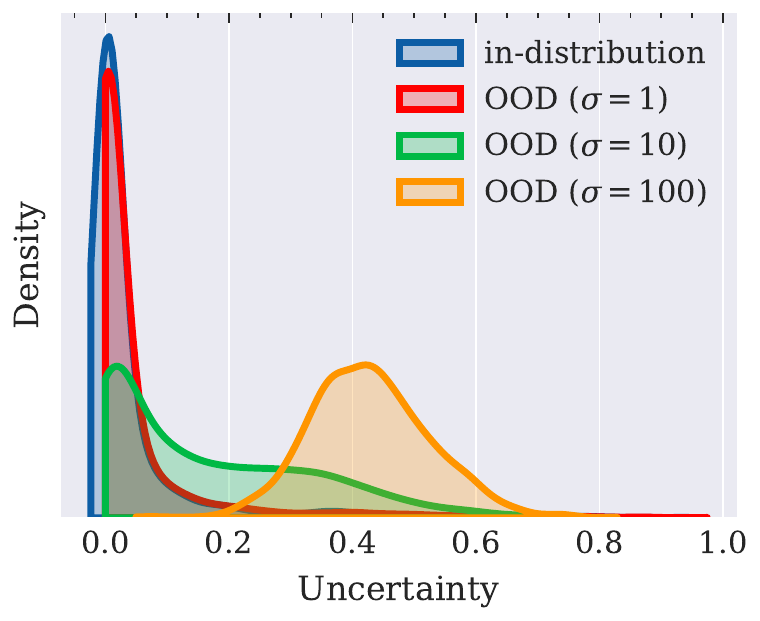}
\end{minipage}}
\centering
\subfigure[Scene15]{
\begin{minipage}[t]{0.3\linewidth}
\centering
\includegraphics[width=0.9\linewidth,height=0.744\linewidth]{./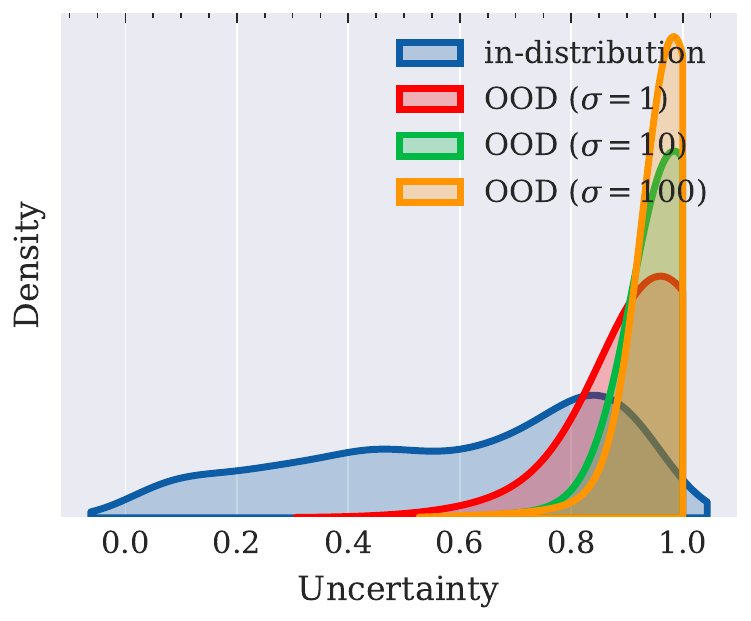}
\end{minipage}}
\centering
\subfigure[HMDB]{
\begin{minipage}[t]{0.3\linewidth}
\centering
\includegraphics[width=0.9\linewidth,height=0.744\linewidth]{./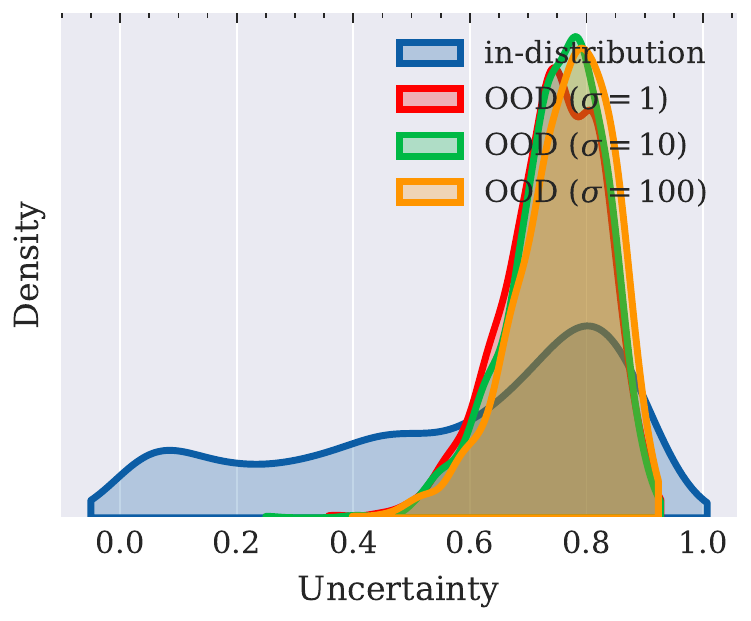}
\end{minipage}}
\caption{Density of uncertainty obtained using the ETMC algorithm.}
\label{fig:noisedensity_ETMC}
\end{figure*}

\textbf{Comparison with Uncertainty-Based Algorithms Using Multiple Views.} To further demonstrate the effectiveness our model in integrating different views, we concatenate the original feature vectors for all comparison methods. We add Gaussian noise with different standard deviations ($\sigma$) to the test data. More specifically, we add noise to one view for all datasets except for the Handwritten dataset.  Handwritten has six views, and adding noise to only one of them does not significantly reduce the overall performance.
Therefore, noise is added to three views to better evaluate the effectiveness of our algorithm. The results are reported in Fig.~\ref{fig:linechart}. We find that when the data is free of noise, our method can achieve competitive results. Meanwhile for the data with noise, the performance significantly decreases for all comparison methods. Fortunately, benefiting from the uncertainty-based fusion strategy, our method is aware of the view-specific noise and thus achieves promising results on all datasets.  In addition, we conduct experiments after adding different level of noise to each view on all datasets. From Fig.~\ref{fig:linechart}, we have the following observations. The proposed method is quite robust to abnormal multi-view data. Thanks to the uncertainty-based integration, the negative effect of noisy views on the final prediction is limited. Similarly, compared with TMC, although the integrated pseudo-view is noisy, ETMC still achieves competitive performance. However, it will be more convincing to explicitly investigate the performance in uncertainty estimation.

\textbf{Uncertainty Estimation.} To evaluate the estimated uncertainty, we visualize the distribution of in-/out-of-distribution samples in terms of uncertainty. The original samples and the samples with Gaussian noise are considered as in-distribution and out-of-distribution samples respectively. Specifically, we add Gaussian noise with different standard deviations (\emph{i.e.}, $\sigma = 1, \sigma = 10$ and $\sigma = 100$) to the test samples. The experimental results of TMC and ETMC are shown in Fig.~\ref{fig:noisedensity} and Fig.~\ref{fig:noisedensity_ETMC}, 
respectively. From the results, we draw the following observations: (1) Datasets with higher classification accuracy (\emph{e.g.}, Handwritten) are usually associated with lower uncertainty for the in-distribution samples. (2) In contrast, datasets with lower accuracy are usually associated with higher uncertainty for the in-distribution samples. (3) Much higher uncertainties are usually estimated for out-of-distribution samples on all datasets. Meanwhile, as the noise strength of the out of distribution data increases, the uncertainty of the data will also increase usually. These observations suggest that our model is effective at characterizing uncertainty, since it can facilitate discrimination between these data. Fig.~\ref{fig:threshold} shows that TMC and ETMC provide much more accurate predictions as the prediction uncertainty decreases. This implies that trusted decisions are supported based on the output (classification and its corresponding uncertainty) of our model.
\begin{table*}[!htbp]
\small
 \centering
 {
 \caption{Comparison with CCA-based algorithms. The best two results are in bold and marked with a superscript.}
   \begin{tabular}{ccccccc}
   \toprule
   Data & CCA   & DCCA  & DCCAE & BCCA  & TMC & ETMC\\
   \midrule
   {Handwritten} & 97.25$\pm$0.01 & 97.55$\pm$0.38 & 97.35$\pm$0.35 & 95.75$\pm$1.23 & \textbf{98.51$\pm$0.13}$^2$ & \textbf{98.75$\pm$0.00}$^1$\\
   \midrule
   {CUB} & 85.83$\pm$1.97 & 82.00$\pm$3.15 & 85.50$\pm$1.39 & 77.67$\pm$2.99 & \textbf{90.83$\pm$3.23}$^2$ & \textbf{91.04$\pm$0.69}$^1$\\
   \midrule
   {PIE} & 80.88$\pm$0.95 & 80.59$\pm$1.52 & 82.35$\pm$1.04 & 76.42$\pm$1.37 & \textbf{91.91$\pm$0.11}$^2$ & \textbf{93.75$\pm$1.08}$^1$\\
   \midrule
   {Caltech101} & 90.50$\pm$0.00 & 88.84$\pm$0.41 & 89.97$\pm$0.41 & 88.11$\pm$0.40 & \textbf{92.93$\pm$0.20}$^1$ & \textbf{92.83$\pm$0.33}$^2$\\
   \midrule
   {Scene15} & 55.77$\pm$0.22 & 54.85$\pm$1.00 & 55.03$\pm$0.34 & 53.82$\pm$0.24 & \textbf{67.74$\pm$0.36}$^2$ & \textbf{71.61$\pm$0.28}$^1$\\
   \midrule
   {HMDB} & 54.34$\pm$0.75 & 46.73$\pm$0.97 & 49.16$\pm$1.07 & 49.12$\pm$1.01 & \textbf{65.26$\pm$2.52}$^2$ & \textbf{69.43$\pm$0.67}$^1$\\
   \bottomrule
   \end{tabular}%
    \label{tab:cca_result}%
   }
\end{table*}%
\begin{figure}
\centering
\subfigure[TMC]{\begin{minipage}[t]{0.49\linewidth}
\includegraphics[width=1\linewidth,height=0.75\linewidth]{./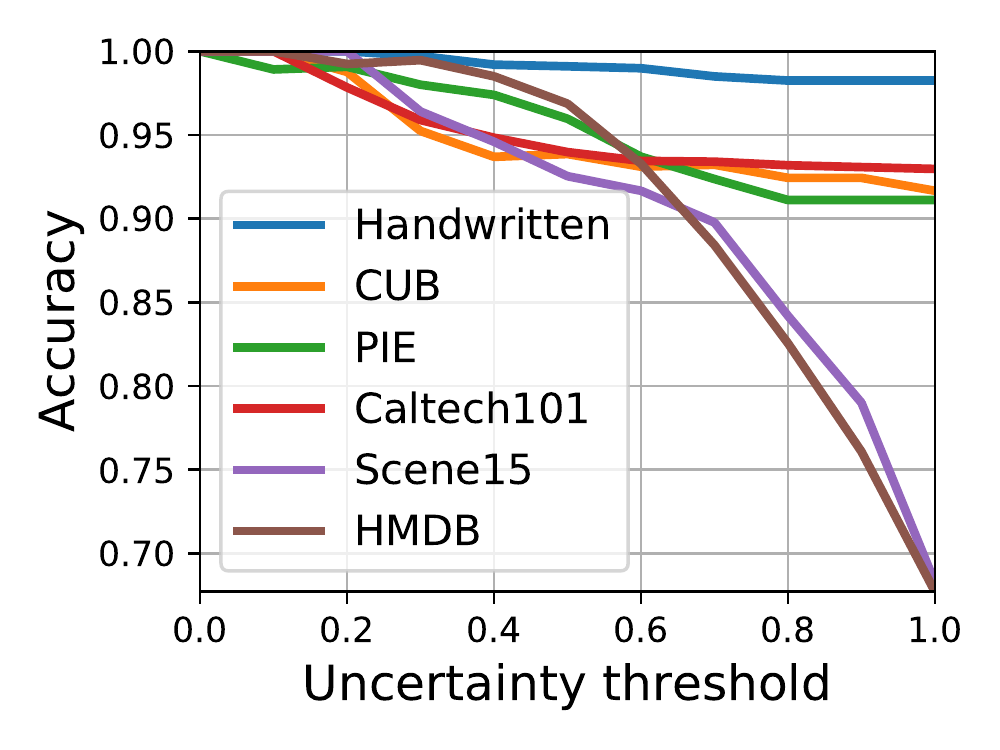}
\end{minipage}}
\subfigure[ETMC]{\begin{minipage}[t]{0.49\linewidth}
\includegraphics[width=1\linewidth,height=0.75\linewidth]{./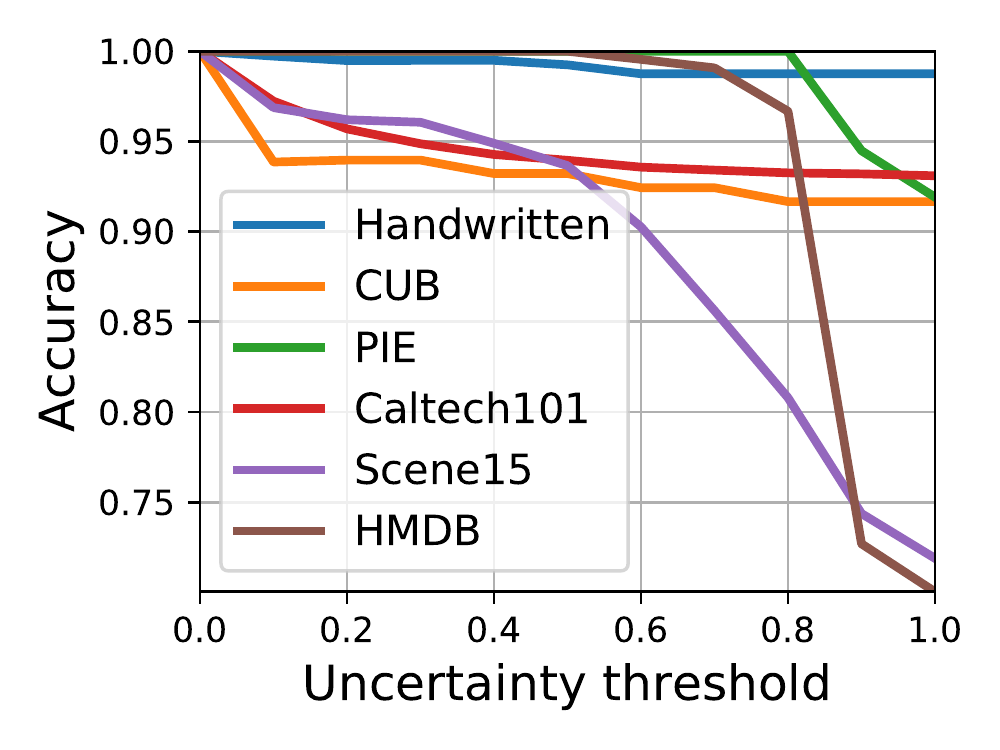}
\end{minipage}}
\caption{Accuracy with uncertainty thresholding.}
\centering
\label{fig:threshold}
\end{figure}
%\begin{figure}
%\centering
%\includegraphics[width=0.8\linewidth,height=0.6\linewidth]{./fig/threshold_linechart/threshold_ETMC.pdf}
%\centering
%\caption{Accuracy with uncertainty thresholding when ETMC algorithm is employed.}
%\label{fig:threshold_ETMC}
%\end{figure}

\textbf{Comparison with CCA-Based Algorithms}. We also compare our method with CCA-based algorithms. Specifically, we employ CCA-based methods to obtain the latent representations and then the linear SVM classifier is used for classification. The following CCA-based algorithms are used as baselines. (1) CCA \cite{hotelling1992relations} is a classical algorithm for seeking the correlations between multiple types of features. (2) DCCA \cite{andrew2013deep} obtains the correlations through DNNs. (3) DCCAE \cite{wang2015deep} employs autoencoders to seek the common representations. (4) BCCA \cite{wang2007variational} uses a Bayesian model selection algorithm for CCA based on a probabilistic interpretation. Due to the randomness (e.g., training/test partition and optimization) involved, we run each method 30 times and report their mean and standard deviations in terms of classification accuracy. The experimental results are shown in Table \ref{tab:cca_result}. On all datasets, our method consistently achieves better performance compared with the CCA-based algorithms. We notice that the CCA outperforms CCA-advanced methods, the possible reason is that stronger correlation maximization algorithms are easy to overfit on the training set.

\begin{table}[!h]
  \centering
  \caption{Comparison with state-of-the-art methods on SUN RGB-D dataset (in terms of classification accuracy). The best two results are in bold and marked with a superscript.}
    \begin{tabular}{llll}
    \toprule
    Methods & \multicolumn{1}{l}{RGB} & \multicolumn{1}{l}{Depth} & \multicolumn{1}{l}{Fusion} \\
    \midrule
    Wang et al.\cite{wang2016modality} & 40.4  & 36.5  & 48.1 \\
    Du et al.\cite{du2018depth} & 42.6  & 43.3  & 53.3 \\
    Li et al.\cite{li2018df} & 46.3  & 39.2  & 54.6 \\
    Song et al.\cite{song2018learning} & 44.6  & 42.7  & 53.8 \\
    Yuan et al.\cite{yuan2019acm} & 45.7  & - & 55.1 \\
    Du et al.\cite{du2019translate} & 50.6  & 47.9  & 56.7 \\
    Ayub et al.\cite{Ayub_2020_BMVC}  & 48.8  & 37.3  & 59.5 \\
    Ours (TMC) &\textbf{57.1}$^1$ &\textbf{52.6}$^2$ &\textbf{60.7}$^2$\\
    Ours (ETMC) & \textbf{56.2}$^2$ & \textbf{52.7}$^1$ & \textbf{61.3}$^1$\\
    \bottomrule
    \end{tabular}%
  \label{tab:sunrgbd}%
\end{table}%
\begin{table}[!h]
  \centering
  \caption{Comparison with state-of-the-art methods on the NYUD Depth V2 Dataset (in terms of classification accuracy). ``Aug" in the table indicates that the network fine-tuned on the SUN RGB-D dataset is employed as the backbone. The best two results are in bold and marked with a superscript.}
    \begin{tabular}{llll}
    \toprule
    Methods & RGB   & Depth & Fusion \\
    \midrule
    Wang et al.\cite{wang2016modality} & 53.5  & 51.5  & 63.9 \\
    Song et al.\cite{song2017combining} & -     & -     & 66.7 \\
    Li et al.\cite{li2018df} & 61.1  & 54.8  & 65.4 \\
    Du et al.\cite{du2018depth} & 53.7  & 59    & 67.5 \\
    Yuan et al.\cite{yuan2019acm} & 55.4  & -     & 67.4 \\
    Song et al.\cite{song2018learning} & 53.4  & 56.4  & 67.5 \\
    Du et al.\cite{du2019translate} & 60.2  & 55.2  & 65.5 \\
    Du et al.\cite{du2019translate} (Aug) & 64.8  & 57.7  & 69.2 \\
    Ayub et al.\cite{Ayub_2020_BMVC} & \textbf{66.4}$^1$  & 49.5  & \textbf{70.9}$^2$ \\
    Ours (TMC) &65.9 &\textbf{63.8}$^1$&70.0\\
    Ours (ETMC) & \textbf{66.2}$^2$ & \textbf{63.3}$^2$ & \textbf{72.5}$^1$\\
    \bottomrule
    \end{tabular}%
  \label{tab:nyudv2}%
\end{table}%
\subsection{RGB-D Scene Recognition}
\begin{figure*}[!htbp]
\centering
\subfigure[\textcolor{black}{ NYUD Depth V2}]{\begin{minipage}[t]{0.40\linewidth}
\includegraphics[width=0.9\linewidth,height=0.75\linewidth]{./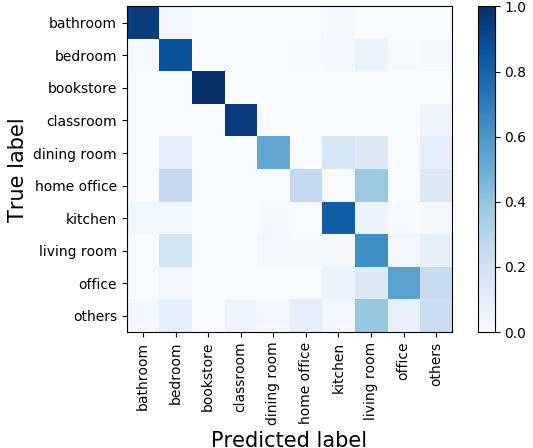}
\end{minipage}}
\subfigure[\textcolor{black}{ SUN RGB-D }]{\begin{minipage}[t]{0.40\linewidth}
\includegraphics[width=0.9\linewidth,height=0.75\linewidth]{./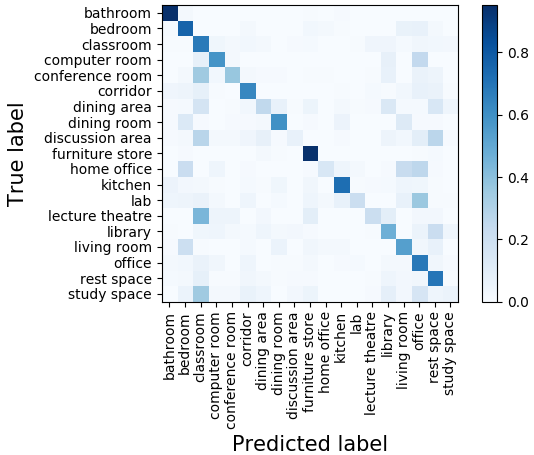}
\end{minipage}}
\caption{\textcolor{black}{Confusion matrices of NYUD Depth V2 and SUN RGB-D datasets.}}
\centering
\label{fig:cmnyud}
\end{figure*}

\begin{figure*}[!htbp]
\centering
\subfigure[\textcolor{black}{NYUD Depth V2}]{\begin{minipage}[t]{0.40\linewidth}
\includegraphics[width=0.9\linewidth,height=0.75\linewidth]{./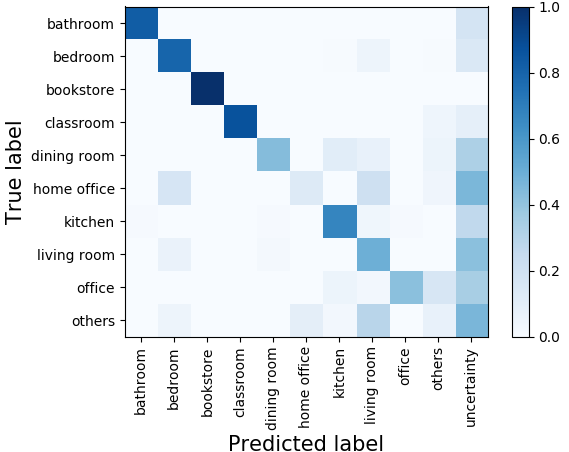}
\end{minipage}}
\subfigure[\textcolor{black}{ SUN RGB-D}]{\begin{minipage}[t]{0.40\linewidth}
\includegraphics[width=0.9\linewidth,height=0.75\linewidth]{./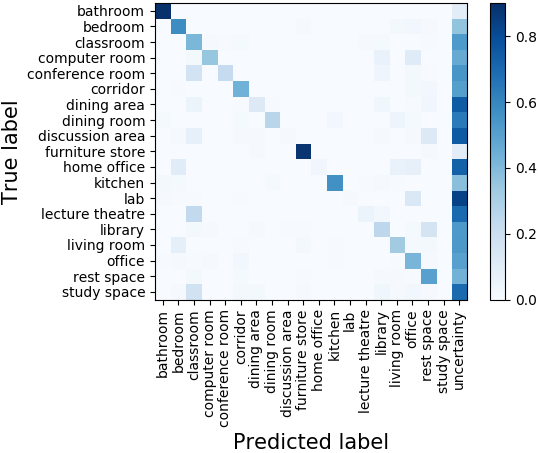}
\end{minipage}}
\caption{\textcolor{black}{Subjective confusion matrices of NYUD Depth V2 and SUN RGB-D datasets.}}
\centering
\label{fig:sun}
\end{figure*}

\begin{figure*}[!h]
\centering
\subfigure[Confident sample 1 (Furniture store)]{
\centering
\begin{minipage}[t]{0.30\linewidth}
\centering
\includegraphics[width=0.9\linewidth,height=0.45\linewidth]{./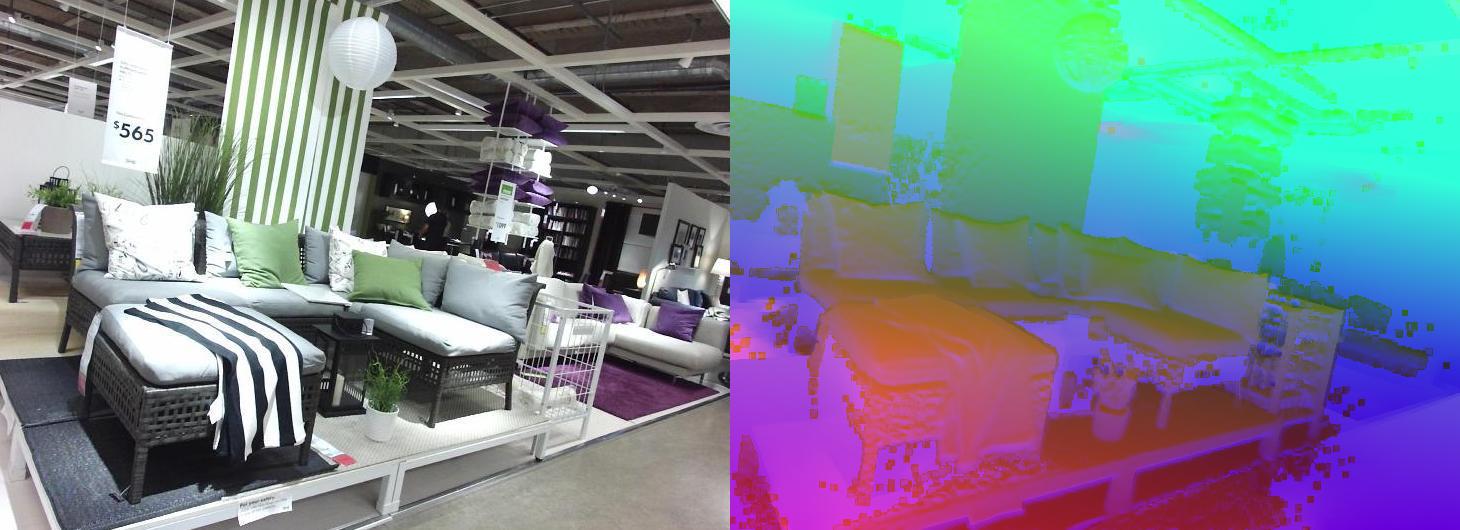}
\centering
\end{minipage}}
\subfigure[Confident sample 2 (Bathroom)]{
\centering
\begin{minipage}[t]{0.30\linewidth}
\centering
\includegraphics[width=0.9\linewidth,height=0.45\linewidth]{./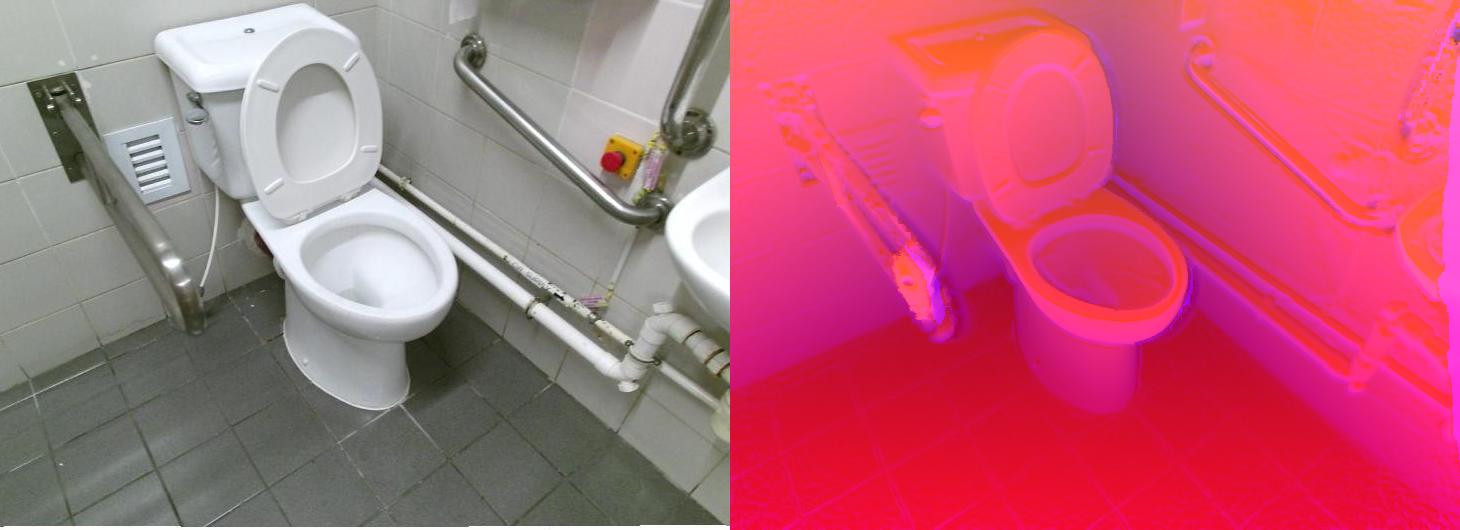}
\centering
\end{minipage}}
\subfigure[Confident sample 3 (Furniture store)]{
\centering
\begin{minipage}[t]{0.30\linewidth}
\centering
\includegraphics[width=0.9\linewidth,height=0.45\linewidth]{./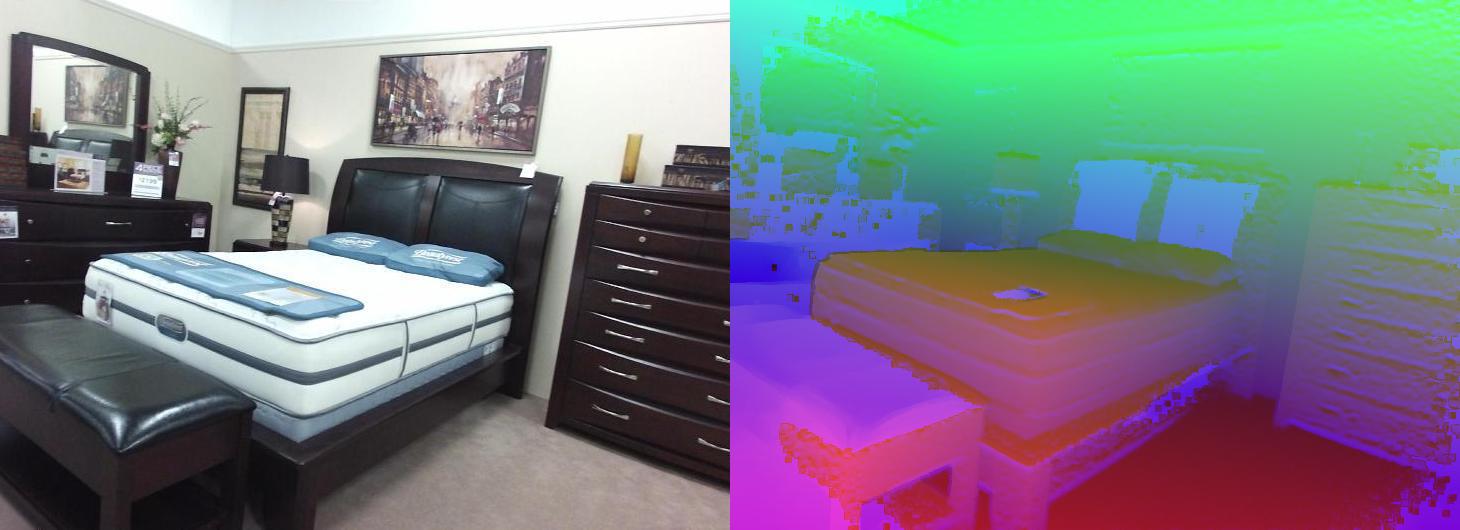}
\centering
\end{minipage}}
\centering
\subfigure[Uncertain sample 1 (Discussion area)]{
\centering
\begin{minipage}[t]{0.30\linewidth}
\centering
\includegraphics[width=0.9\linewidth,height=0.45\linewidth]{./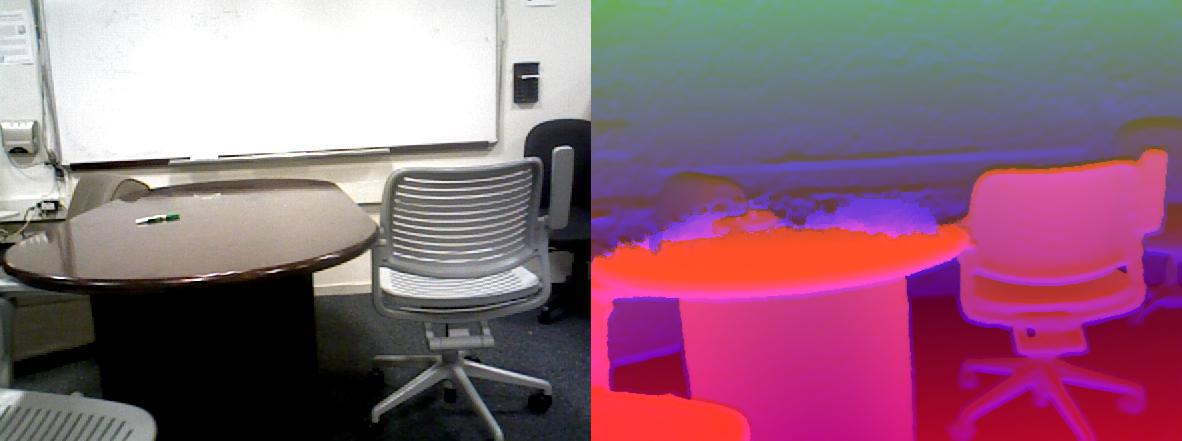}
\centering
\end{minipage}}
\subfigure[Uncertain sample 2 (Office)]{
\centering
\begin{minipage}[t]{0.30\linewidth}
\centering
\includegraphics[width=0.9\linewidth,height=0.45\linewidth]{./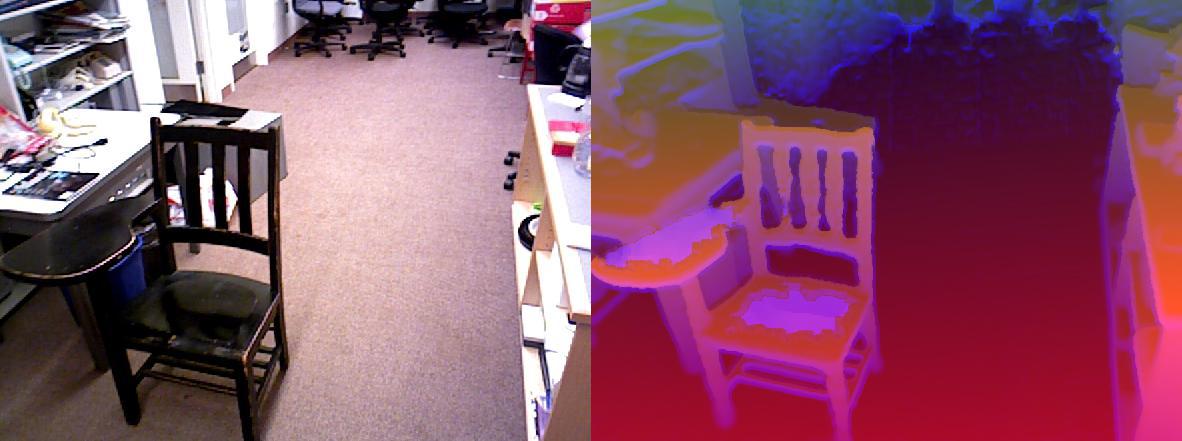}
\centering
\end{minipage}}
\subfigure[Uncertain sample  3 (Rest space)]{
\centering
\begin{minipage}[t]{0.30\linewidth}
\centering
\includegraphics[width=0.9\linewidth,height=0.45\linewidth]{./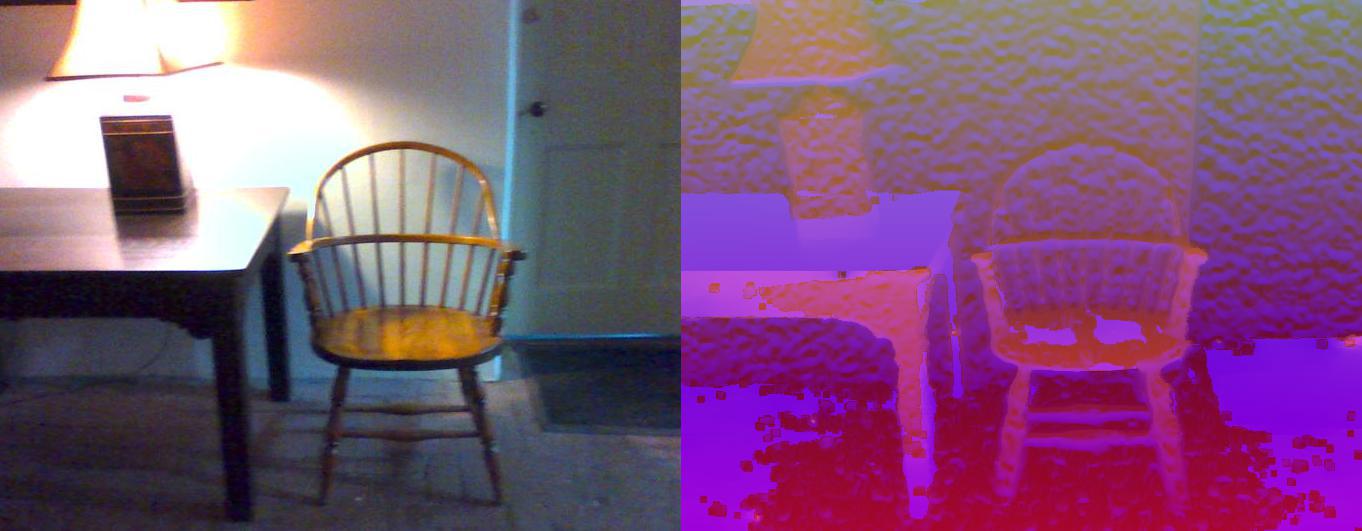}
\centering
\end{minipage}}
\caption{Examples with the highest (bottom row) and lowest (top row) prediction uncertainty using the ETMC algorithm on the SUN RGB-D test set. The ground-truth labels are in brackets. The confident samples are correctly classified, while for the uncertain samples only the rightmost sample is correctly recognized.}
\label{fig:ETMC_SUN_vis}
\end{figure*}
\begin{figure*}[!h]
\centering
\subfigure[Confident sample 1 (Bedroom)]{
\centering
\begin{minipage}[t]{0.30\linewidth}
\centering
\includegraphics[width=0.9\linewidth,height=0.45\linewidth]{./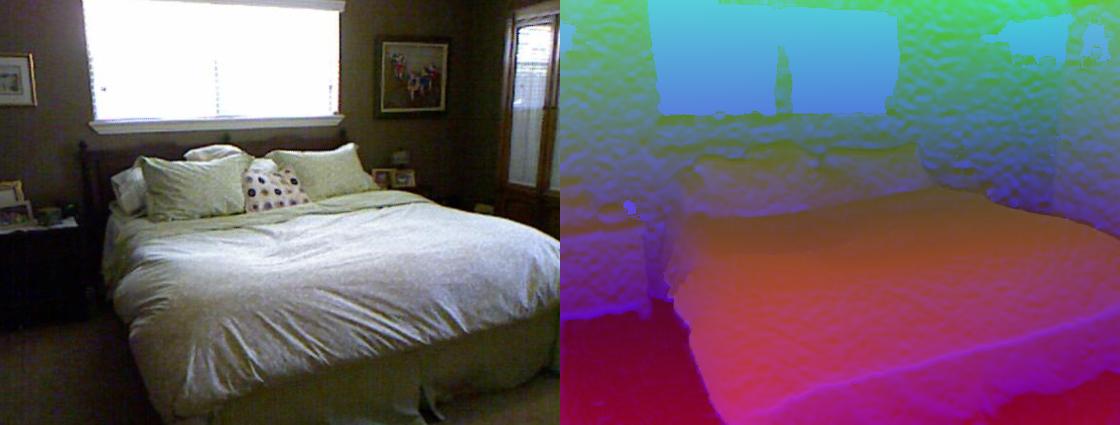}
\centering
\end{minipage}}
\subfigure[Confident sample 2 (Bedroom)]{
\centering
\begin{minipage}[t]{0.30\linewidth}
\centering
\includegraphics[width=0.9\linewidth,height=0.45\linewidth]{./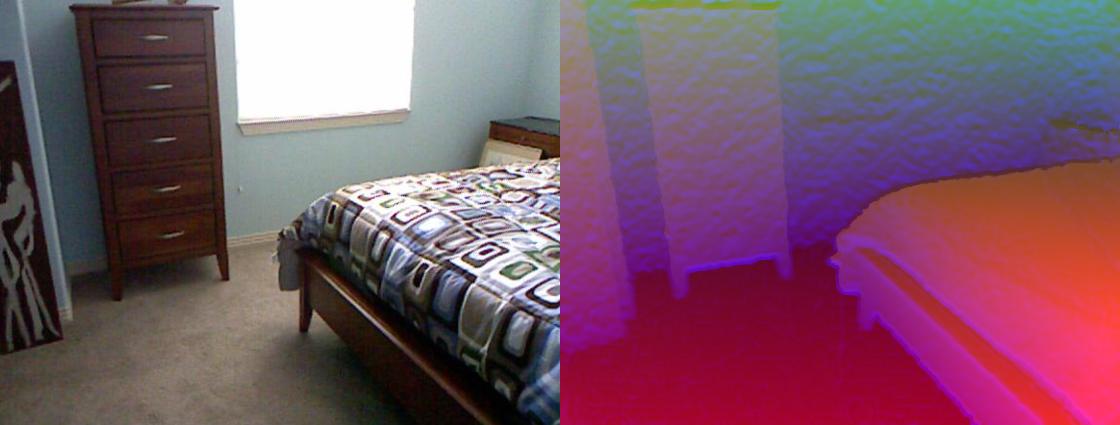}
\centering
\end{minipage}}
\subfigure[Confident sample 3 (Bedroom)]{
\centering
\begin{minipage}[t]{0.30\linewidth}
\centering
\includegraphics[width=0.9\linewidth,height=0.45\linewidth]{./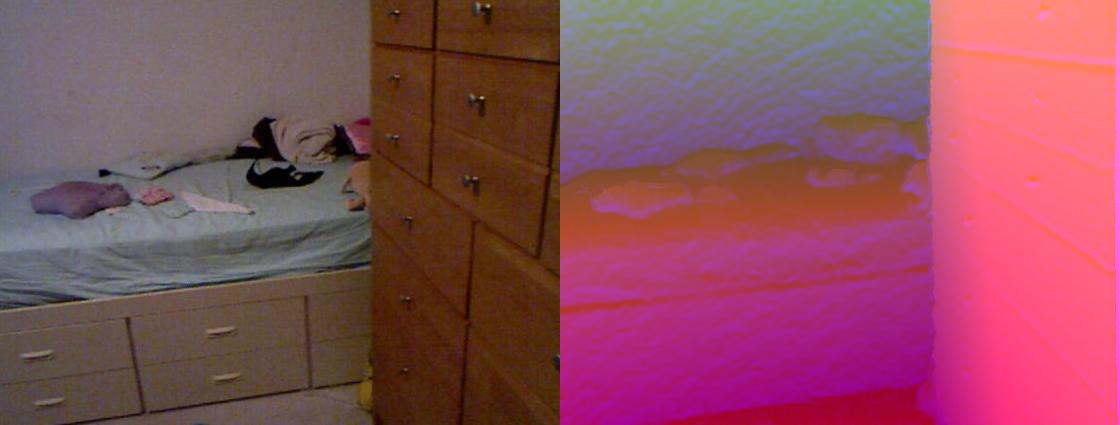}
\centering
\end{minipage}}
\centering
\subfigure[Uncertain sample 1 (Bedroom)]{
\centering
\begin{minipage}[t]{0.30\linewidth}
\centering
\includegraphics[width=0.9\linewidth,height=0.45\linewidth]{./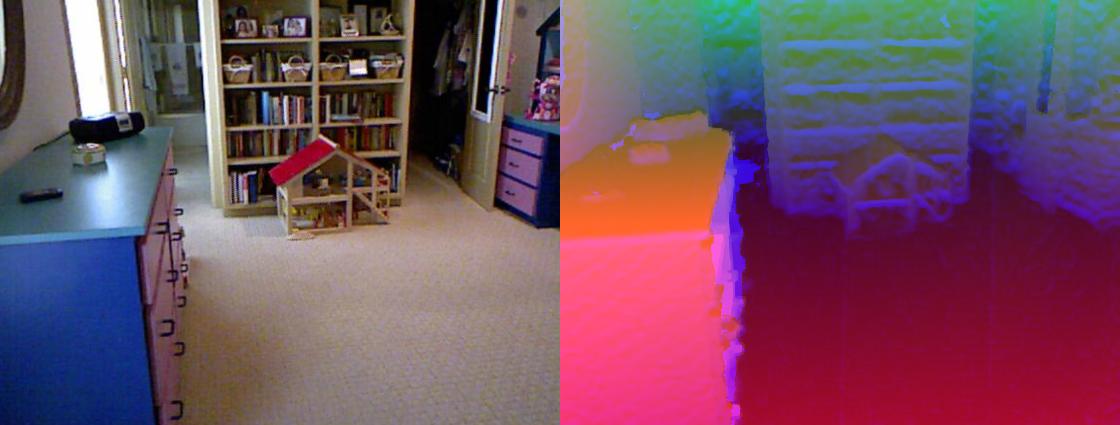}
\centering
\end{minipage}}
\subfigure[Uncertain sample 2 (Others)]{
\centering
\begin{minipage}[t]{0.30\linewidth}
\centering
\includegraphics[width=0.9\linewidth,height=0.45\linewidth]{./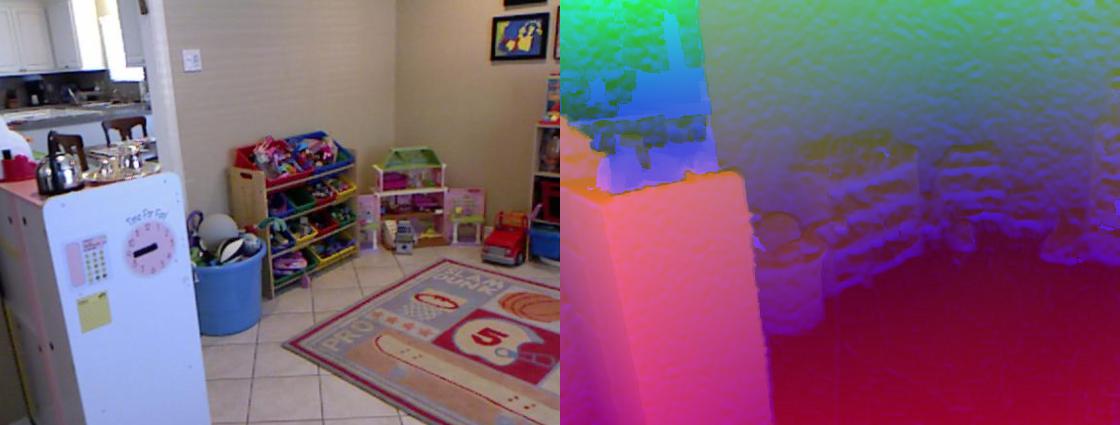}
\centering
\end{minipage}}
\subfigure[Uncertain sample 3 (Kitchen)]{
\centering
\begin{minipage}[t]{0.30\linewidth}
\centering
\includegraphics[width=0.9\linewidth,height=0.45\linewidth]{./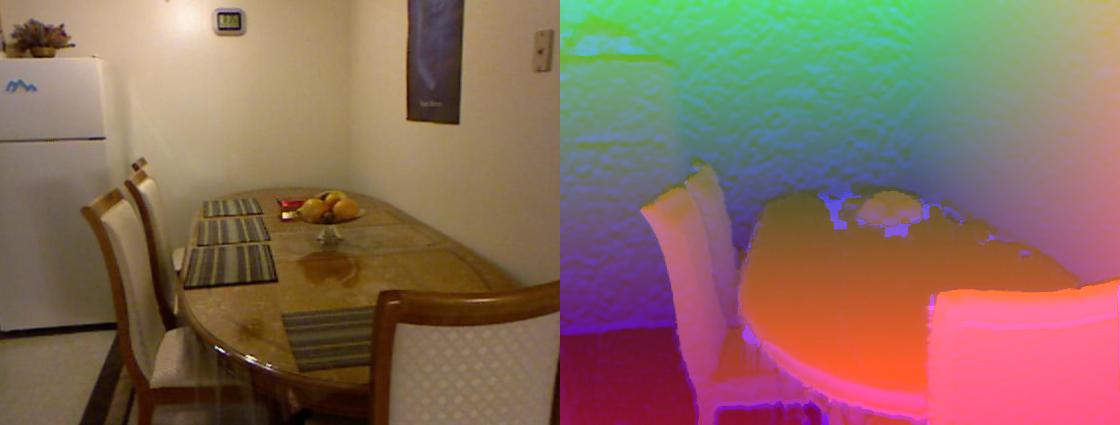}
\centering
\end{minipage}}
\caption{Examples with the highest (bottom row) and lowest (top row) prediction uncertainty using the ETMC algorithm on the NYUD Depth V2 test data. The ground-truth labels are in brackets. The confident samples are correctly classified. For uncertain samples, only the sample 1 is correctly classified.}
\label{fig:ETMC_NYUD_vis}
\end{figure*}
We conduct experiments for RGB-D scene recognition in an end-to-end manner. There are two standard and widely used RGB-D scene datasets, i.e., SUN RGB-D and NYU Depth V2. The SUN RGB-D dataset consists of 10,335 RGB-depth image pairs. Following existing work \cite{song2015sun}, we use the 19 major scene categories, each of which contains at least 80 images. We use 4,845 and 4,659 samples as training and test data, respectively. NYUD Depth V2 (NYUD2) is a relatively small RGB-D dataset, which consists of 1,449 RGB-depth image pairs of 27 categories. Following the standard spilt~\cite{silberman2012indoor}, we reorganize the 27 categories into 10 categories with 9 usual scenes and one "others" category. We then use 795 and 654 samples for training and testing, respectively.

We employ ResNet-18 \cite{he2016deep} pre-trained on ImageNet as the backbone network. A TITAN Xp GPU is used for training the neural network. The input images are resized to $256\times256$ and randomly cropped to $224\times224$. The Adam optimizer \cite{kingma2014adam} with weight and learning rate decay is employed to train the neural network. For ETMC, the pseudo-view is constructed by simply concatenating the outputs of the two backbone networks.

\textbf{Quantitative Results.} We compare the proposed model with the current state-of-the-art methods on SUN RGB-D. The experimental results are shown in Table~\ref{tab:sunrgbd}. For most methods, we report the results for the fusion of RGB and depth, as well as the results using a single view. The proposed ETMC achieves the best accuracy of $\textbf{61.3\%}$. Further, it is still robust and also achieves the best performance when only one view is used in the test stage. This is likely due to the multi-task strategy adopted by our method, which enables it to not only focus on improving the accuracy after fusion, but also on improving the performance when only a single view is used. Compared with TMC, the accuracy of ETMC increases by $0.6\%$ after fusion, and the performance is slightly degraded when only the RGB modality is involved. We conduct comparative experiments with eight current state-of-the-art methods on NYUD Depth V2. The results are shown in Table \ref{tab:nyudv2}. Similar to the experiments on the SUN RGB-D dataset, we report the results using RGB-D image pairs as well as the results using each single view. The proposed ETMC again achieves the best classification performance at $72.5\%$. TMC is competitive when different views 
are used in the test stage. For example, when only the depth image is used, it achieves the highest classification accuracy.

\textbf{Qualitative Results.}
We present typical examples (from SUN RGB-D and NYUD Dpeth V2) with the highest and lowest uncertainties in prediction in Fig.~\ref{fig:ETMC_SUN_vis} and Fig.~\ref{fig:ETMC_NYUD_vis}, respectively. For samples that are easy to intuitively distinguish, our model confidently and correctly recognizes them. For example, Fig.~\ref{fig:ETMC_NYUD_vis}(b) has very discriminative features that support classification. Uncertain samples, on the other hand, are usually difficult for even humans to recognize, and our method also tends to output high uncertainties. Please refer to the appendix for more examples. \textcolor{black}{In addition, we show the classification confusion matrices of NYUD Depth V2 and SUN RGB-D datasets in Fig.~\ref{fig:cmnyud}. It can be seen that different classes have different classification difficulties and the underlying reason is the difference in the number of training samples and the similarity between different classes. Further, to explore the benefits of introducing subjective uncertainty into decision-making, we also present the subjective confusion matrix in Fig.~\ref{fig:sun}.  The subjective confusion matrix is obtained through the following steps. When $u>max\left(\{b_k\}_{k=1}^K\right)$, we consider the final prediction result of the sample to be uncertain. Then we calculate the proportion of each class that is predicted to be in each class and the corresponding uncertainty. According to these subjective confusion matrices, we have the following observations: (1) The proportion of samples classified as uncertain is quite different in different classes, which further validates the different difficulty degrees for different classes. (2) After removing uncertain samples, our method can significantly reduce the misclassification rate (e.g., from 0.387 to 0.187 on SUN RGB-D dataset), indicating that our model can perceive hard-to-recognize classes through subjective uncertainty. We have clarified this in the revision.
}

\textcolor{black}{\textbf{Computational cost analysis.} We conduct experiments on the following models and report the corresponding FLOPs and number of parameters: (1) A single view classifier (SingleView) obtains prediction by using only one view. (2) Late fusion (LateFusion) obtains the final result by averaging the decisions of different views. (3) TRecg\cite{du2019translate} employs view translation to enhance the representation learning ability of the model and performs intermediate fusion. The experimental results are shown in Table \ref{tab:computationalcost}. From the experimental results, it is observed that our model has a similar computational cost to simple Latefusion. At the same time, we also compare ours with TRecg. Due to the involved complex fusion strategies and components, the computation cost of TRecg is much higher than that of ours.}

\begin{table}[!h]
  \centering
  \caption{\textcolor{black}{Computational cost comparison with different methods (in terms of GFLOPs and number of parameters).}}
    \textcolor{black}{
    \begin{tabular}{lll}
    \toprule\
    Method & GFLOPs & Parameters \\
    \midrule
    SingleView & 1.82 & 11.19M \\
    LateFusion & 3.64 & 22.37M \\
    Trecg\cite{du2019translate} & 13.84 & 24.47M \\
    TMC   & 3.64 & 22.37M \\
    ETMC  & 3.64 & 22.39M \\
    \bottomrule \
    \end{tabular}%
    }
  \label{tab:computationalcost}
\end{table}%

\subsection{Image-Text Classification}
We further conduct experiments in an end-to-end manner on image-text classification. We use the UMPC-FOOD101 dataset, which consists of 86,796 samples taken from web pages. Each sample is provided with both an image and text view \cite{wang2015recipe}. Following existing work \cite{kiela2019supervised}, there are 60,101 samples used as the training set, 5,000 samples used as the validation set, and the remaining 21,695 samples as the test set. In this experiment, we use the original datasets, which contain noise due to the collection process. We employ ResNet-152 \cite{he2016deep} pre-trained on ImageNet and BERT \cite{devlin2018bert} as the backbone network for images and text, respectively. For comparison algorithms, we report the results for the best performing views, and concatenate the outputs of the two backbone networks as input to the classifier to simulate the late fusion strategy. For ETMC, we use the same strategy to construct the pseudo-view. Due to the randomness involved, we run all algorithms five times and report the mean accuracy and standard deviation. The Adam \cite{kingma2014adam} optimizer is employed for all methods with learning rate decaying for all methods.

The experimental results on the UMPC-FOOD101 dataset are shown in Table~\ref{tab:food101}. We report the accuracy of each compared algorithm with the best-performing views (`best view' in Table~\ref{tab:food101}) and combined views (`feature fusion' in Table~\ref{tab:food101}). As can be seen, our algorithm outperforms other methods on the Food101 dataset. Our method achieves about $91.5\%$ in terms of accuracy, compared to $90.5\%$ for the second-best model. The performance when using the best single view is relatively low.

\begin{table}[!htbp]
    \centering
    \begin{tabular}{ccc}
    \toprule\
    method & Food101  \\
    \midrule
    {MCDO (best view)} & 74.87$\pm$1.04  \\
    {DE (best view)} & 83.27$\pm$0.36 \\
    {UA (best view)} & 84.49$\pm$0.34 \\
    {EDL (best view)} & 87.40$\pm$0.48 \\
    \midrule
    {MCDO (feature fusion)}  & 79.49$\pm$1.54  \\
    {DE (feature fusion)}    & 86.45$\pm$0.44  \\
    {UA (feature fusion)}    & 88.50$\pm$0.47  \\
    {EDL (feature fusion)}   & 90.50$\pm$0.32  \\
    \midrule
    TMC  & \textbf{91.30$\pm$0.21} \\
    ETMC  & \textbf{91.47$\pm$0.31} \\
    \bottomrule
    \end{tabular}%
    \caption{Evaluation of the classification performance on UMPC-FOOD101.}
    \label{tab:food101}
\end{table}
\section{Conclusion}
In this work, we propose a novel  trusted multi-view classification (TMC) algorithm which, based on the Dempster-Shafer evidence theory, can produce trusted classification decision on multi-view data. Our algorithm focuses on decision-making by fusing the uncertainty of multiple views, which is essential for making reliable multi-view fusion. The TMC model can dynamically identify the views which are risky for decision making, and exploits informative views in the final decision. Furthermore, our model can determine the uncertainty of a current decision when making the final classification, providing explainability. Both theoretical analysis and empirical results validate the effectiveness of the proposed algorithm in classification accuracy and uncertainty estimation.
\ifCLASSOPTIONcompsoc
  \section*{Acknowledgments}
 This work was supported in part by the National Key Research and Development Program of China under Grant 2019YFB2101900, the National Natural Science Foundation of China (61976151, 61925602, 61732011), the A*STAR AI3 HTPO Seed Fund (C211118012), the Joey Tianyi Zhou's A*STAR SERC Central Research Fund and AME Programmatic Funding Scheme (Project No. A18A1b0045). We gratefully acknowledge the support of MindSpore, CANN (Compute Architecture for Neural Networks), and Ascend AI Processor used for this research. We also want to use TMC on MindSpore, which is a new deep learning computing framework. These problems are left for future work.
\else
  \section*{Acknowledgment}
\fi
{
\bibliographystyle{IEEEtran}
\bibliography{bare_jrnl_compsoc}
}
\vspace{-12 mm} 
\begin{IEEEbiography}[{\includegraphics[width=1in,height=1.1in,clip,keepaspectratio]{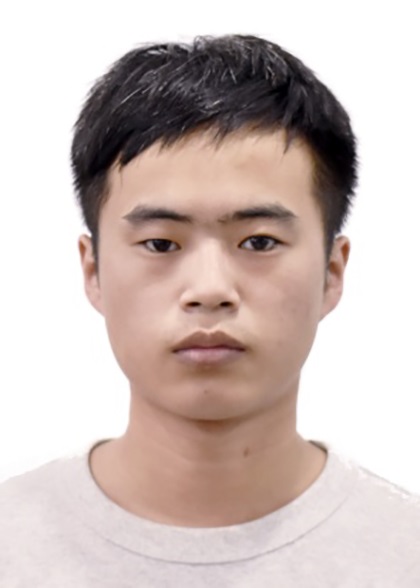}}]{Zongbo Han} is now a Ph.D. candidate at Tianjin University. He received his M.E. degree from Tianjin University in 2021, and his B.E. degree from Dalian University of Technology in 2019. He is now focusing on multi-view learning, uncertainty machine learning, and medical image analysis.
\end{IEEEbiography}
%%%%%%%%%%%%%%%%%%%%%%
\vspace{-15 mm} 
\begin{IEEEbiography}[{\includegraphics[width=1in,height=1.1in,clip,keepaspectratio]{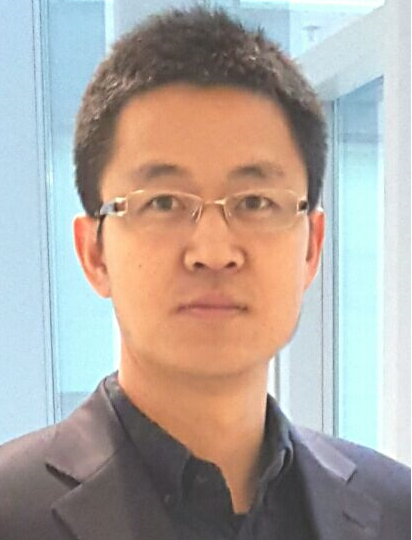}}]{Changqing Zhang}
received the B.S. and M.S. degrees from the College of Computer, Sichuan University, Chengdu, China, in 2005 and 2008, respectively, and the Ph.D. degree in Computer Science from Tianjin University, China, in 2016. He is an associate professor in the College of Intelligence and Computing, Tianjin University. He has been a postdoc research fellow in the Department of Radiology and BRIC, School of Medicine, University of North Carolina at Chapel Hill, NC, USA. His current research interests include machine learning, computer vision and medical image analysis.
\end{IEEEbiography}
%%%%%%%%%%%%%%%%%%%%%%
\vspace{-12 mm} 
\begin{IEEEbiography}[{\includegraphics[width=1in,height=1.1in,clip,keepaspectratio]{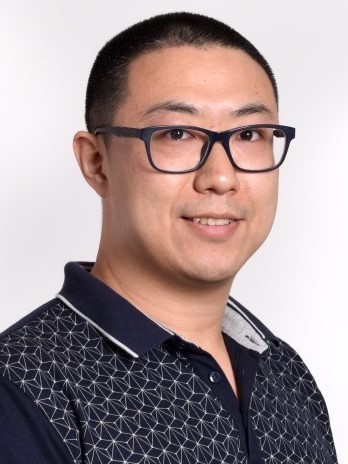}}]{Huazhu Fu} (SM'18) is a senior scientist at Institute of High Performance Computing (IHPC), Agency for Science, Technology and Research (A*STAR). He received his Ph.D. from Tianjin University in 2013. Previously, He was a Research Fellow (2013-2015) in NTU, Singapore, a Research Scientist (2015-2018) in I2R, A*STAR, Singapore, and a Senior Scientist (2018-2021) in Inception Institute of Artificial Intelligence, UAE. His research interests include computer vision, machine learning, and AI in health. He is a recipient of ICME 2021 Best Paper Award. He currently serves as an associate editor of IEEE TMI, IEEE JBHI, Scientific Reports, and IEEE Access.
\end{IEEEbiography}
%%%%%%%%%%%%%%%%%%%%%%
\vspace{-1 cm} 
\begin{IEEEbiography}[{\includegraphics[width=1in,height=1.1in,clip,keepaspectratio]{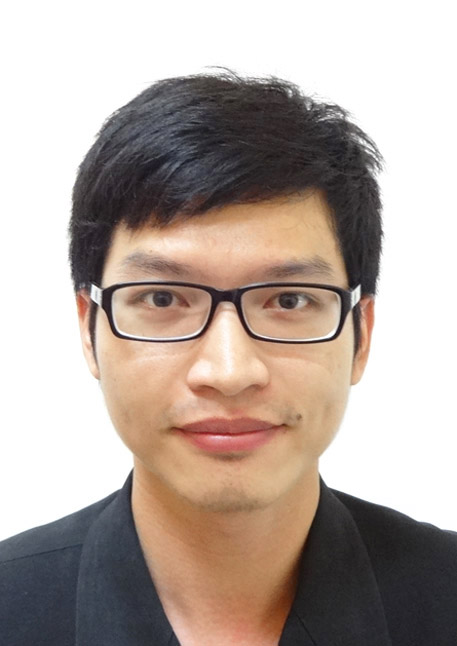}}]{Joey Tianyi Zhou} (Member, IEEE) received the Ph.D. degree in computer science from Nanyang Technological University, Singapore, in 2015. He is
currently a Senior Scientist with the Centre for Frontier AI Research (CFAR), Research Agency for Science, Technology, and Research, Singapore. He was a recipient of the Best Poster Honorable Mention at ACML in 2012, the Best Paper Award
from the BeyondLabeler Workshop on IJCAI in 2016, the Best Paper Nomination at ECCV in 2016, and the NIPS 2017 Best Reviewer Award. He has served as an Associate Editor for IEEE Transactions on Emerging Topics in Computational Intelligence (TETCI), IEEE Access, and IET Image Processing.

% is currently a scientist, PI and group manager with the Institute of High Performance Computing (IHPC) in Agency for Science, Technology, and Research (A*STAR), Singapore. He is currently leading the AI Group  with more than 30 research staff members.  He is  also holding an adjunct faculty position in National University of Singapore (NUS).   Before working in IHPC, he was a senior research engineer with SONY US Research Center in San Jose, USA.  Joey Zhou received a Ph.D. degree in computer science from Nanyang Technological University (NTU), Singapore. His current interests mainly focus on low-resource machine learning and their applications in natural language processing and computer vision tasks. 

% Dr. Zhou co-organized ACML'16 workshop on Learning on Big Data workshop and IJCAI'19 workshop on Multi-output Learning, ICDCS'20 workshop on Efficient AI; has served as an Associate/Guest Editor for IEEE Access, IET Image Processing,  IEEE Multimedia, and ACM Transactions on Multimedia Computing, Communications, and Applications (TOMM), Springer Nature Computer Science and TPC Chair in Mobimedia 2020; and received NIPS Best Reviewer Award in 2017.
\end{IEEEbiography}

\par
\clearpage
\newpage
\section{Appendix}
\subsection{Decomposition of $p_{\theta^m}(\mathbf{y}\mid \mathbf{x}^m)$}
% \begin{equation}
% \begin{aligned}
% \label{eq:likelihood}
% & \log p_{\theta^m}(\mathbf{y}\mid \mathbf{x}^m) = \\
% & D_{KL}\left[q_{\theta^m}{(\boldsymbol{\mu}^m\mid \mathbf{x}^m)\| p_{\theta^m}(\boldsymbol{\mu}^m \mid \mathbf{x}^m, \mathbf{y})}\right] + \mathcal{L}(\mathbf{x}^m, \mathbf{y}),
% \end{aligned}
% \end{equation}
% where  $q_{\theta^m}(\boldsymbol{\mu}^m \mid \mathbf{x}^m)$ is a probabilistic encoder with parameters $\theta^m$, and $\mathcal{L}(\mathbf{x}^m, \mathbf{y})$ can be decomposed as 
% \begin{equation}
% \begin{aligned}
% \label{eq:loss_kl}
% \mathcal{L}(\mathbf{x}^m, \mathbf{y}) = & \mathbb{E}_{q_{\theta^m}(\boldsymbol{\mu}^m\mid \mathbf{x}^m)}\left[\log p(\mathbf{y}\mid \boldsymbol{\mu}^m)\right]\\
% & - D_{KL}\left[q_{\theta^m}(\boldsymbol{\mu}^m \mid \mathbf{x}^m)\| p_{\theta^m}(\boldsymbol{\mu}^m \mid \mathbf{x}^m))\right].
% \end{aligned}
% \end{equation}
\begin{equation}
\begin{aligned}
& \log p_{\theta^m}(\mathbf{y}\mid \mathbf{x}^m) =
\mathbb{E}_{q_{\theta^m}(\boldsymbol{\mu}\mid\mathbf{x}^m)}[\log p_{\theta^m}(\mathbf{y} \mid \mathbf{x}^m)] \\
&=\mathbb{E}_{q_{\theta^m}(\boldsymbol{\mu}\mid\mathbf{x}^m)}\big[ \log  \frac{p_{\theta^m}(\mathbf{y} \mid\boldsymbol{\mu}^m, \mathbf{x}^m)p(\boldsymbol{\mu}^m\mid \mathbf{x}^m)}{p_{\theta^m}(
\boldsymbol{\mu}^m \mid \mathbf{y}, \mathbf{x}^m)} \big]\\
&=\mathbb{E}_{q_{\theta^m}(\boldsymbol{\mu}\mid\mathbf{x}^m)}\\
& \quad \quad \big[ \log  \frac{p_{\theta^m}(\mathbf{y} \mid\boldsymbol{\mu}^m, \mathbf{x}^m)p(\boldsymbol{\mu}^m\mid \mathbf{x}^m)}{p_{\theta^m}(
\boldsymbol{\mu}^m \mid \mathbf{y}, \mathbf{x}^m)} \frac{q_{\theta^m}(
\boldsymbol{\mu}^m \mid \mathbf{x}^m)}{q_{\theta^m}(
\boldsymbol{\mu}^m \mid \mathbf{x}^m)}\big]\\
&=D_{KL}\left[q_{\theta^m}{(\boldsymbol{\mu}^m\mid \mathbf{x}^m)\| p_{\theta^m}(\boldsymbol{\mu}^m \mid \mathbf{x}^m, \mathbf{y})}\right] \\
& \quad \quad + \mathbb{E}_{q_{\theta^m}(\boldsymbol{\mu}^m\mid \mathbf{x}^m)}\left[\log p(\mathbf{y}\mid \boldsymbol{\mu}^m, \mathbf{x}^m)\right]\\
& \quad \quad - D_{KL}\left[q_{\theta^m}(\boldsymbol{\mu}^m \mid \mathbf{x}^m)\| p_{\theta^m}(\boldsymbol{\mu}^m \mid \mathbf{x}^m))\right].
\end{aligned}
\end{equation}
Then under the conditional independent assumption $p_{\theta^m}(\mathbf{y}\mid \boldsymbol{\mu}^m, \mathbf{x}^m)=p_{\theta^m}(\mathbf{y}\mid\boldsymbol{\mu}^m)$, Eq.~\ref{eq:likelihood} can be derived. $p_{\theta^m}(\mathbf{y}\mid \boldsymbol{\mu}^m, \mathbf{x}^m)=p_{\theta^m}(\mathbf{y}\mid\boldsymbol{\mu}^m)$ is a mild assumption, which means that after $\boldsymbol{\mu}^m$ is given, $\mathbf{x}^m$ cannot provide more information about $\mathbf{y}$.

\subsection{Proof Details}
\begin{proof}
Proof details of Proposition~\ref{prop1}.
\begin{equation}
\begin{aligned}
\nonumber
b_t & = \frac{b_t^o b_t^a + b_t^o u^a + b_t^a u^o}{\sum_{k=1}^Kb_k^ob_k^a+u^a+u^o-u^ou^a}\\
&\geq \frac{b_t^o b_t^a + b_t^o u^a + b_m^o u^o}{\sum_{k=1}^K b_m^o b_k^a+u^a+u^o-u^ou^a}\\
&\geq \frac{b_t^o b_t^a + b_t^o u^a + b_t^o u^o}{b_m^o\sum_{k=1}^K b_k^a+u^a+u^o-u^ou^a}\\
&\geq \frac{b_t^o( b_t^a + u^a + u^o)}{b_m^o(1-u^ a)+u^a+u^o-u^ou^a}\\
&\geq \frac{b_t^o( b_t^a + u^a + u^o)}{b_m^o+u^a+u^o}\geq b_t^o.
\end{aligned}
\end{equation}
\end{proof}
\begin{proof}
Proof details of Proposition~\ref{prop2}, where $b_m^a$ is the largest belief mass in $\{b_k^a\}_{k=1}^K$.
\begin{equation}
\begin{aligned}
\nonumber
b_t^o-b_t&=b_t^o - \frac{b_t^o b_t^a + b_t^o u^a + b_t^a u^o}{\sum_{k=1}^Kb_k^ob_k^a+u^a+u^o-u^ou^a}\\
&\leq b_t^o - \frac{b_t^o b_t^a + b_t^o u^a + b_t^a u^o}{b_m^a\sum_{k=1}^Kb_k^o+u^a+u^o-u^ou^a}\\
&\leq b_t^o - \frac{b_t^o u^a}{b_m^a+u^a+u^o-u^au^o}\\
&\leq b_t^o - \frac{b_t^o u^a}{1+u^o-u^au^o}=b_t^o\frac{1+u^o}{\frac{1}{1-u^a}+u^o}.
% &=\frac{b_t^o+b_t^ou^o-b_t^ou^au^o-b_t^o u^a}{1+u^o-u^au^o}\\
\end{aligned}
\end{equation}
\end{proof}
\textcolor{black}{\subsection{Decomposition details of $\mathcal{L}(\mathbf{x}^m, \mathbf{y})$}}
\textcolor{black}{First, we have the following property about the integral of the Dirichlet distribution. If $\boldsymbol{\mu}=[\mu_1,\cdots,\mu_K]\sim Dir(\boldsymbol{\mu\mid \boldsymbol{\alpha}})$, then we have $\mathbb{E}_{Dir(\boldsymbol{\mu\mid \boldsymbol{\alpha}})}\left[log(\mu_k)\right]=\psi(\alpha_k)-\psi(S)$.
Then the following equation can be derived:
\begin{equation}
\begin{aligned}
& \mathbb{E}_{q_{\theta}(\boldsymbol{\mu}\mid \mathbf{x})} \left[\log p(\mathbf{y}\mid \boldsymbol{\mu})\right] = \mathbb{E}_{Dir(\boldsymbol{\mu}\mid \boldsymbol{\alpha})}\left[\log p(\mathbf{y}\mid \boldsymbol{\mu})\right] \\
& = \mathbb{E}_{Dir(\boldsymbol{\mu}\mid \boldsymbol{\alpha})}\left[\log \prod_{k=1}^K \mu_k^{y_k} \right] \\
& = \int\left[\sum_{k=1}^{K} y_{k} \log \left(\mu_{k}\right)\right] \frac{1}{B\left(\boldsymbol{\alpha}\right)} \prod_{k=1}^{K} \mu_{k}^{\alpha_{k}-1} d \boldsymbol{\mu}\\
&=\sum_{k=1}^{K}y_{k}\left[\int \log \left(\mu_{k}\right) \frac{1}{B\left(\boldsymbol{\alpha}\right)} \prod_{k=1}^{K} \mu_{k}^{\alpha_{k}-1} d \boldsymbol{\mu}\right]\\
&= \sum_{k=1}^{K} y_{k}\left(\psi\left(\alpha_{k}\right) - \psi\left(S\right)\right).
\end{aligned}
\end{equation}
\subsection{Decomposition details of Eq.~\ref{eq:kl_d}}
First, we consider Kullback-Leibler divergence between two Dirichlet distributions $Dir(\boldsymbol{\mu}\mid\boldsymbol{\alpha})$ and $Dir(\boldsymbol{\mu}\mid\boldsymbol{\beta})$. We have the following property. The divergence between these two Dirichlet distributions is
\begin{equation}
\begin{aligned}
\label{eq:kl_proof}
& KL\left(Dir(\boldsymbol{\mu}\mid\boldsymbol{\alpha}) \| Dir(\boldsymbol{\mu}\mid\boldsymbol{\beta})\right) \\
&=\log \frac{\Gamma\left(\sum_{k=1}^{K} \alpha_{k}\right)}{\prod_{k=1}^{K} \Gamma\left(\alpha_{k}\right)} \frac{\prod_{k=1}^{K} \Gamma\left(\beta_{k}\right)}{\Gamma\left(\sum_{k=1}^{K} \beta_{k}\right)}\\
&+\sum_{k=1}^{K}\left(\alpha_{k}-\beta_{k}\right)\left(\psi\left(\sum_{k=1}^K\alpha_{k}\right)-\psi\left(\sum_{k=1}^K\beta_{k}\right)\right).
\end{aligned}
\end{equation}
Then Eq~\ref{eq:kl_d} is a special case of Eq~\ref{eq:kl_proof}.
}
\subsection{Qualitative Experimental Results on End-to-End Experiment}
{We present typical examples (from SUN RGB-D, NYUD Depth V2 and UMPC-Food101 datasets) with the highest and lowest uncertainty respectively. For SUN RGB-D and NYUD Depth V2 datasets, we show 18 samples with the largest or smallest uncertainty where the proposed ETMC algorithm is employed in Fig.~\ref{fig:ETMC_SUN_vis_more_confident}, Fig.~\ref{fig:ETMC_SUN_vis_more_uncertain}, Fig.~\ref{fig:ETMC_NYUD_vis_more_confident} and  Fig.~\ref{fig:ETMC_NYUD_vis_more_uncertain} respectively. Meanwhile, in Table~\ref{tab:examples1} and Table~\ref{tab:examples2}, we show the highest and lowest prediction samples in UMPC-Food101. It is observed that some samples from SUN RGB-D and NYUD Depth V2 have a weak intuitive correlation with the 
ground-truth class label, hence it is also difficult for our model to correctly classify them, e.g., samples in Fig.~\ref{fig:ETMC_SUN_vis_more_uncertain} and Fig.~\ref{fig:ETMC_NYUD_vis_more_uncertain}. In contrast, there are some samples that are easy to be classified intuitively, and hence our model is also confident for the predictions in Fig.~\ref{fig:ETMC_SUN_vis_more_confident} and Fig.~\ref{fig:ETMC_NYUD_vis_more_confident}. 
Since the UMPC-Food101 dataset is scraped from web pages, for a few samples the corresponding text descriptions are meaningless (\emph{e.g.}, the 1st one in Table~\ref{tab:examples1}). Meanwhile, some images are also quite challenging to classify (\emph{e.g.}, the 2nd and the 4th in Table~\ref{tab:examples1}). These samples can be considered as out-of-distribution ones, since it is difficult to obtain evidence related to classification. As expected, these samples are assigned with high uncertainty (top 5) by our model. In contrast, Table~\ref{tab:examples2} presents 5 samples with lowest uncertainty. We can find that for these samples the labels are usually explicitly mentioned in the text and the images also clearly show the characteristics of `macarons'.} %Moreover, the labels of the most confident samples are macarons, which shows that the model has higher confidence in categories that are easier to distinguish. Correspondingly, in Table~\ref{tab:examples1}, we can find that the most uncertain samples have different categories and are difficult to distinguish.
\begin{figure}[!htbp]
\centering
\subfigure[TMC SUN]{
\centering
\begin{minipage}[t]{0.48\linewidth}
\centering
\includegraphics[width=0.9\linewidth,height=0.744\linewidth]{./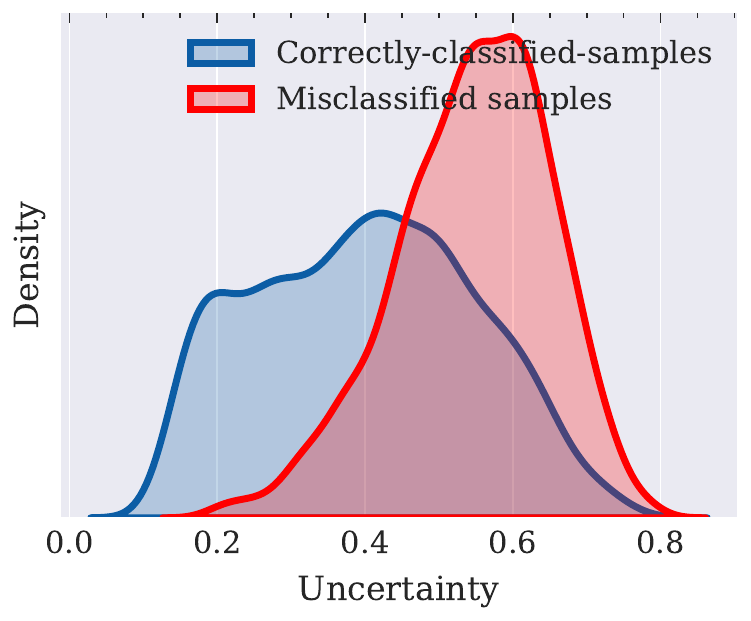}
\centering
\end{minipage}}
\subfigure[TMC NYUD]{
\begin{minipage}[t]{0.48\linewidth}
\centering
\includegraphics[width=0.9\linewidth,height=0.744\linewidth]{./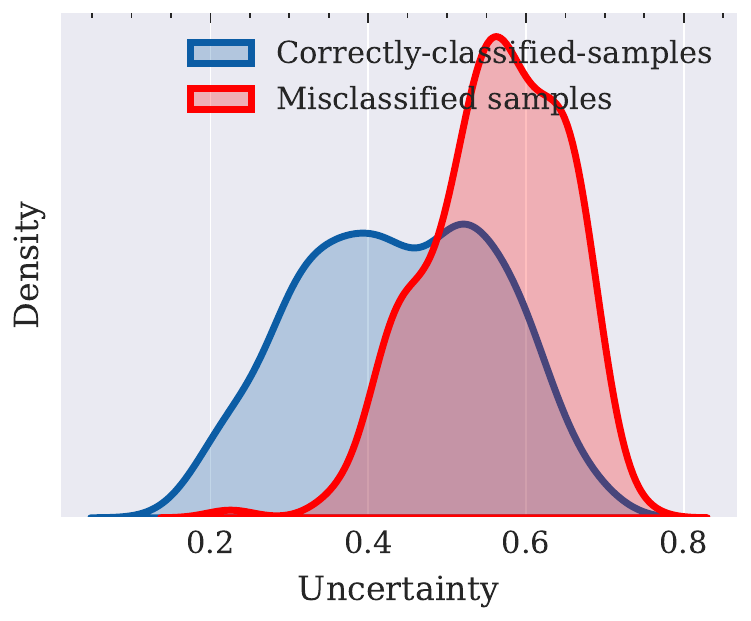}
\end{minipage}}
\subfigure[ETMC SUN]{
\centering
\begin{minipage}[t]{0.48\linewidth}
\centering
\includegraphics[width=0.9\linewidth,height=0.744\linewidth]{./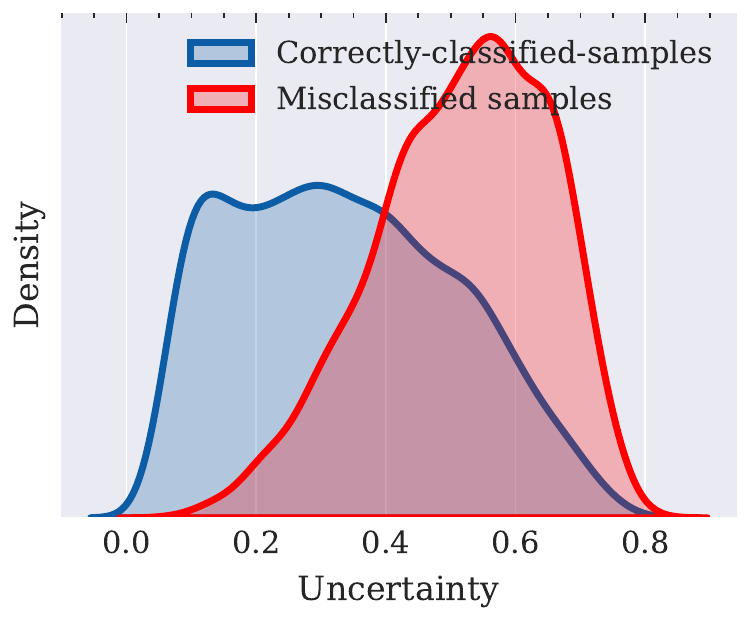}
\centering
\end{minipage}}
\subfigure[ETMC NYUD]{
\begin{minipage}[t]{0.48\linewidth}
\centering
\includegraphics[width=0.9\linewidth,height=0.744\linewidth]{./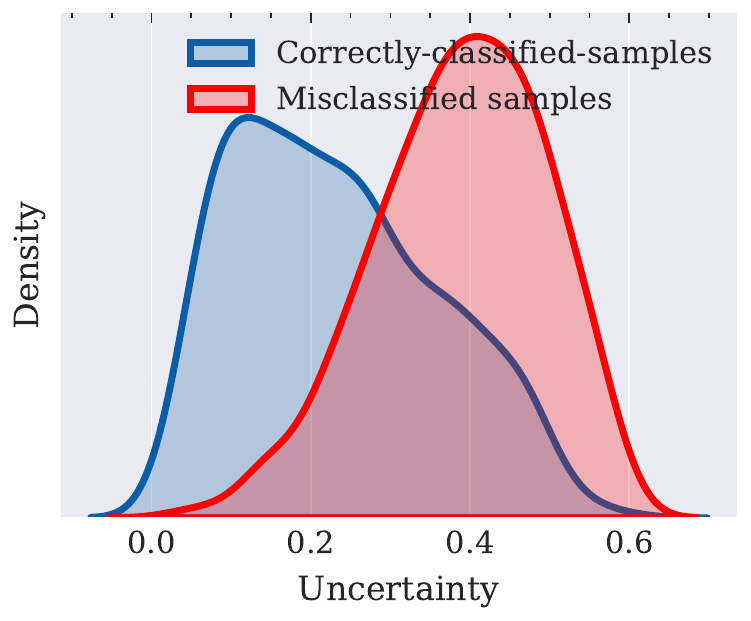}
\end{minipage}}
\caption{Uncertainty density of correctly classified samples and misclassified samples on RGB-D scene recognition test data.}
\label{fig:classification_density}
\end{figure}

To analyze the uncertainty of all test samples, we present the uncertainty density of the test samples according to classification results in Fig.~\ref{fig:classification_density}. It can be observed that the uncertainties of the correctly-classified samples are relatively low, while the uncertainties of the missclassified samples are relatively high.

\begin{figure*}[!t]
\centering
\subfigure[\normalsize{Confident sample 1 (Furniture store)}]{
\centering
\begin{minipage}[t]{0.32\linewidth}
\centering
\includegraphics[width=0.9\linewidth,height=0.45\linewidth]{./fig/most_confident/SUNRGBD_ETMC/confident_image-1.jpg}
\centering
\end{minipage}}
\subfigure[\normalsize{Confident sample 2 (Bathroom)}]{
\centering
\begin{minipage}[t]{0.32\linewidth}
\centering
\includegraphics[width=0.9\linewidth,height=0.45\linewidth]{./fig/most_confident/SUNRGBD_ETMC/confident_image-2.jpg}
\centering
\end{minipage}}
\subfigure[\normalsize{Confident sample 3 (Furniture store)}]{
\centering
\begin{minipage}[t]{0.32\linewidth}
\centering
\includegraphics[width=0.9\linewidth,height=0.45\linewidth]{./fig/most_confident/SUNRGBD_ETMC/confident_image-3.jpg}
\centering
\end{minipage}}
\subfigure[\normalsize{Confident sample 4 (Bathroom)}]{
\centering
\begin{minipage}[t]{0.32\linewidth}
\centering
\includegraphics[width=0.9\linewidth,height=0.45\linewidth]{./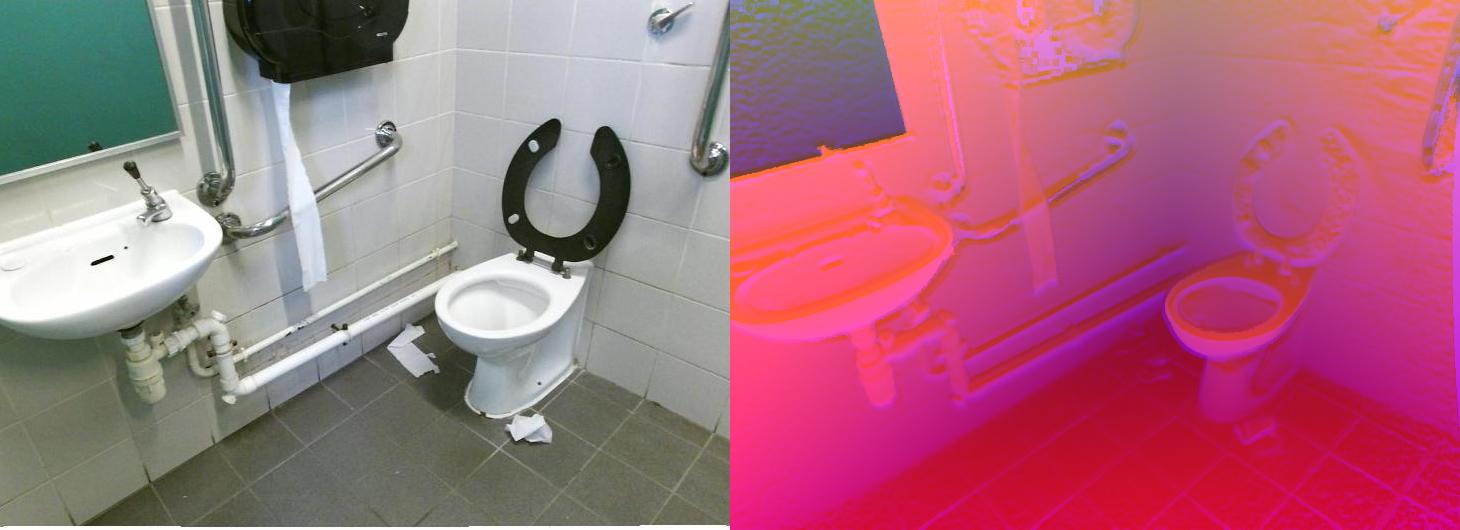}
\centering
\end{minipage}}
\subfigure[\normalsize{Confident sample 5 (Furniture store)}]{
\centering
\begin{minipage}[t]{0.32\linewidth}
\centering
\includegraphics[width=0.9\linewidth,height=0.45\linewidth]{./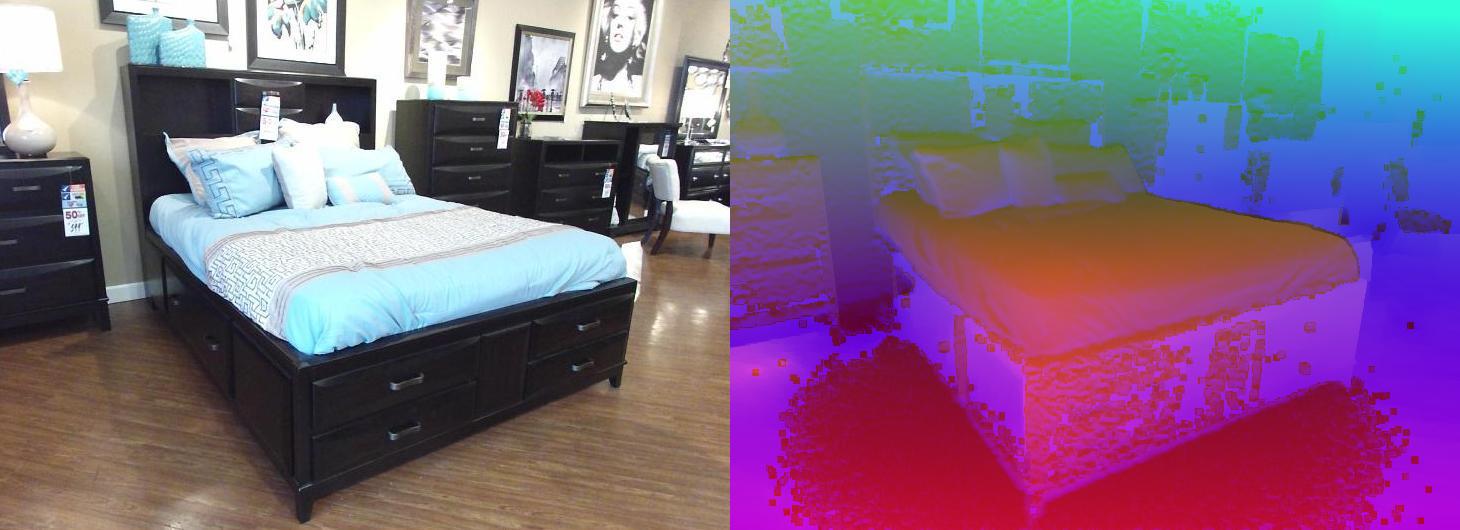}
\centering
\end{minipage}}
\subfigure[\normalsize{Confident sample 6 (Furniture store)}]{
\centering
\begin{minipage}[t]{0.32\linewidth}
\centering
\includegraphics[width=0.9\linewidth,height=0.45\linewidth]{./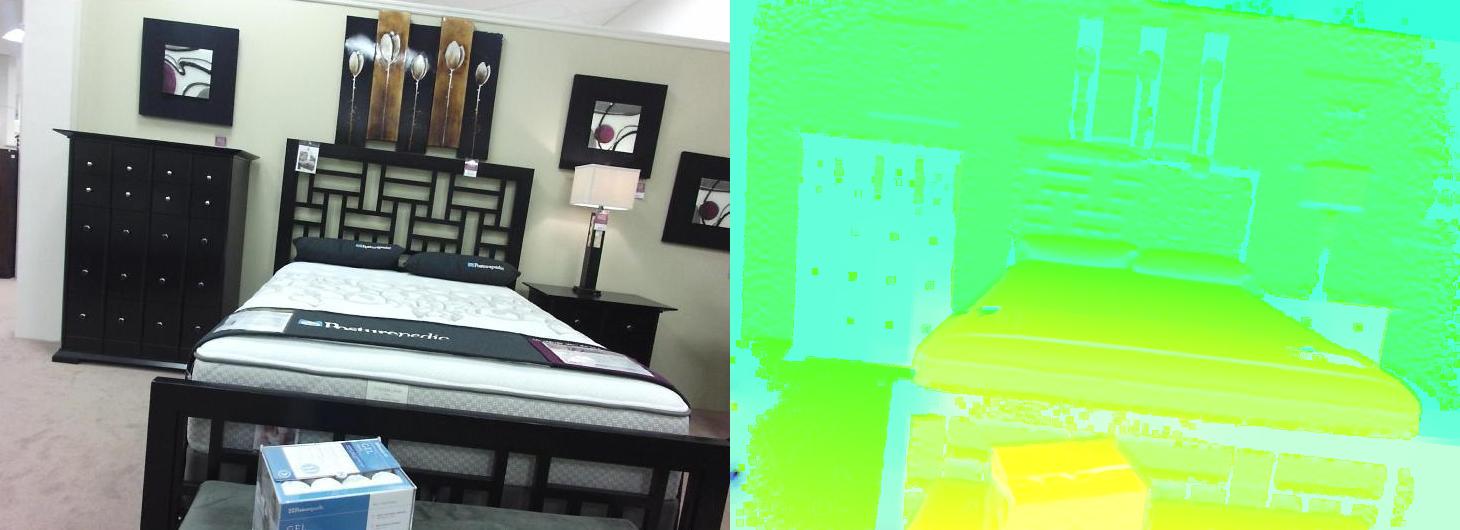}
\centering
\end{minipage}}
\subfigure[\normalsize{Confident sample 7 (Furniture store)}]{
\centering
\begin{minipage}[t]{0.32\linewidth}
\centering
\includegraphics[width=0.9\linewidth,height=0.45\linewidth]{./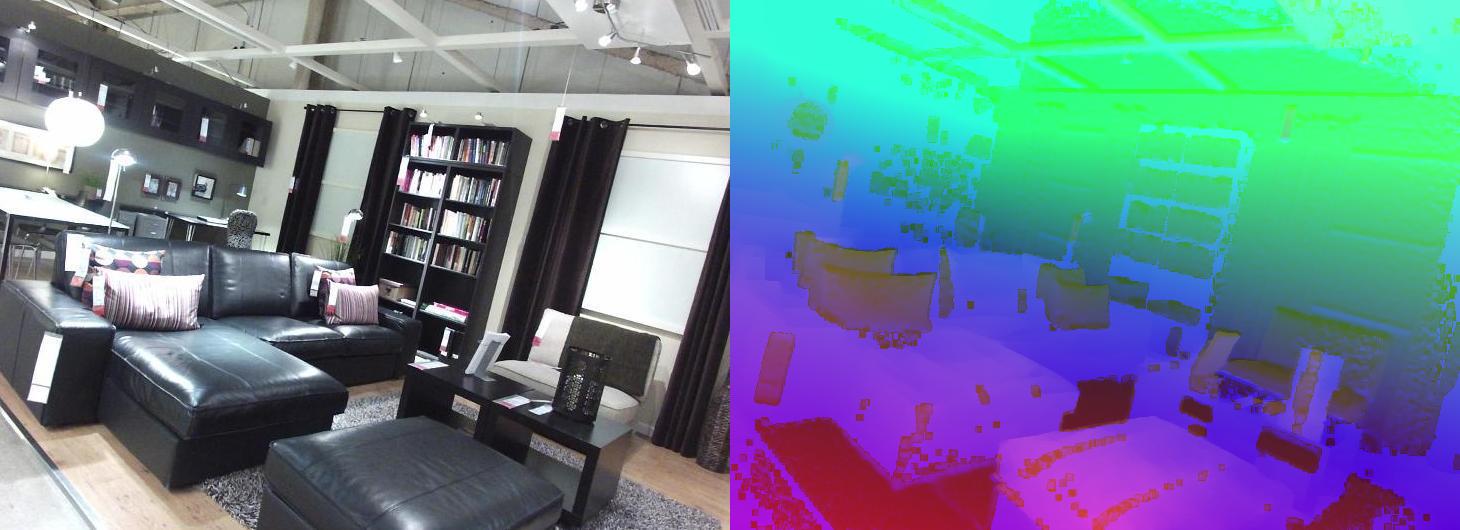}
\centering
\end{minipage}}
\subfigure[\normalsize{Confident sample 8 (Furniture store)}]{
\centering
\begin{minipage}[t]{0.32\linewidth}
\centering
\includegraphics[width=0.9\linewidth,height=0.45\linewidth]{./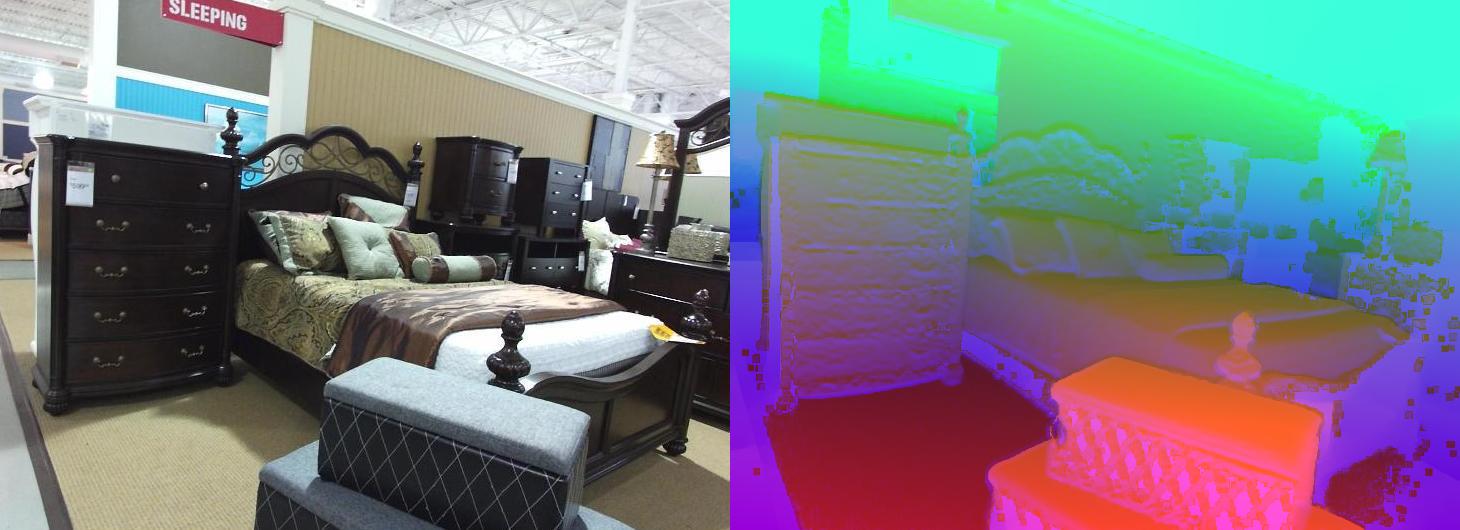}
\centering
\end{minipage}}
\subfigure[\normalsize{Confident sample 9 (Bathroom)}]{
\centering
\begin{minipage}[t]{0.32\linewidth}
\centering
\includegraphics[width=0.9\linewidth,height=0.45\linewidth]{./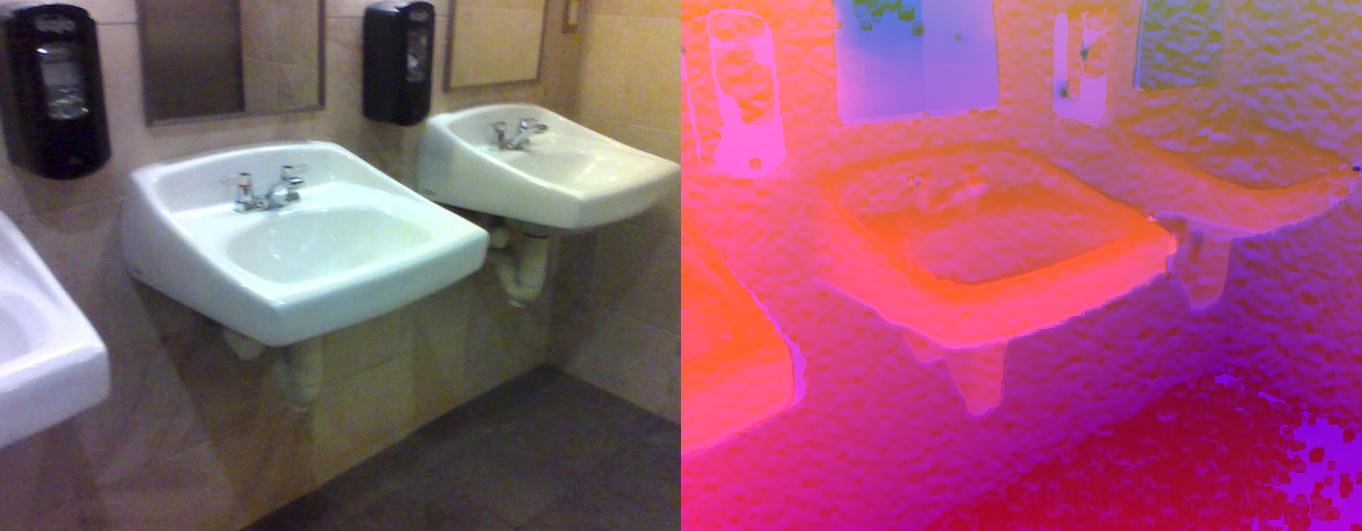}
\centering
\end{minipage}}
\subfigure[\normalsize{Confident sample 10 (Bathroom)}]{
\centering
\begin{minipage}[t]{0.32\linewidth}
\centering
\includegraphics[width=0.9\linewidth,height=0.45\linewidth]{./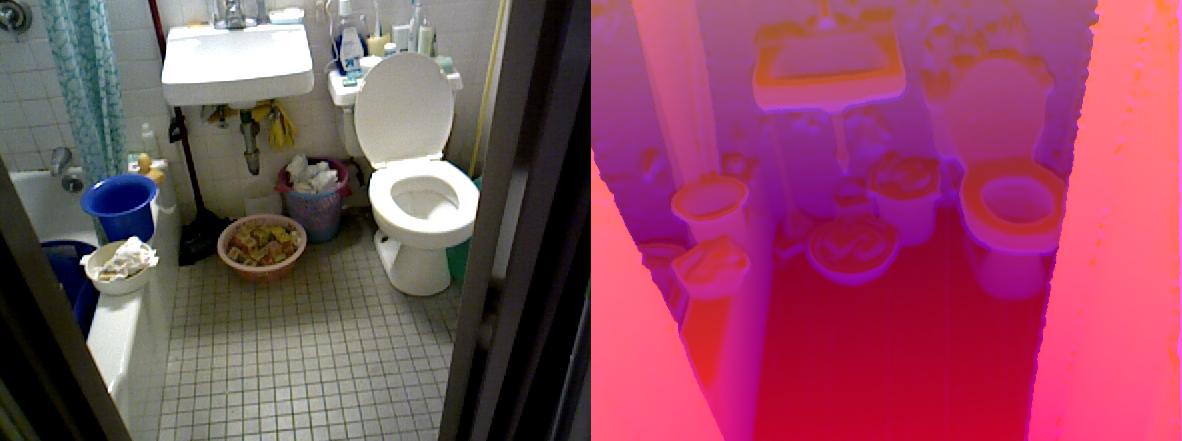}
\centering
\end{minipage}}
\subfigure[\normalsize{Confident sample 11 (Furniture store)}]{
\centering
\begin{minipage}[t]{0.32\linewidth}
\centering
\includegraphics[width=0.9\linewidth,height=0.45\linewidth]{./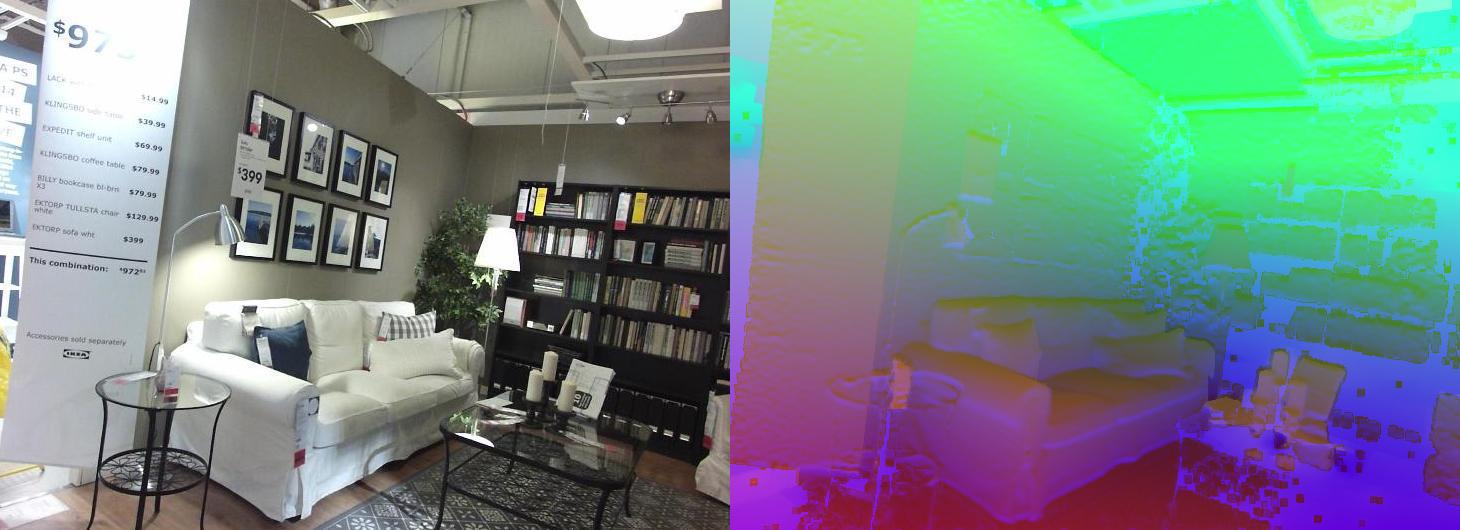}
\centering
\end{minipage}}
\subfigure[\normalsize{Confident sample 12 (Furniture store)}]{
\centering
\begin{minipage}[t]{0.32\linewidth}
\centering
\includegraphics[width=0.9\linewidth,height=0.45\linewidth]{./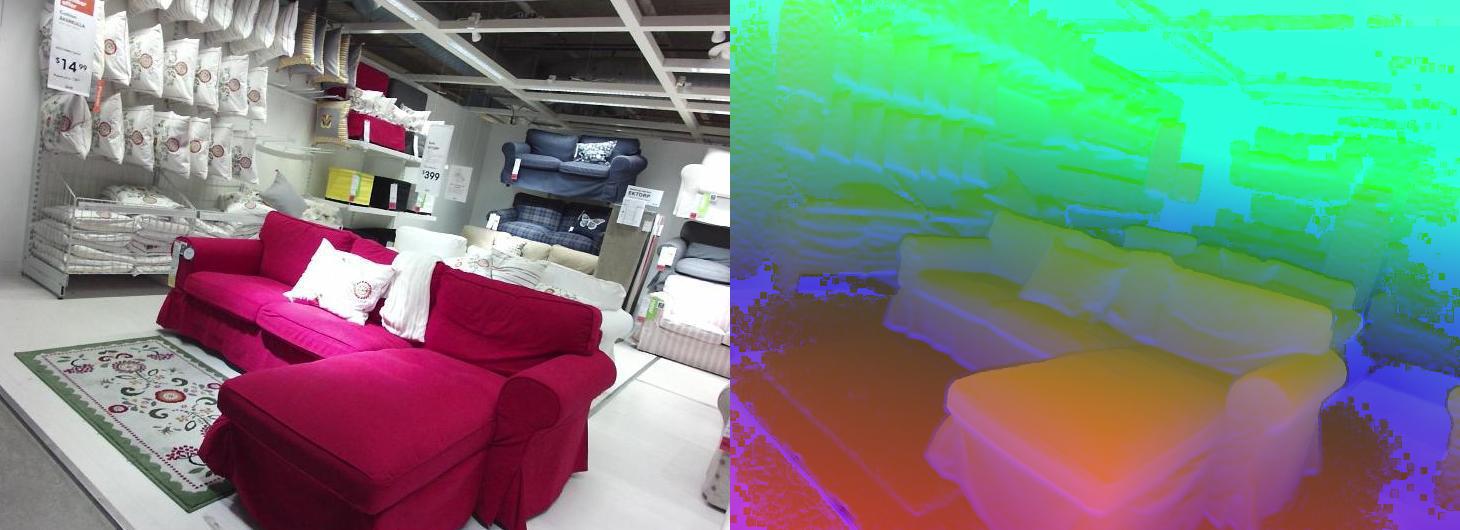}
\centering
\end{minipage}}
\subfigure[\normalsize{Confident sample 13 (Furniture store)}]{
\centering
\begin{minipage}[t]{0.32\linewidth}
\centering
\includegraphics[width=0.9\linewidth,height=0.45\linewidth]{./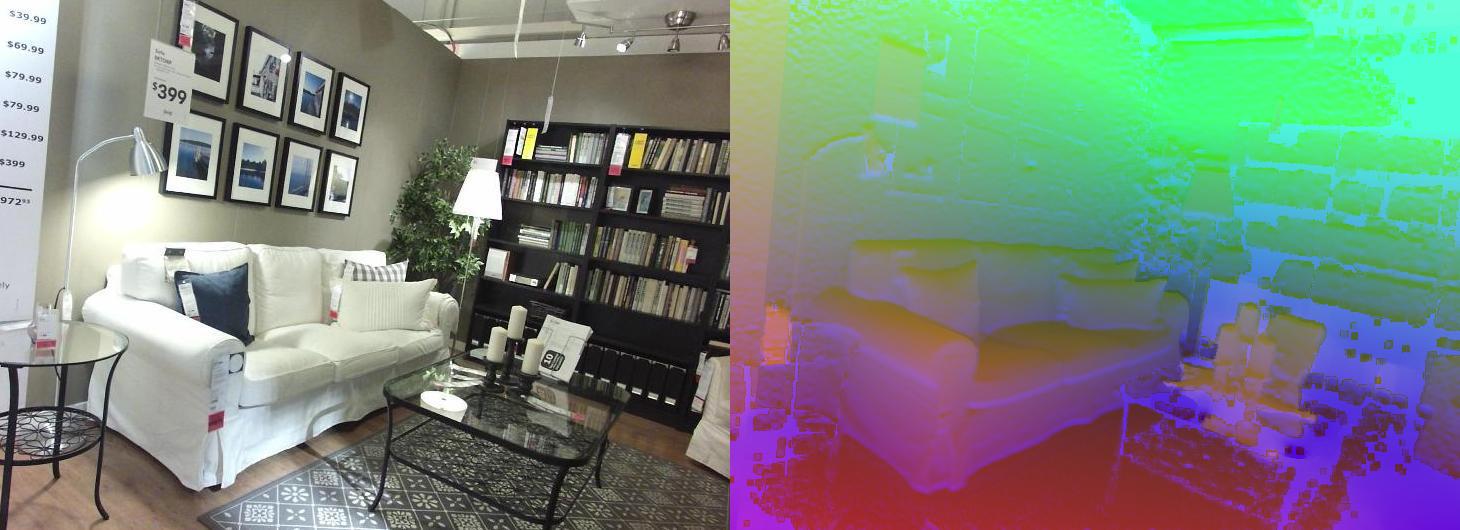}
\centering
\end{minipage}}
\subfigure[\normalsize{Confident sample 14 (Furniture store)}]{
\centering
\begin{minipage}[t]{0.32\linewidth}
\centering
\includegraphics[width=0.9\linewidth,height=0.45\linewidth]{./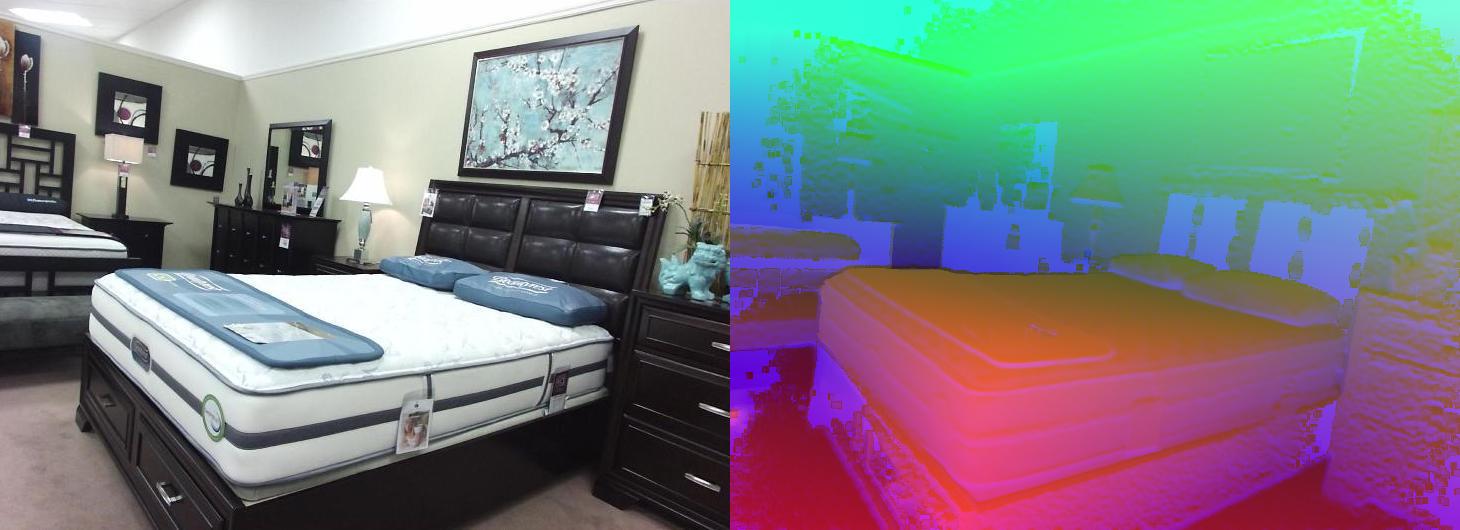}
\centering
\end{minipage}}
\subfigure[\normalsize{Confident sample 15 (Furniture store)}]{
\centering
\begin{minipage}[t]{0.32\linewidth}
\centering
\includegraphics[width=0.9\linewidth,height=0.45\linewidth]{./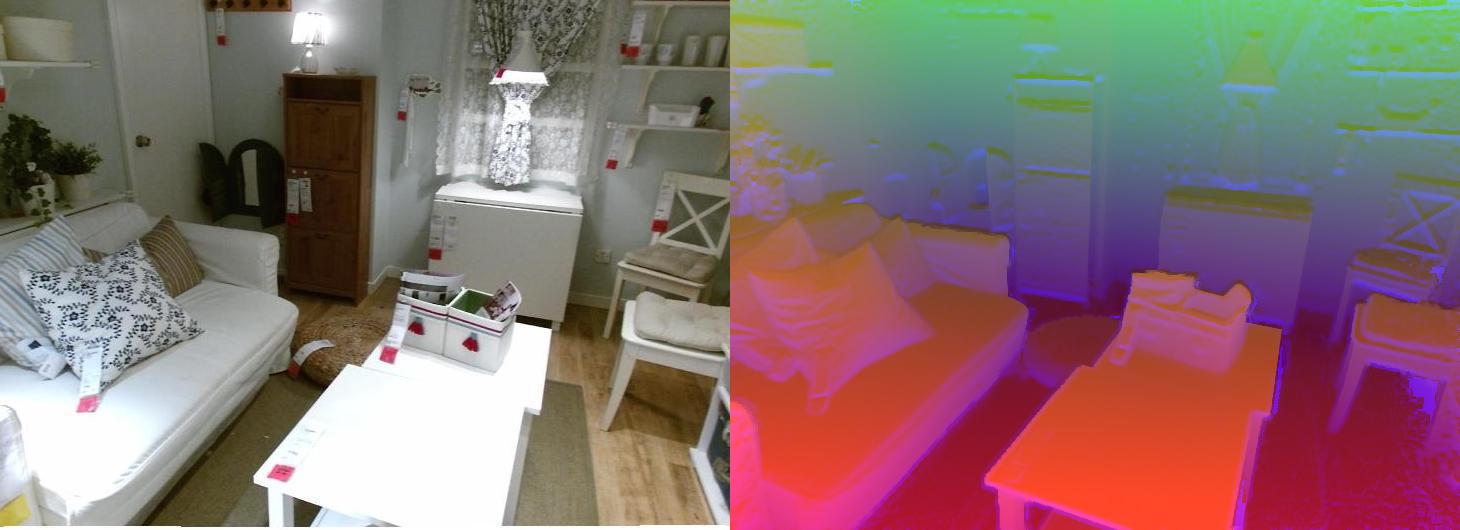}
\centering
\end{minipage}}
\subfigure[\normalsize{Confident sample 16 (Furniture store)}]{
\centering
\begin{minipage}[t]{0.32\linewidth}
\centering
\includegraphics[width=0.9\linewidth,height=0.45\linewidth]{./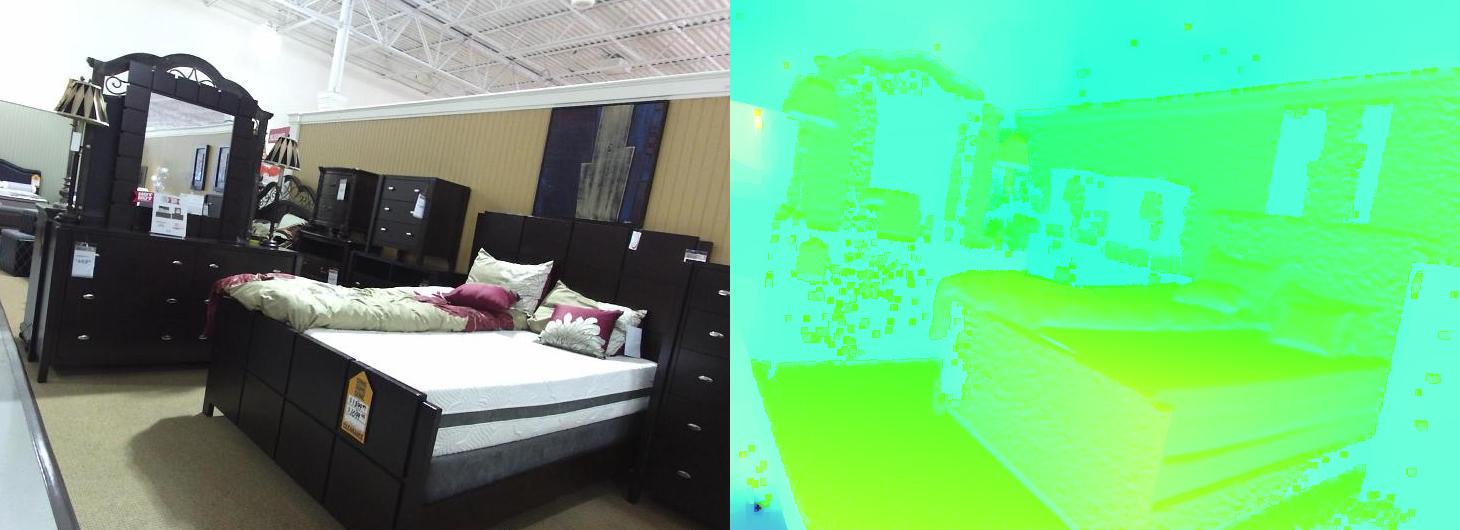}
\centering
\end{minipage}}
\subfigure[\normalsize{Confident sample 17 (Furniture store)}]{
\centering
\begin{minipage}[t]{0.32\linewidth}
\centering
\includegraphics[width=0.9\linewidth,height=0.45\linewidth]{./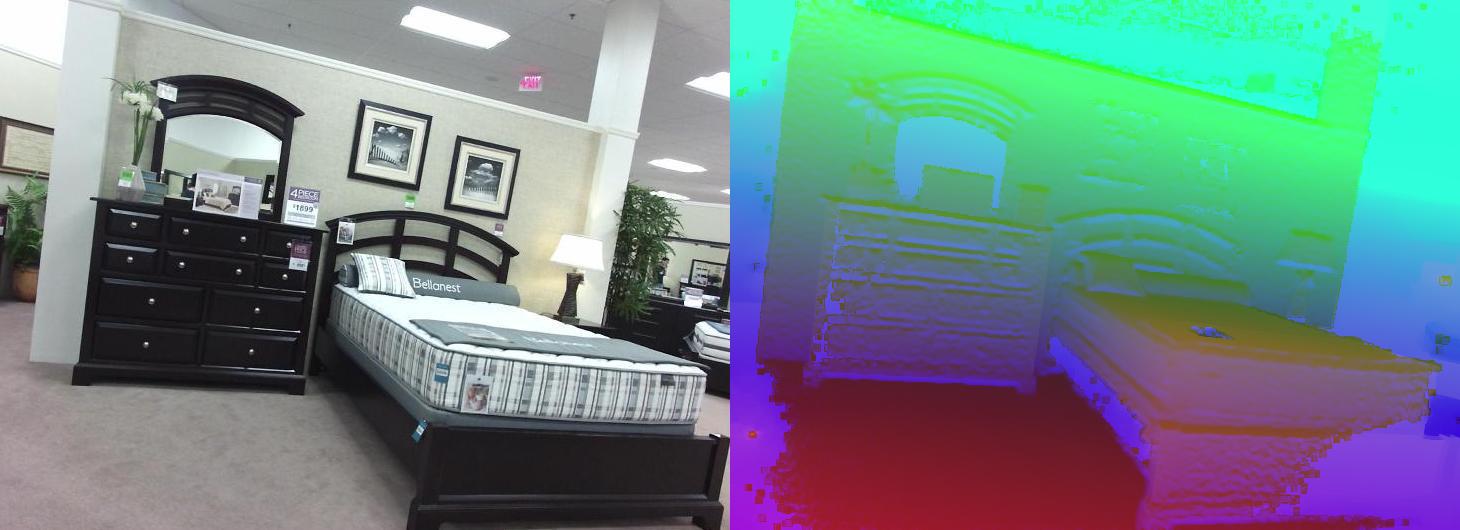}
\centering
\end{minipage}}
\subfigure[\normalsize{Confident sample 18 (Bathroom)}]{
\centering
\begin{minipage}[t]{0.32\linewidth}
\centering
\includegraphics[width=0.9\linewidth,height=0.45\linewidth]{./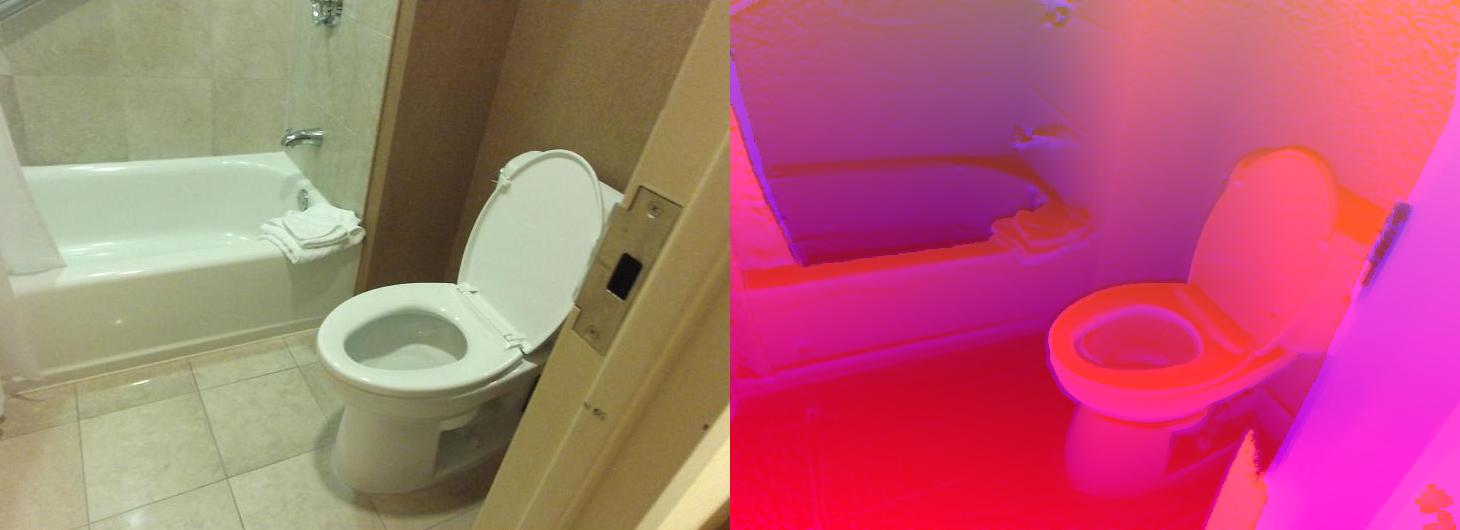}
\centering
\end{minipage}}
%\subfigure[\normalsize{Confident sample 19 (Furniture store)}]{
%\centering
%\begin{minipage}[t]{0.32\linewidth}
%\centering
%\includegraphics[width=0.9\linewidth,height=0.45\linewidth]{./}
%\centering
%\end{minipage}}
%\subfigure[\normalsize{Confident sample 20 (Furniture store)}]{
%\centering
%\begin{minipage}[t]{0.32\linewidth}
%\centering
%\includegraphics[width=0.9\linewidth,height=0.45\linewidth]{./}
%\centering
%\end{minipage}}
\caption{Examples with the lowest (top 18) uncertainty using the ETMC algorithm on SUN RGB-D test data. The ground-truth labels are in brackets. All the above examples are correctly classified.}
\label{fig:ETMC_SUN_vis_more_confident}
\end{figure*}

\begin{figure*}[!t]
\centering
\subfigure[\textcolor{red}{\normalsize{Uncertain sample 1 (Discussion area)}}]{
\centering
\begin{minipage}[t]{0.32\linewidth}
\centering
\includegraphics[width=0.9\linewidth,height=0.45\linewidth]{./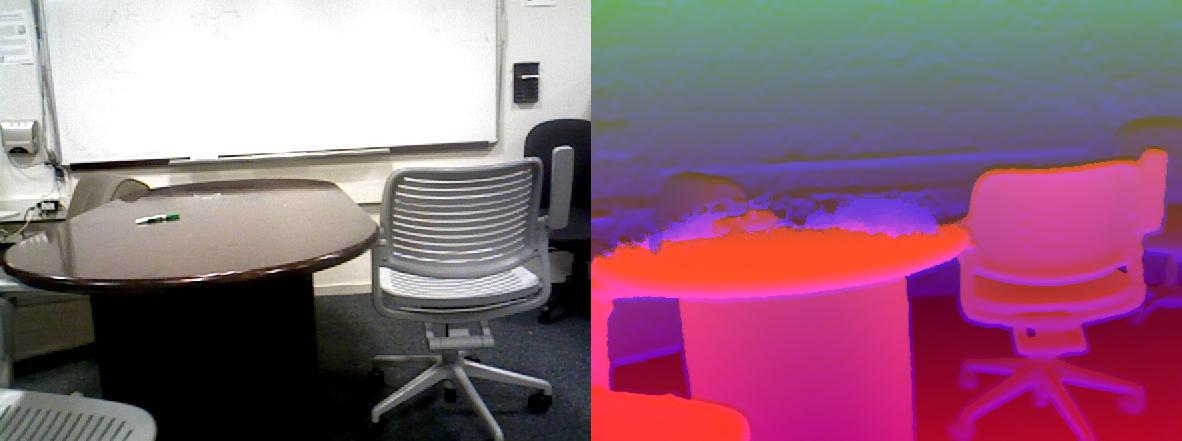}
\centering
\end{minipage}}
\subfigure[\textcolor{red}{\normalsize{Uncertain sample 2 (Office)}}]{
\centering
\begin{minipage}[t]{0.32\linewidth}
\centering
\includegraphics[width=0.9\linewidth,height=0.45\linewidth]{./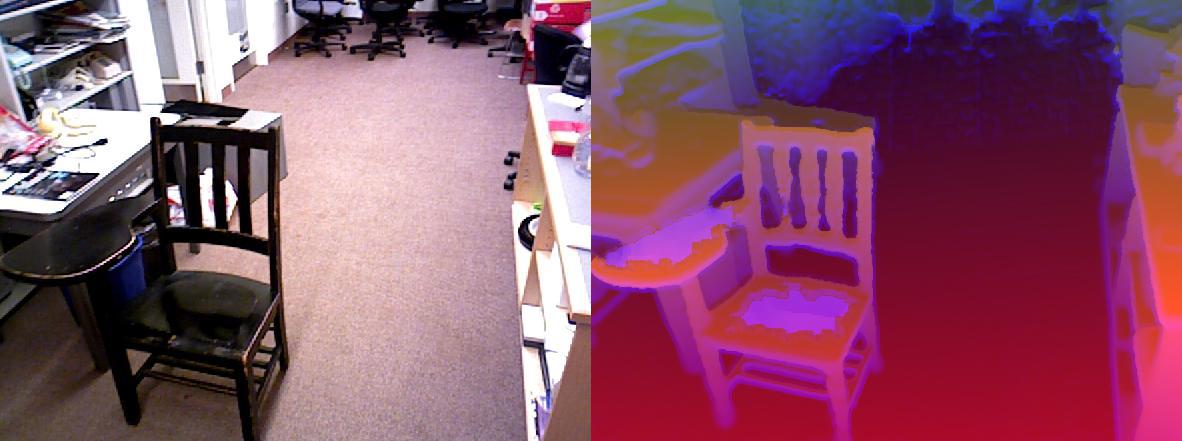}
\centering
\end{minipage}}
\subfigure[\normalsize{Uncertain sample 3 (Rest space)}]{
\centering
\begin{minipage}[t]{0.32\linewidth}
\centering
\includegraphics[width=0.9\linewidth,height=0.45\linewidth]{./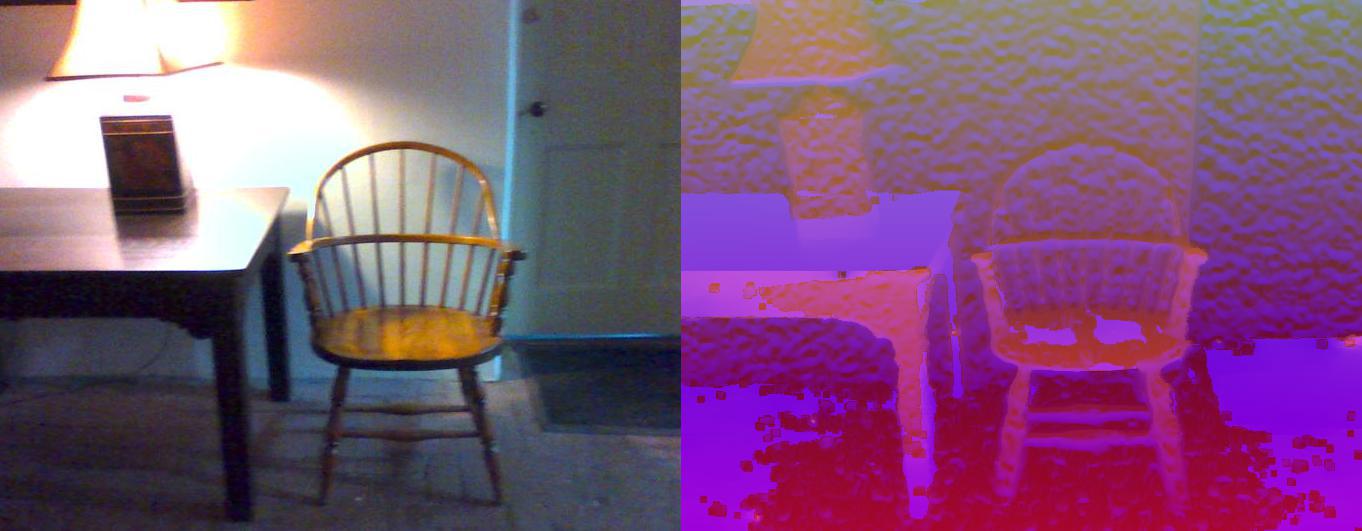}
\centering
\end{minipage}}
\subfigure[\textcolor{red}{\normalsize{Uncertain sample 4 (Dining area)}}]{
\centering
\begin{minipage}[t]{0.32\linewidth}
\centering
\includegraphics[width=0.9\linewidth,height=0.45\linewidth]{./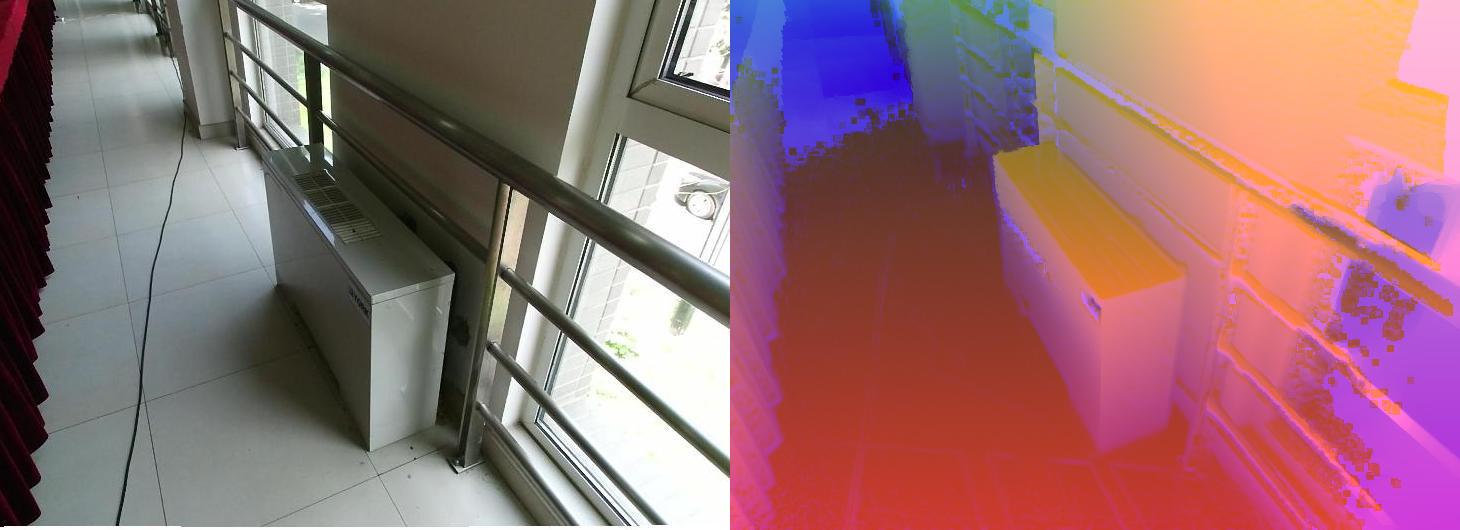}
\centering
\end{minipage}}
\subfigure[\textcolor{red}{\normalsize{Uncertain sample 5 (Study space)}}]{
\centering
\begin{minipage}[t]{0.32\linewidth}
\centering
\includegraphics[width=0.9\linewidth,height=0.45\linewidth]{./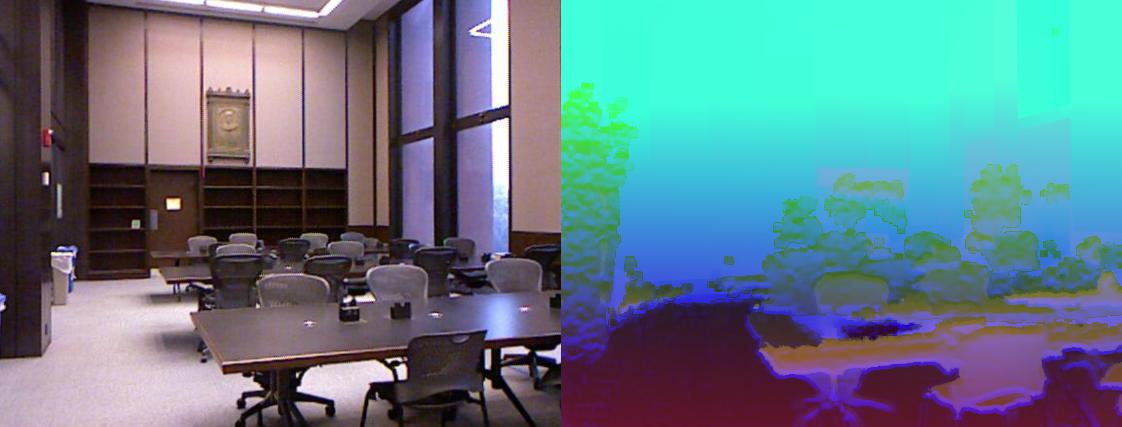}
\centering
\end{minipage}}
\subfigure[\textcolor{red}{\normalsize{Uncertain sample 6 (Bedroom)}}]{
\centering
\begin{minipage}[t]{0.32\linewidth}
\centering
\includegraphics[width=0.9\linewidth,height=0.45\linewidth]{./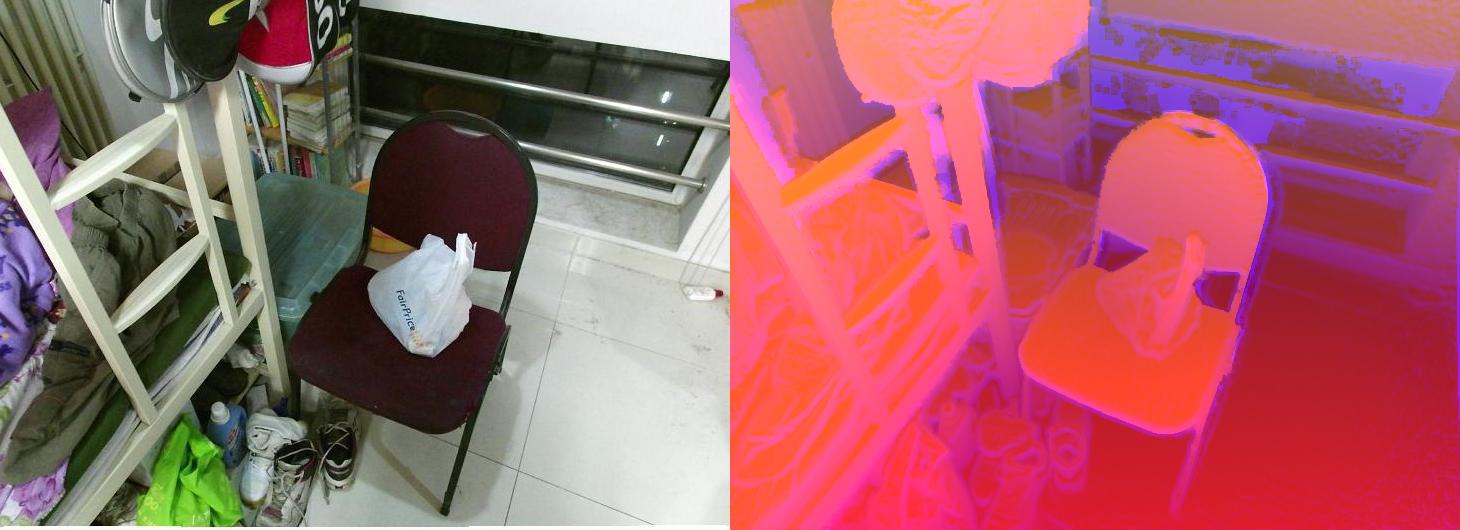}
\centering
\end{minipage}}
\subfigure[\textcolor{red}{\normalsize{Uncertain sample 7 (Office)}}]{
\centering
\begin{minipage}[t]{0.32\linewidth}
\centering
\includegraphics[width=0.9\linewidth,height=0.45\linewidth]{./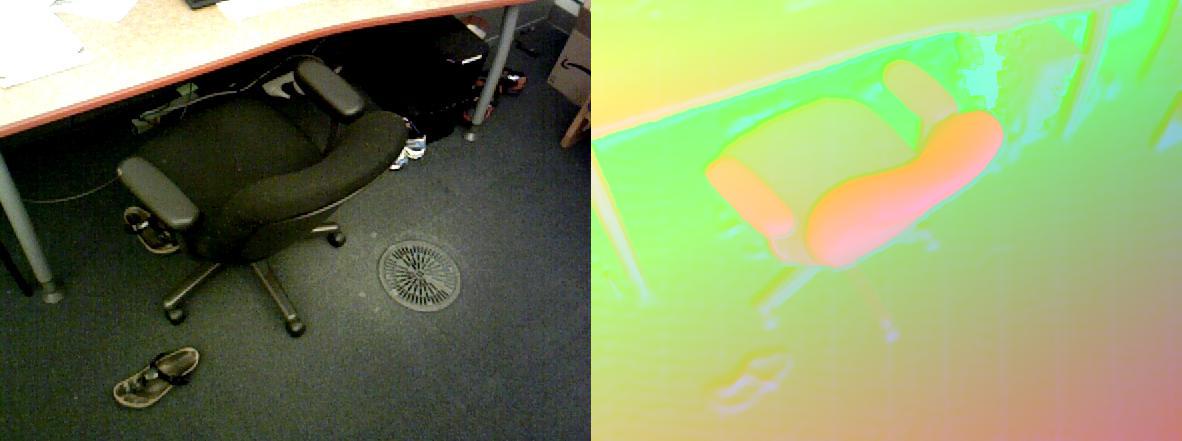}
\centering
\end{minipage}}
\subfigure[\textcolor{red}{\normalsize{Uncertain sample 8 (Conference room)}}]{
\centering
\begin{minipage}[t]{0.32\linewidth}
\centering
\includegraphics[width=0.9\linewidth,height=0.45\linewidth]{./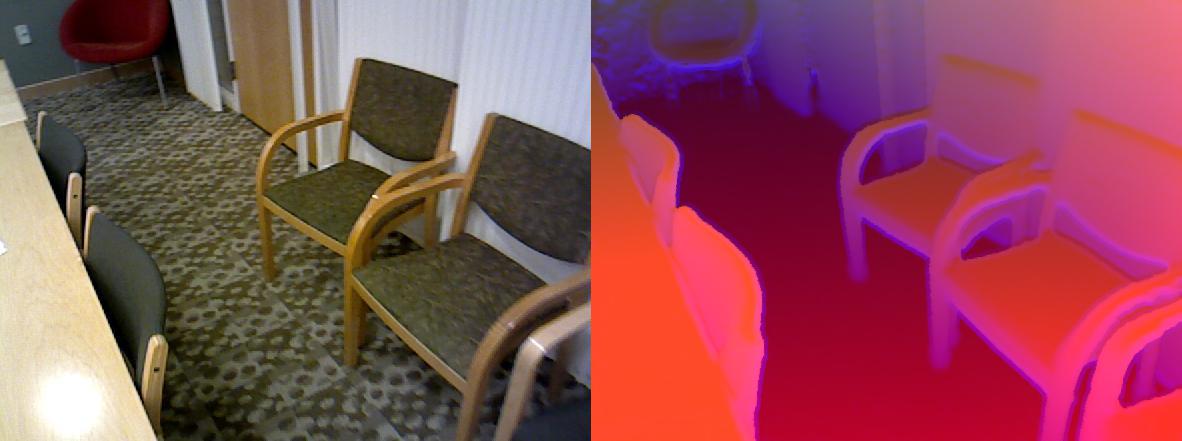}
\centering
\end{minipage}}
\subfigure[\textcolor{red}{\normalsize{Uncertain sample 9 (Classroom)}}]{
\centering
\begin{minipage}[t]{0.32\linewidth}
\centering
\includegraphics[width=0.9\linewidth,height=0.45\linewidth]{./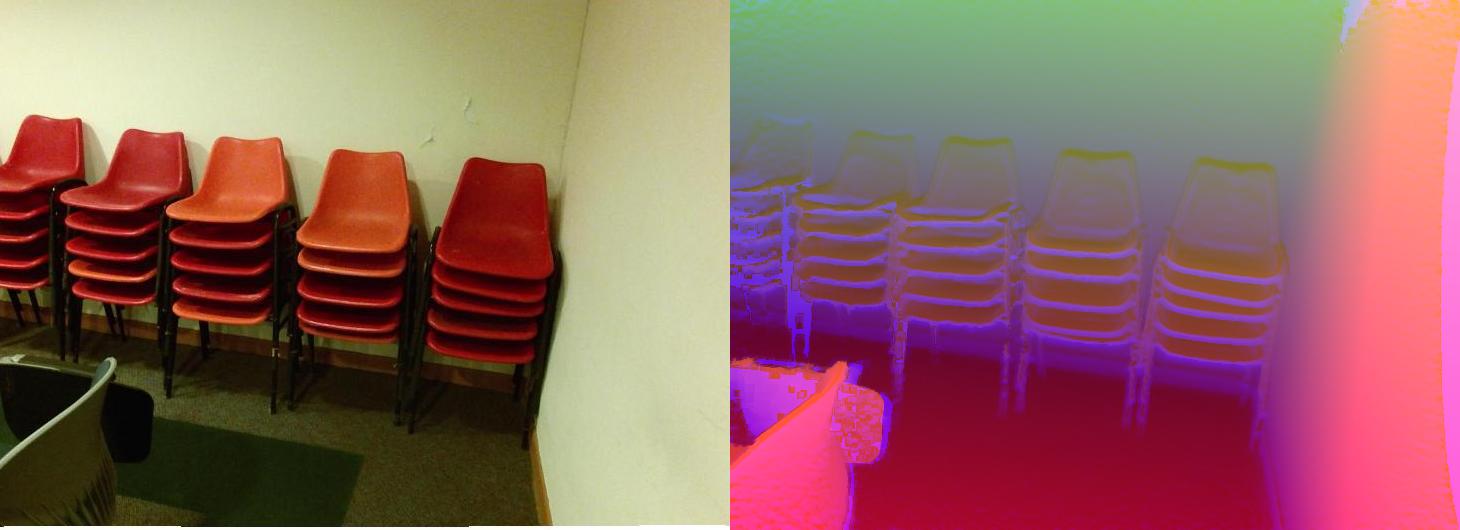}
\centering
\end{minipage}}
\subfigure[\textcolor{red}{\normalsize{Uncertain sample 10 (Conference room)}}]{
\centering
\begin{minipage}[t]{0.32\linewidth}
\centering
\includegraphics[width=0.9\linewidth,height=0.45\linewidth]{./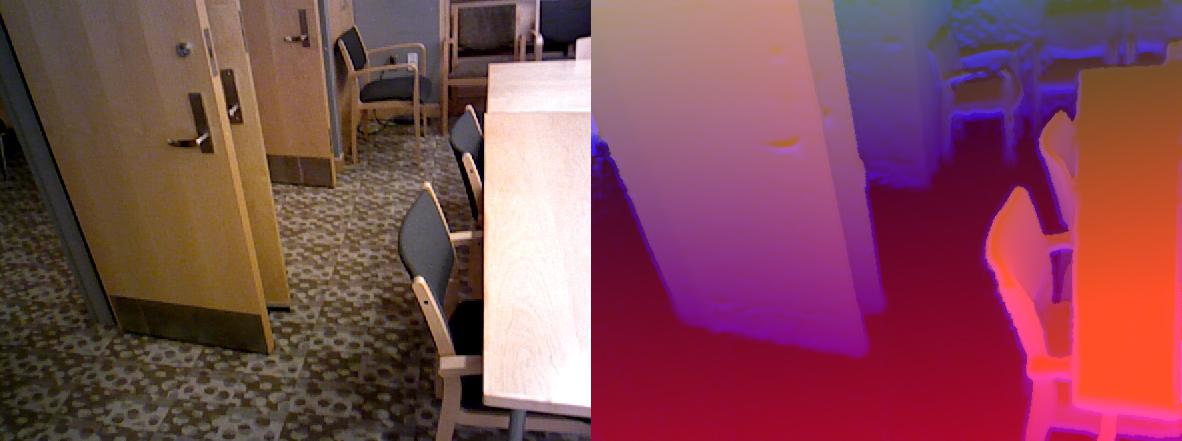}
\centering
\end{minipage}}
\subfigure[\textcolor{red}{\normalsize{Uncertain sample 11 (Rest space)}}]{
\centering
\begin{minipage}[t]{0.32\linewidth}
\centering
\includegraphics[width=0.9\linewidth,height=0.45\linewidth]{./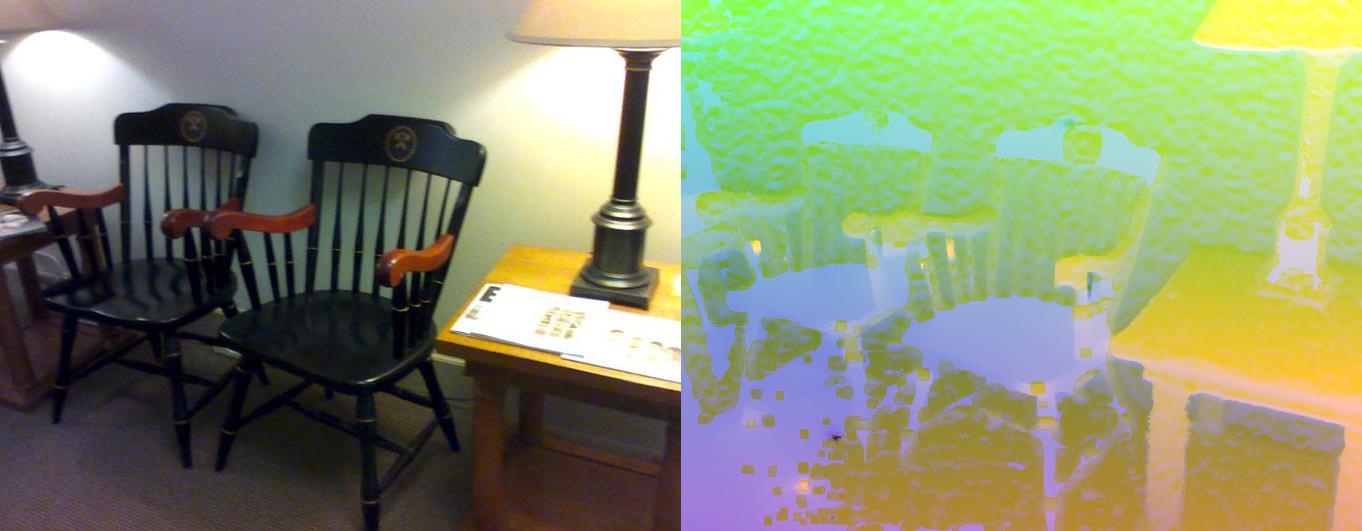}
\centering
\end{minipage}}
\subfigure[\textcolor{red}{\normalsize{Uncertain sample 12 (Lab)}}]{
\centering
\begin{minipage}[t]{0.32\linewidth}
\centering
\includegraphics[width=0.9\linewidth,height=0.45\linewidth]{./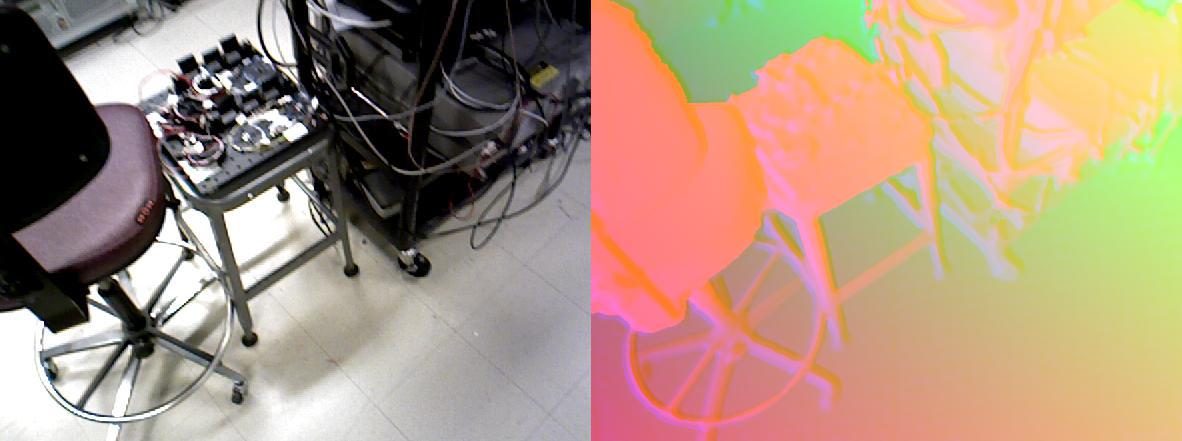}
\centering
\end{minipage}}
\subfigure[\textcolor{red}{\normalsize{Uncertain sample 13 (Study space)}}]{
\centering
\begin{minipage}[t]{0.32\linewidth}
\centering
\includegraphics[width=0.9\linewidth,height=0.45\linewidth]{./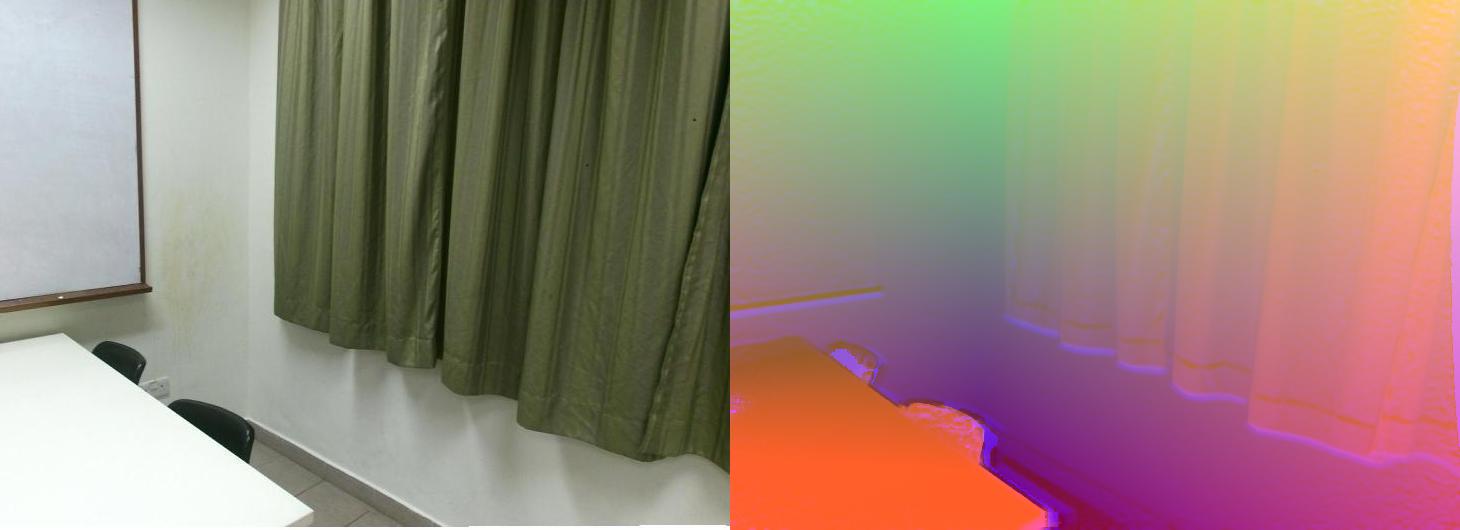}
\centering
\end{minipage}}
\subfigure[\textcolor{red}{\normalsize{Uncertain sample 14 (Rest space)}}]{
\centering
\begin{minipage}[t]{0.32\linewidth}
\centering
\includegraphics[width=0.9\linewidth,height=0.45\linewidth]{./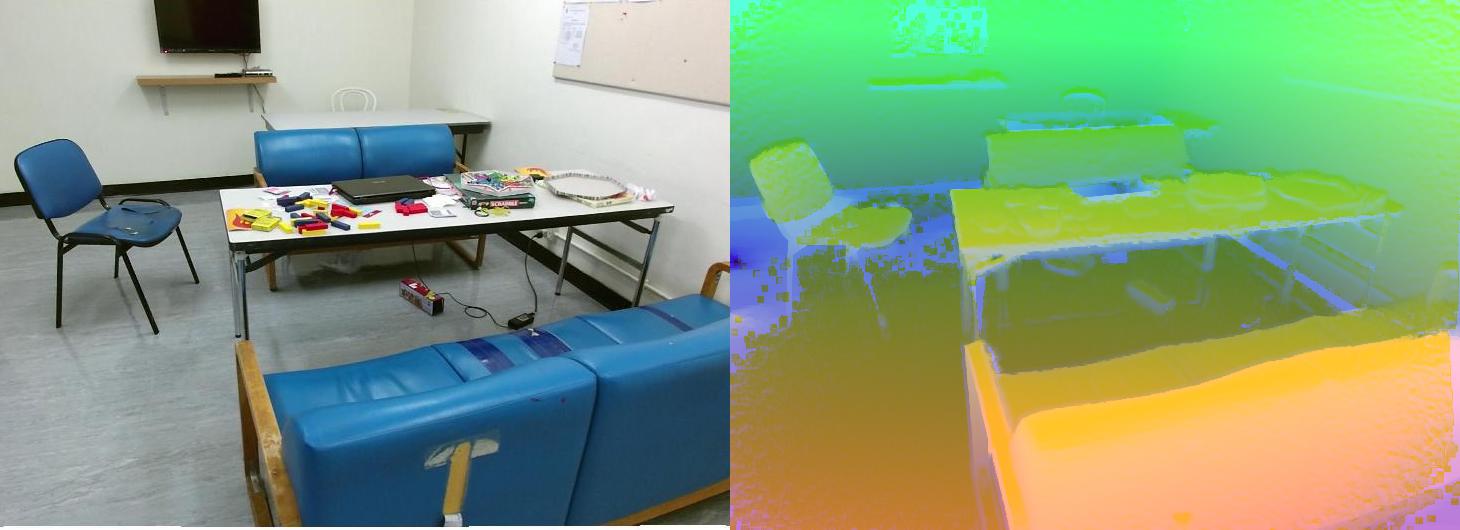}
\centering
\end{minipage}}
\subfigure[\normalsize{Uncertain sample 15 (Office)}]{
\centering
\begin{minipage}[t]{0.32\linewidth}
\centering
\includegraphics[width=0.9\linewidth,height=0.45\linewidth]{./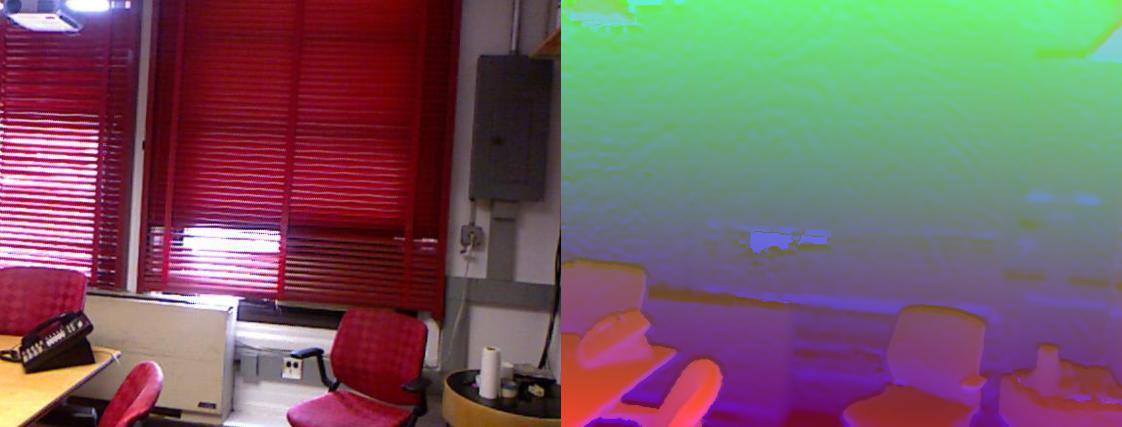}
\centering
\end{minipage}}
\subfigure[\textcolor{red}{\normalsize{Uncertain sample 16 (Lab)}}]{
\centering
\begin{minipage}[t]{0.32\linewidth}
\centering
\includegraphics[width=0.9\linewidth,height=0.45\linewidth]{./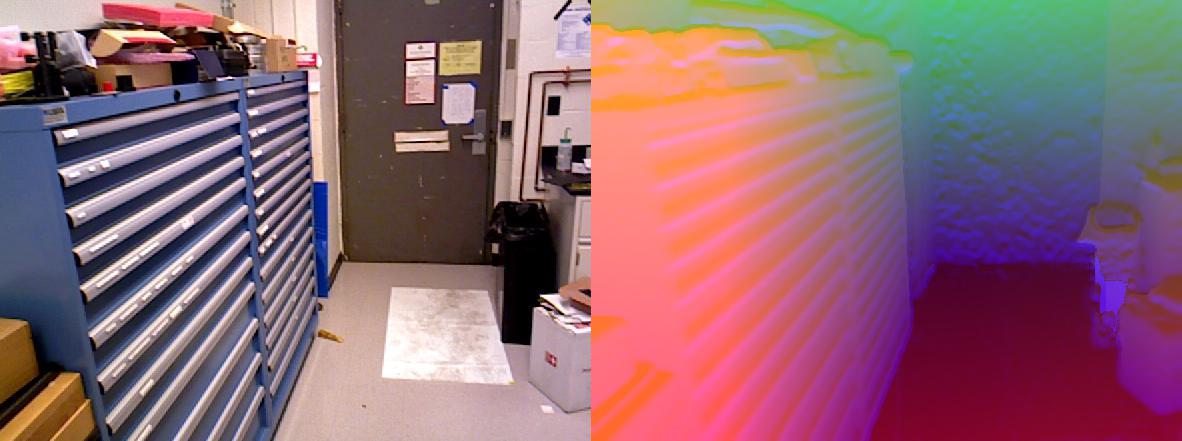}
\centering
\end{minipage}}
\subfigure[\textcolor{red}{\normalsize{Uncertain sample 17 (Classroom)}}]{
\centering
\begin{minipage}[t]{0.32\linewidth}
\centering
\includegraphics[width=0.9\linewidth,height=0.45\linewidth]{./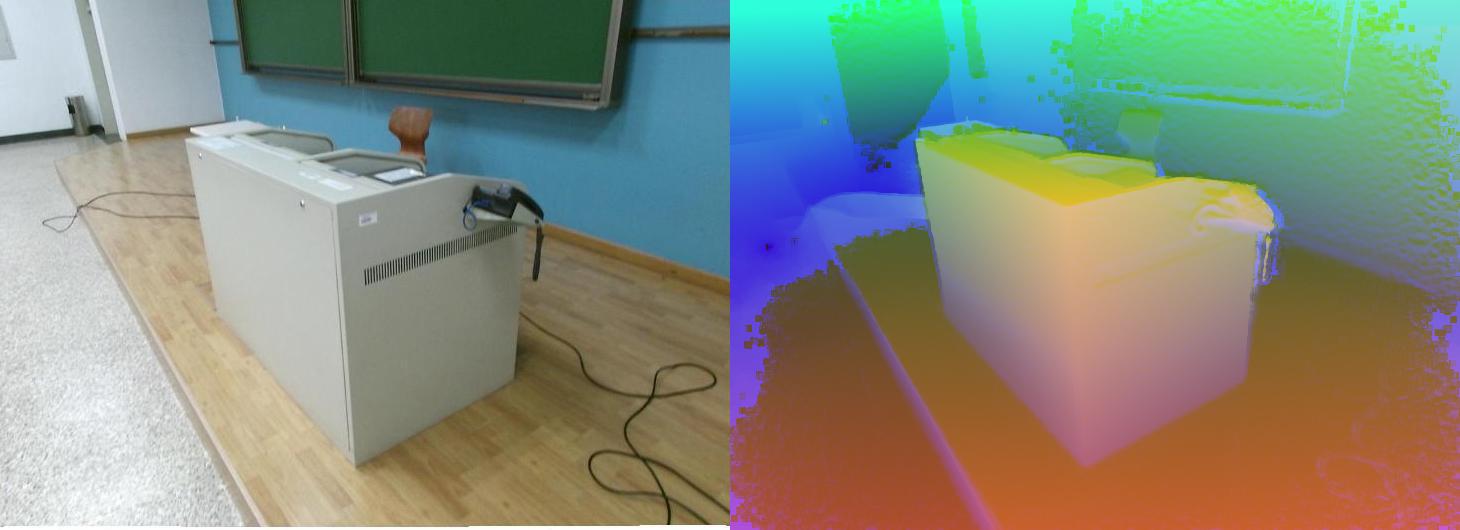}
\centering
\end{minipage}}
\subfigure[\textcolor{red}{\normalsize{Uncertain sample 18 (Study space)}}]{
\centering
\begin{minipage}[t]{0.32\linewidth}
\centering
\includegraphics[width=0.9\linewidth,height=0.45\linewidth]{./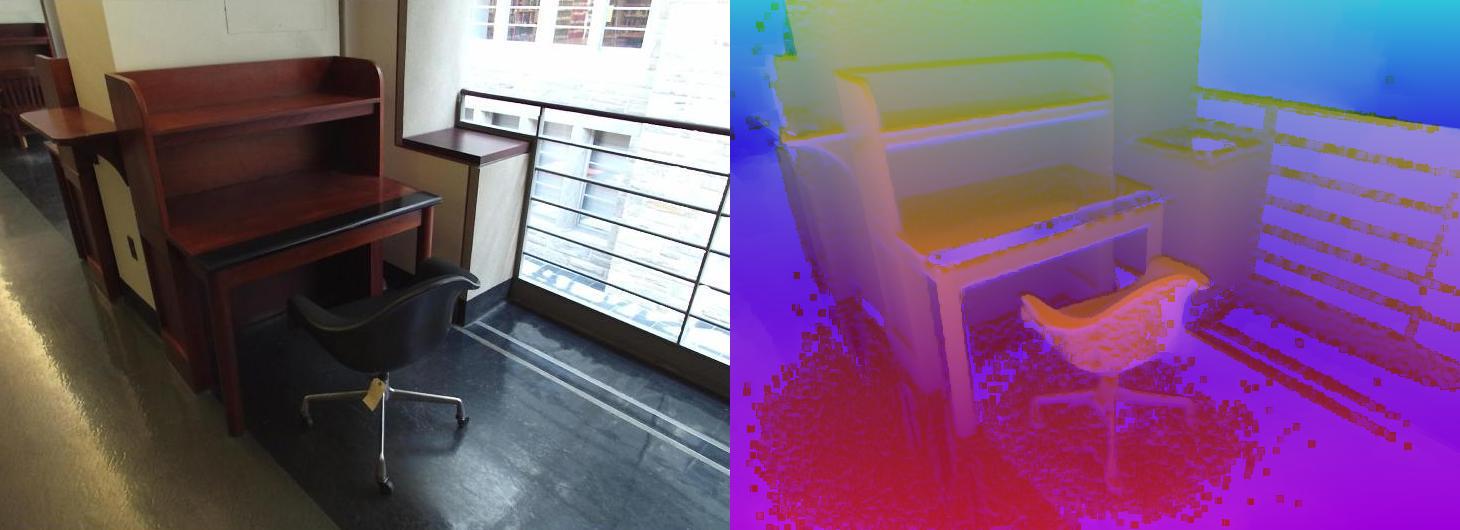}
\centering
\end{minipage}}
%\subfigure[\textcolor{red}{\normalsize{Uncertain sample 19 (Library)}}]{
%\centering
%\begin{minipage}[t]{0.32\linewidth}
%\centering
%\includegraphics[width=0.9\linewidth,height=0.45\linewidth]{./}
%\centering
%\end{minipage}}
%\subfigure[\normalsize{Uncertain sample 20 (Lab)}]{
%\centering
%\begin{minipage}[t]{0.32\linewidth}
%\centering
%\includegraphics[width=0.9\linewidth,height=0.45\linewidth]{./}
%\centering
%\end{minipage}}
\caption{Examples with the highest (top 18) uncertainty using the ETMC algorithm on SUN RGB-D test data. The labels are in brackets. The text in red indicates that the sample are misclassified.}
\label{fig:ETMC_SUN_vis_more_uncertain}
\end{figure*}

\begin{figure*}[!t]
\centering
\subfigure[\normalsize{Confident sample 1 (Bedroom)}]{
\centering
\begin{minipage}[t]{0.32\linewidth}
\centering
\includegraphics[width=0.9\linewidth,height=0.45\linewidth]{./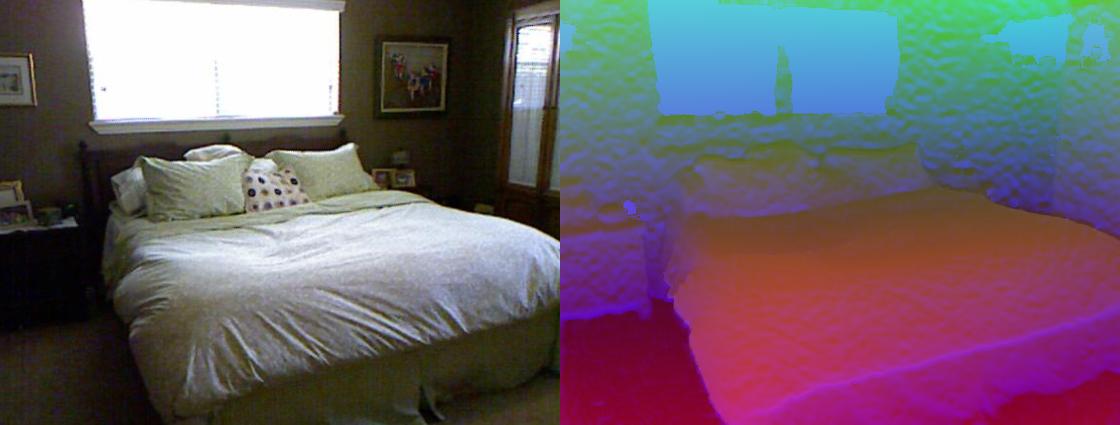}
\centering
\end{minipage}}
\subfigure[\normalsize{Confident sample 2 (Bedroom)}]{
\centering
\begin{minipage}[t]{0.32\linewidth}
\centering
\includegraphics[width=0.9\linewidth,height=0.45\linewidth]{./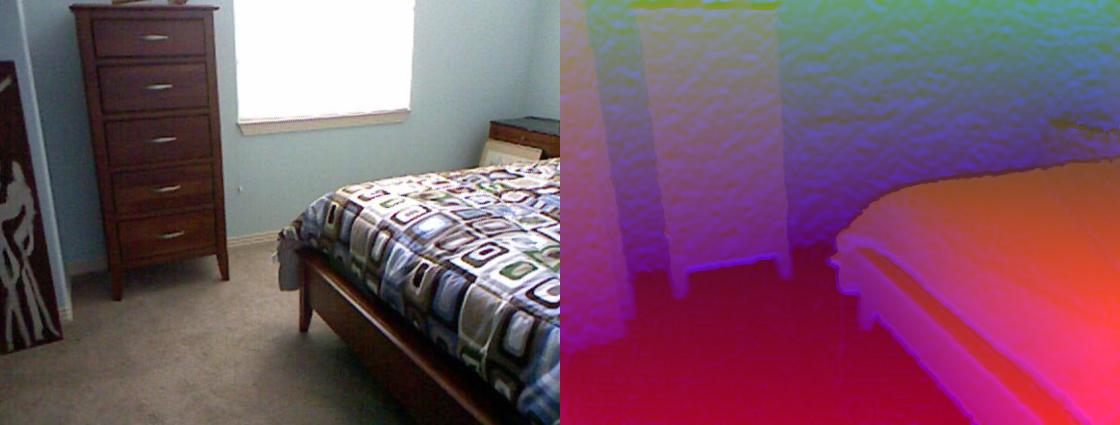}
\centering
\end{minipage}}
\subfigure[\normalsize{Confident sample 3 (Bedroom)}]{
\centering
\begin{minipage}[t]{0.32\linewidth}
\centering
\includegraphics[width=0.9\linewidth,height=0.45\linewidth]{./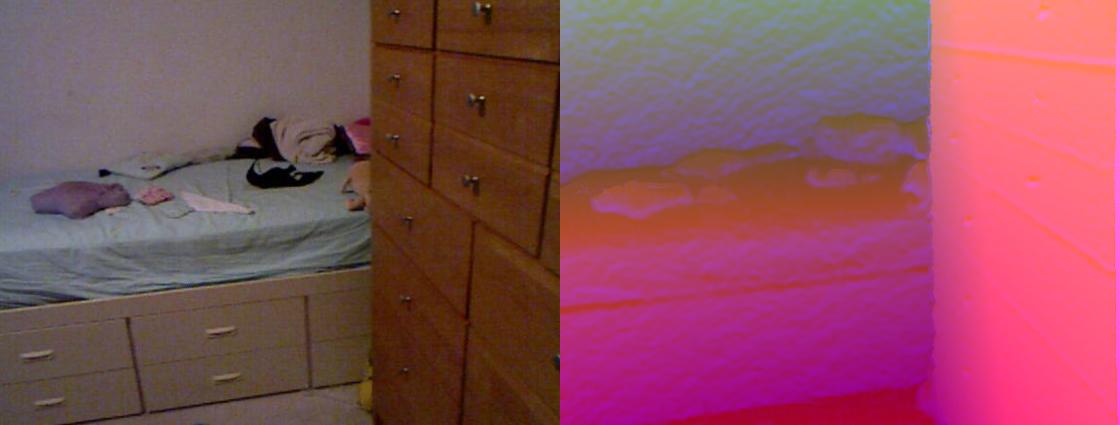}
\centering
\end{minipage}}
\subfigure[\normalsize{Confident sample 4 (Bedroom)}]{
\centering
\begin{minipage}[t]{0.32\linewidth}
\centering
\includegraphics[width=0.9\linewidth,height=0.45\linewidth]{./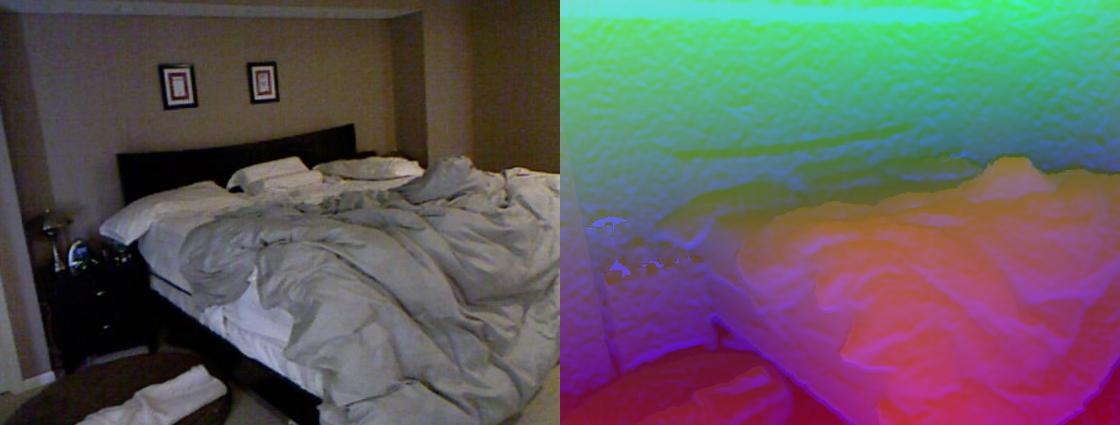}
\centering
\end{minipage}}
\subfigure[\normalsize{Confident sample 5 (Bedroom)}]{
\centering
\begin{minipage}[t]{0.32\linewidth}
\centering
\includegraphics[width=0.9\linewidth,height=0.45\linewidth]{./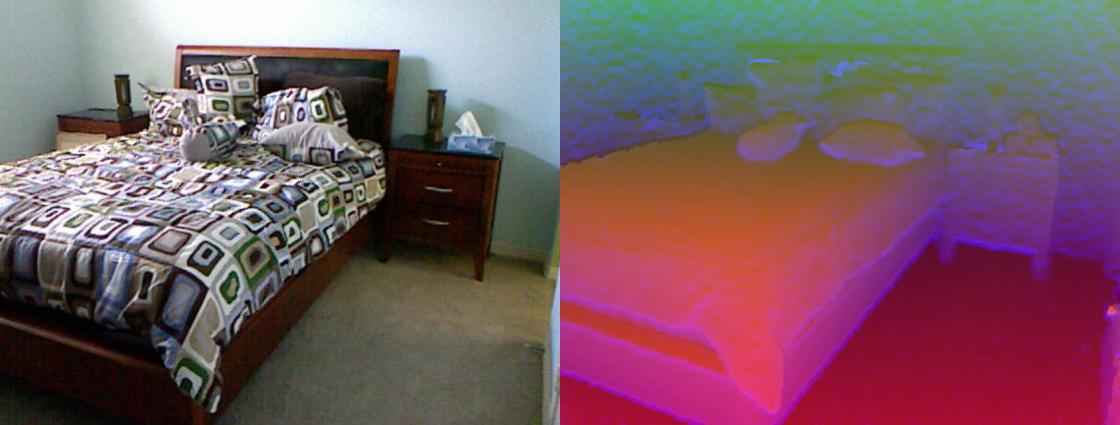}
\centering
\end{minipage}}
\subfigure[\normalsize{Confident sample 6 (Bedroom)}]{
\centering
\begin{minipage}[t]{0.32\linewidth}
\centering
\includegraphics[width=0.9\linewidth,height=0.45\linewidth]{./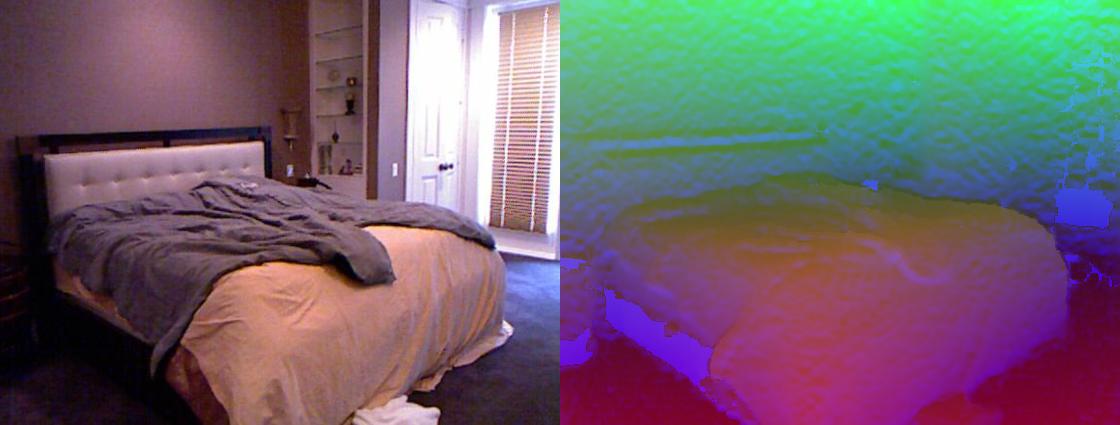}
\centering
\end{minipage}}
\subfigure[\normalsize{Confident sample 7 (Bedroom)}]{
\centering
\begin{minipage}[t]{0.32\linewidth}
\centering
\includegraphics[width=0.9\linewidth,height=0.45\linewidth]{./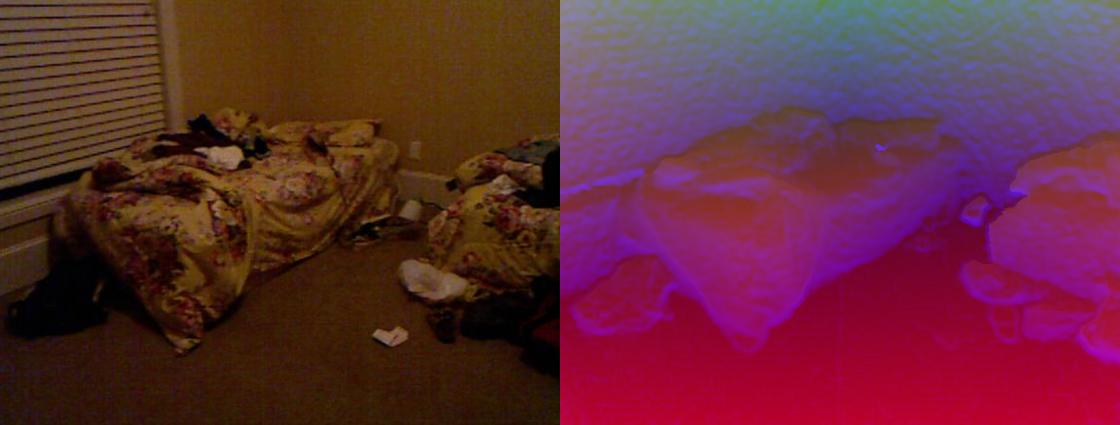}
\centering
\end{minipage}}
\subfigure[\normalsize{Confident sample 8 (Bedroom)}]{
\centering
\begin{minipage}[t]{0.32\linewidth}
\centering
\includegraphics[width=0.9\linewidth,height=0.45\linewidth]{./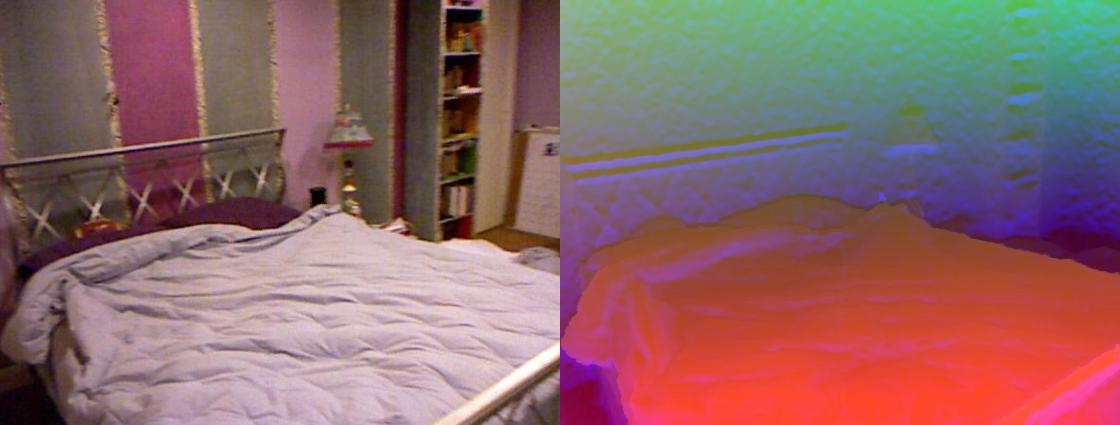}
\centering
\end{minipage}}
\subfigure[\normalsize{Confident sample 9 (Bedroom)}]{
\centering
\begin{minipage}[t]{0.32\linewidth}
\centering
\includegraphics[width=0.9\linewidth,height=0.45\linewidth]{./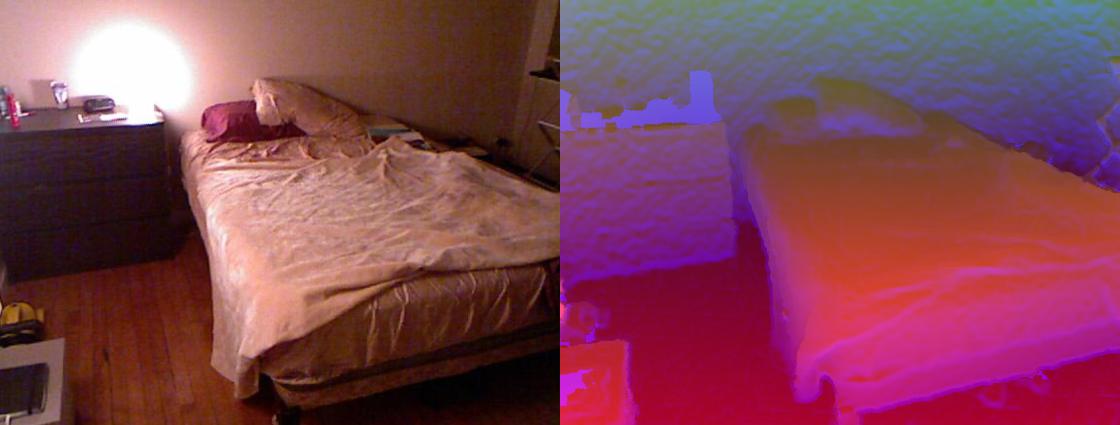}
\centering
\end{minipage}}
\subfigure[\normalsize{Confident sample 10 (Bedroom)}]{
\centering
\begin{minipage}[t]{0.32\linewidth}
\centering
\includegraphics[width=0.9\linewidth,height=0.45\linewidth]{./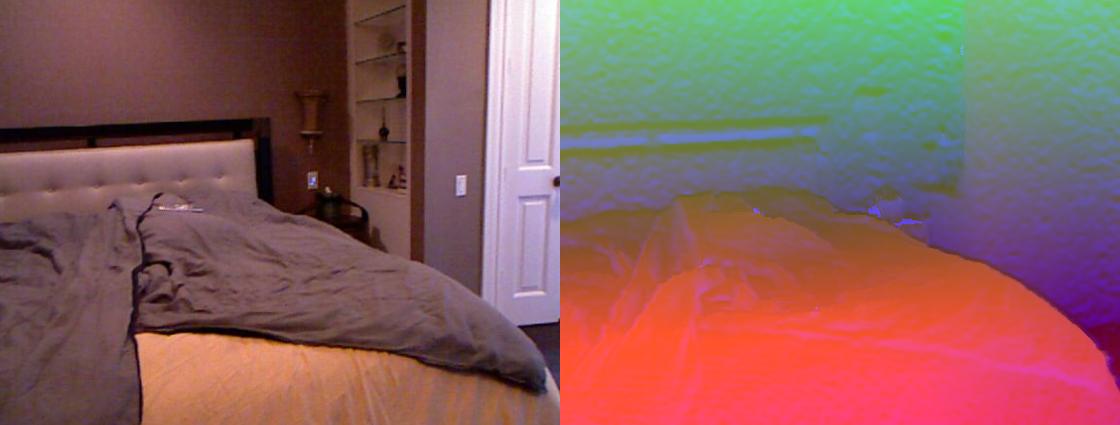}
\centering
\end{minipage}}
\subfigure[\normalsize{Confident sample 11 (Bedroom)}]{
\centering
\begin{minipage}[t]{0.32\linewidth}
\centering
\includegraphics[width=0.9\linewidth,height=0.45\linewidth]{./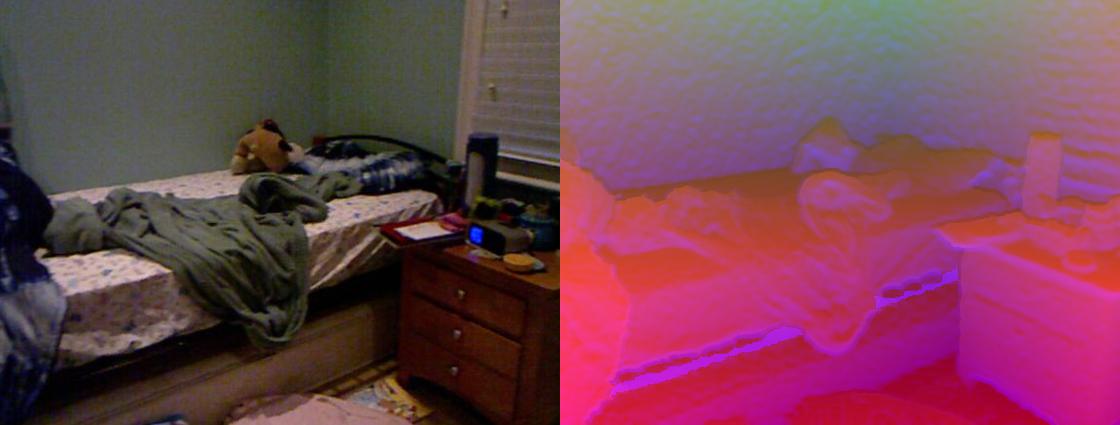}
\centering
\end{minipage}}
\subfigure[\normalsize{Confident sample 12 (Bedroom)}]{
\centering
\begin{minipage}[t]{0.32\linewidth}
\centering
\includegraphics[width=0.9\linewidth,height=0.45\linewidth]{./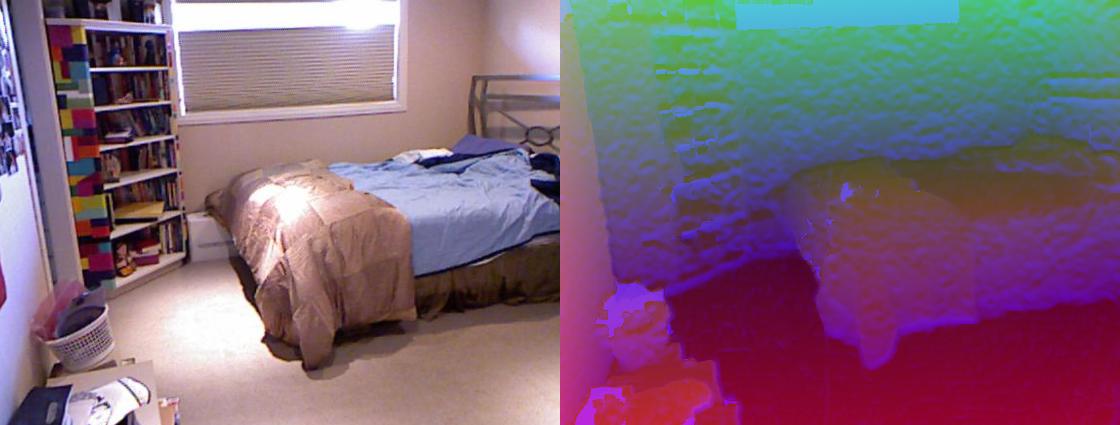}
\centering
\end{minipage}}
\subfigure[\normalsize{Confident sample 13 (Bathroom)}]{
\centering
\begin{minipage}[t]{0.32\linewidth}
\centering
\includegraphics[width=0.9\linewidth,height=0.45\linewidth]{./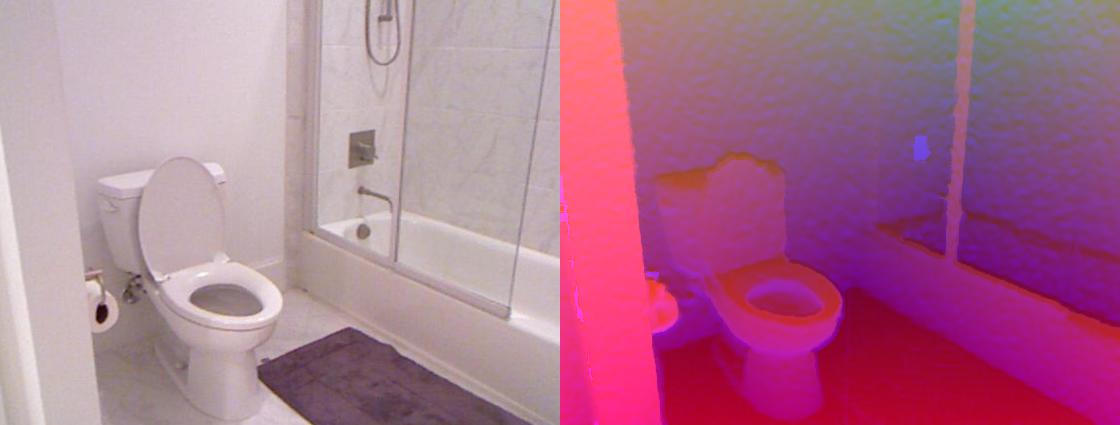}
\centering
\end{minipage}}
\subfigure[\normalsize{Confident sample 14 (Bedroom)}]{
\centering
\begin{minipage}[t]{0.32\linewidth}
\centering
\includegraphics[width=0.9\linewidth,height=0.45\linewidth]{./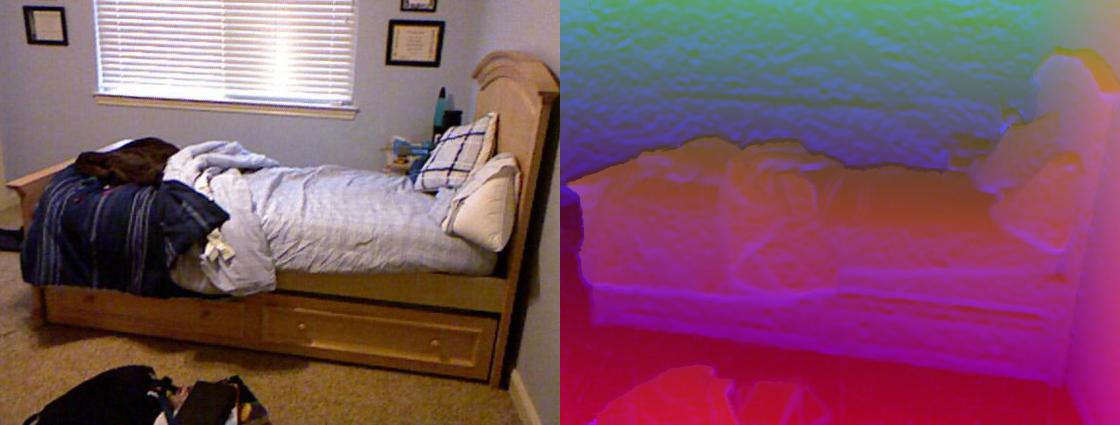}
\centering
\end{minipage}}
\subfigure[\normalsize{Confident sample 15 (Bedroom)}]{
\centering
\begin{minipage}[t]{0.32\linewidth}
\centering
\includegraphics[width=0.9\linewidth,height=0.45\linewidth]{./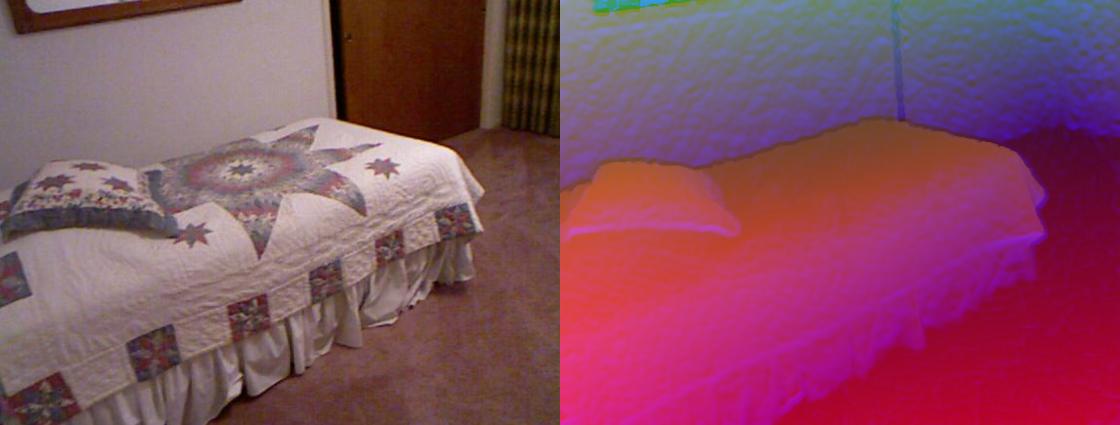}
\centering
\end{minipage}}
\subfigure[\normalsize{Confident sample 16 (Bathroom)}]{
\centering
\begin{minipage}[t]{0.32\linewidth}
\centering
\includegraphics[width=0.9\linewidth,height=0.45\linewidth]{./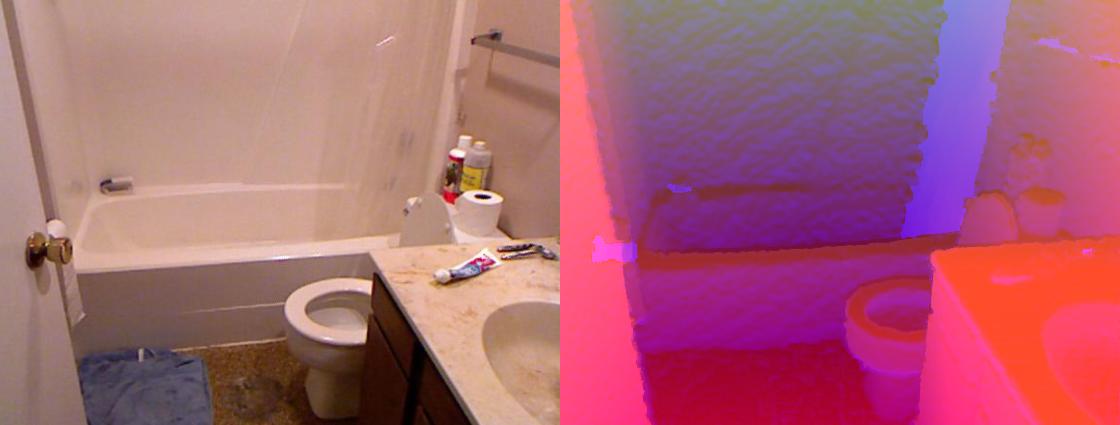}
\centering
\end{minipage}}
\subfigure[\normalsize{Confident sample 17 (Bathroom)}]{
\centering
\begin{minipage}[t]{0.32\linewidth}
\centering
\includegraphics[width=0.9\linewidth,height=0.45\linewidth]{./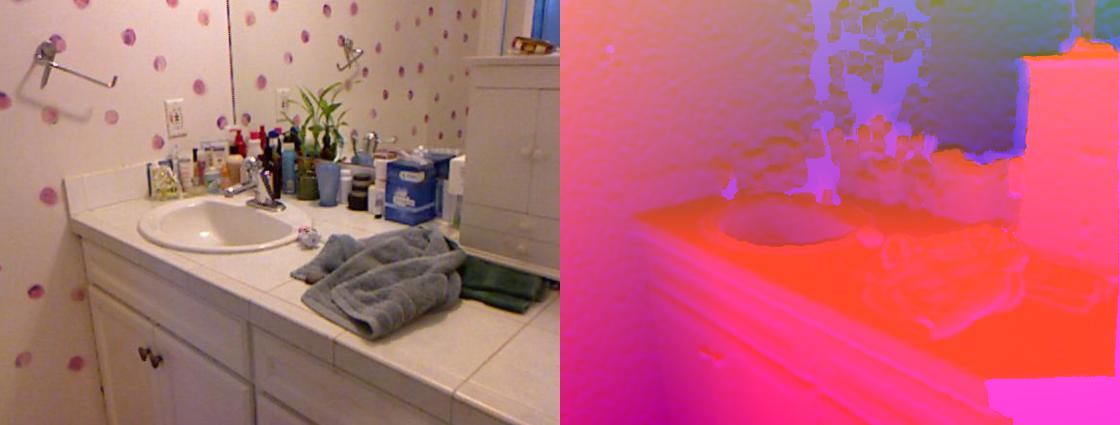}
\centering
\end{minipage}}
\subfigure[\normalsize{Confident sample 18 (Kitchen)}]{
\centering
\begin{minipage}[t]{0.32\linewidth}
\centering
\includegraphics[width=0.9\linewidth,height=0.45\linewidth]{./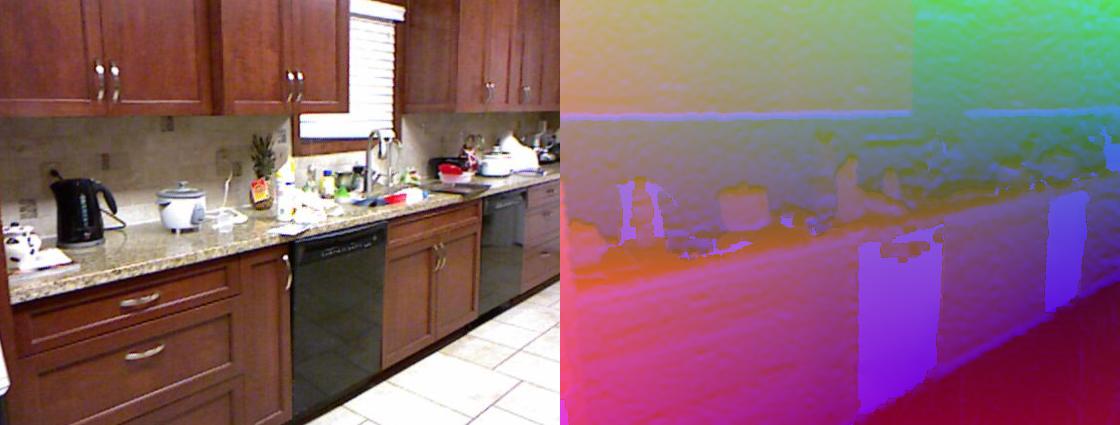}
\centering
\end{minipage}}
%\subfigure[\normalsize{Confident sample 19 (Bedroom)}]{
%\centering
%\begin{minipage}[t]{0.32\linewidth}
%\centering
%\includegraphics[width=0.9\linewidth,height=0.45\linewidth]{./}
%\centering
%\end{minipage}}
%\subfigure[\normalsize{Confident sample 20 (Bedroom)}]{
%\centering
%\begin{minipage}[t]{0.32\linewidth}
%\centering
%\includegraphics[width=0.9\linewidth,height=0.45\linewidth]{./}
%\centering
%\end{minipage}}
\caption{Examples with the lowest (top 18) prediction uncertainty using the ETMC algorithm on NYUD Depth V2 test data. The ground-truth labels are in brackets. All the above examples are correctly classified.}
\label{fig:ETMC_NYUD_vis_more_confident}
\end{figure*}

\begin{figure*}[!t]
\centering
\subfigure[\textcolor{black}{\normalsize{Uncertain sample 1 (Bedroom)}}]{
\centering
\begin{minipage}[t]{0.32\linewidth}
\centering
\includegraphics[width=0.9\linewidth,height=0.45\linewidth]{./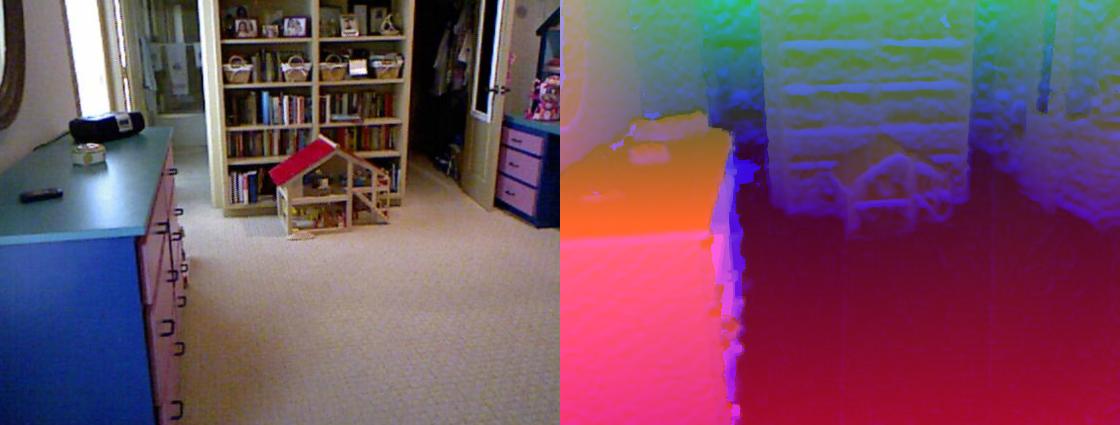}
\centering
\end{minipage}}
\subfigure[\textcolor{red}{\normalsize{Uncertain sample 2 (Others)}}]{
\centering
\begin{minipage}[t]{0.32\linewidth}
\centering
\includegraphics[width=0.9\linewidth,height=0.45\linewidth]{./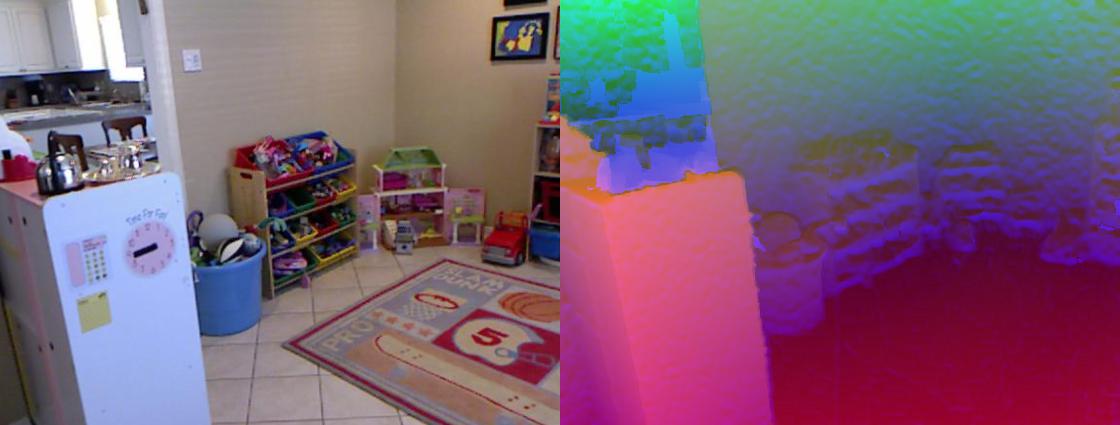}
\centering
\end{minipage}}
\subfigure[\textcolor{red}{\normalsize{Uncertain sample 3 (Kitchen)}}]{
\centering
\begin{minipage}[t]{0.32\linewidth}
\centering
\includegraphics[width=0.9\linewidth,height=0.45\linewidth]{./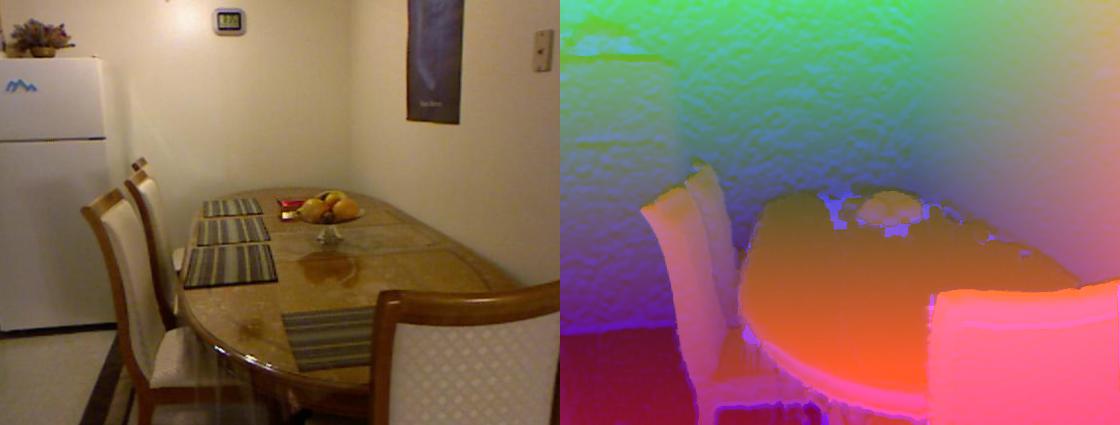}
\centering
\end{minipage}}
\subfigure[\textcolor{red}{\normalsize{Uncertain sample 4 (Kitchen)}}]{
\centering
\begin{minipage}[t]{0.32\linewidth}
\centering
\includegraphics[width=0.9\linewidth,height=0.45\linewidth]{./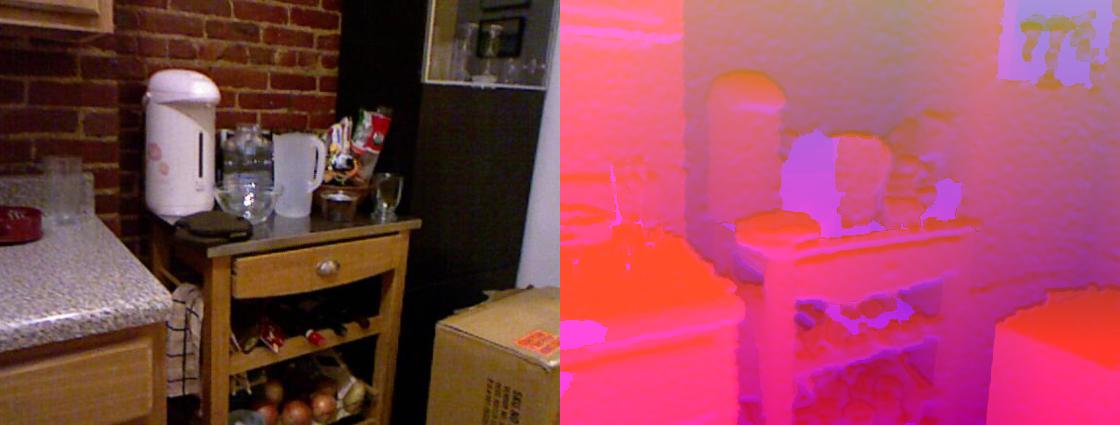}
\centering
\end{minipage}}
\subfigure[\textcolor{red}{\normalsize{Uncertain sample 5 (Bedroom)}}]{
\centering
\begin{minipage}[t]{0.32\linewidth}
\centering
\includegraphics[width=0.9\linewidth,height=0.45\linewidth]{./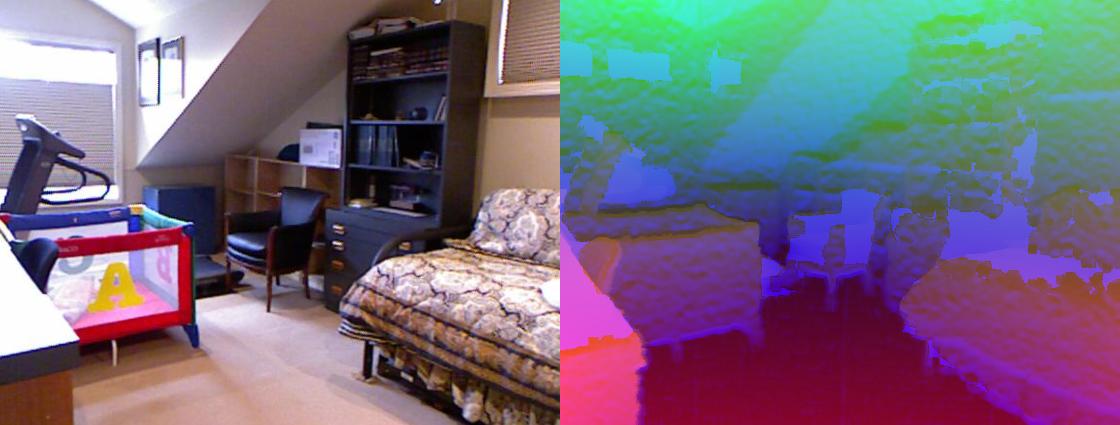}
\centering
\end{minipage}}
\subfigure[\textcolor{red}{\normalsize{Uncertain sample 6 (Bedroom)}}]{
\centering
\begin{minipage}[t]{0.32\linewidth}
\centering
\includegraphics[width=0.9\linewidth,height=0.45\linewidth]{./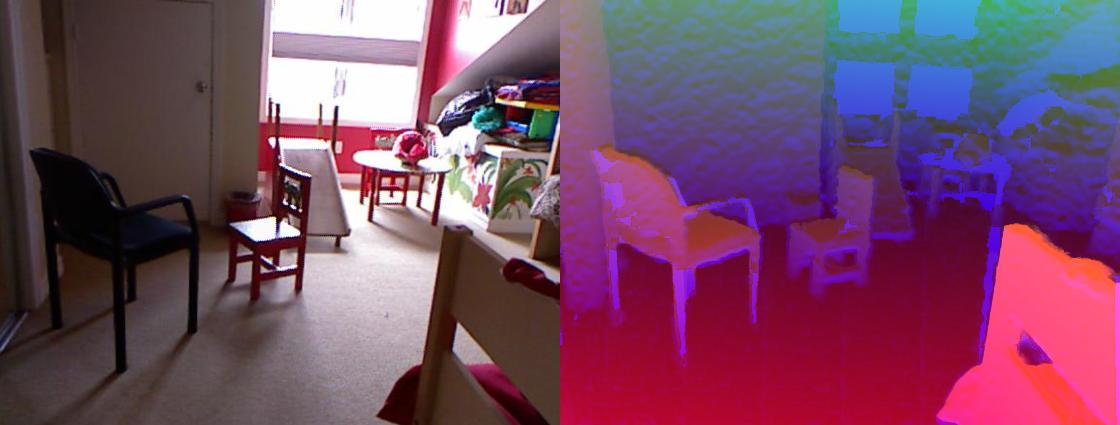}
\centering
\end{minipage}}
\subfigure[\textcolor{red}{\normalsize{Uncertain sample 7 (Dining room)}}]{
\centering
\begin{minipage}[t]{0.32\linewidth}
\centering
\includegraphics[width=0.9\linewidth,height=0.45\linewidth]{./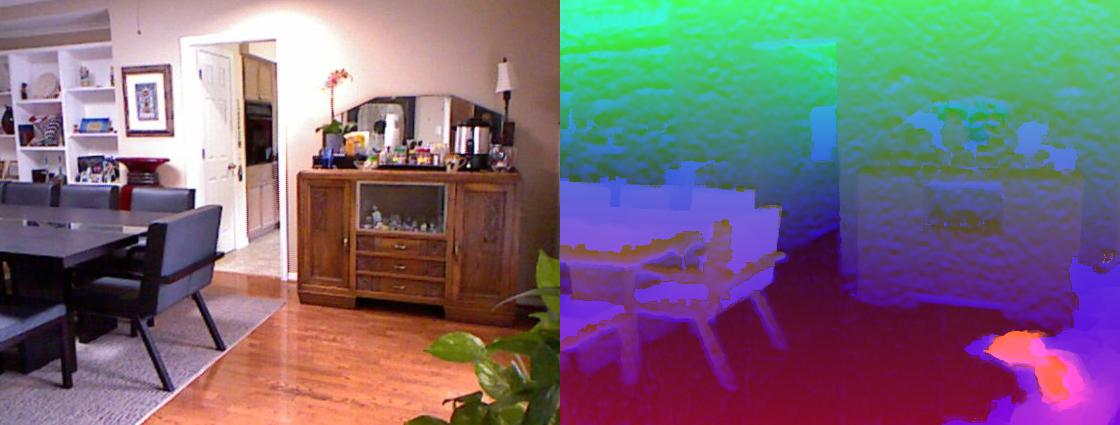}
\centering
\end{minipage}}
\subfigure[\textcolor{red}{\normalsize{Uncertain sample 8 (Home office)}}]{
\centering
\begin{minipage}[t]{0.32\linewidth}
\centering
\includegraphics[width=0.9\linewidth,height=0.45\linewidth]{./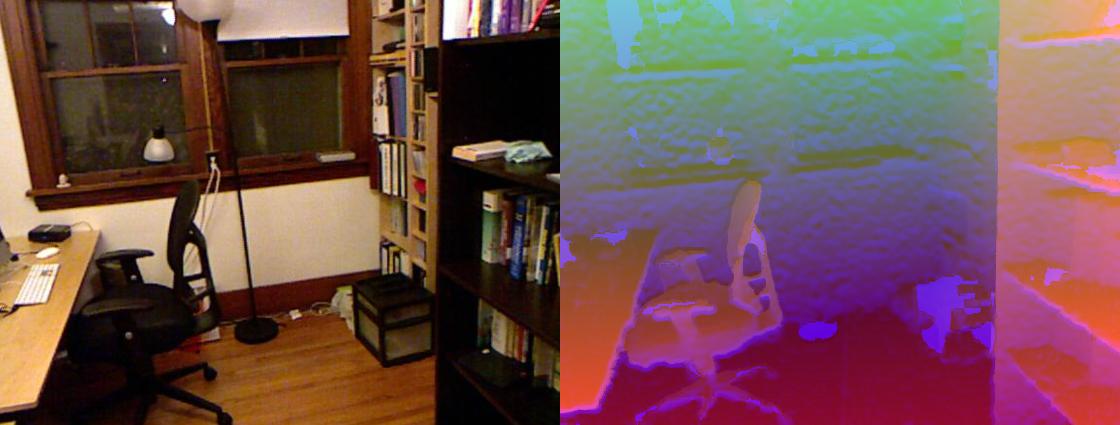}
\centering
\end{minipage}}
\subfigure[\textcolor{black}{\normalsize{Uncertain sample 9 (Living room)}}]{
\centering
\begin{minipage}[t]{0.32\linewidth}
\centering
\includegraphics[width=0.9\linewidth,height=0.45\linewidth]{./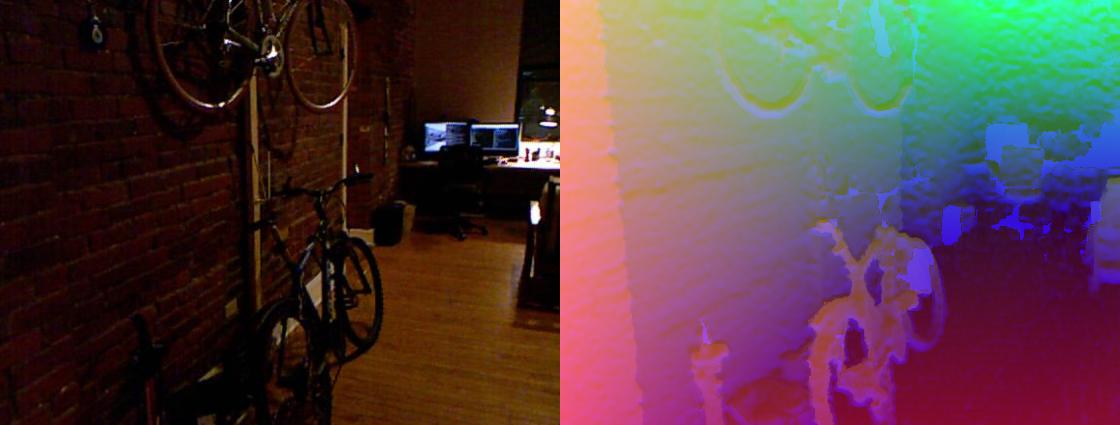}
\centering
\end{minipage}}
\subfigure[\textcolor{red}{\normalsize{Uncertain sample 10 (Bedroom)}}]{
\centering
\begin{minipage}[t]{0.32\linewidth}
\centering
\includegraphics[width=0.9\linewidth,height=0.45\linewidth]{./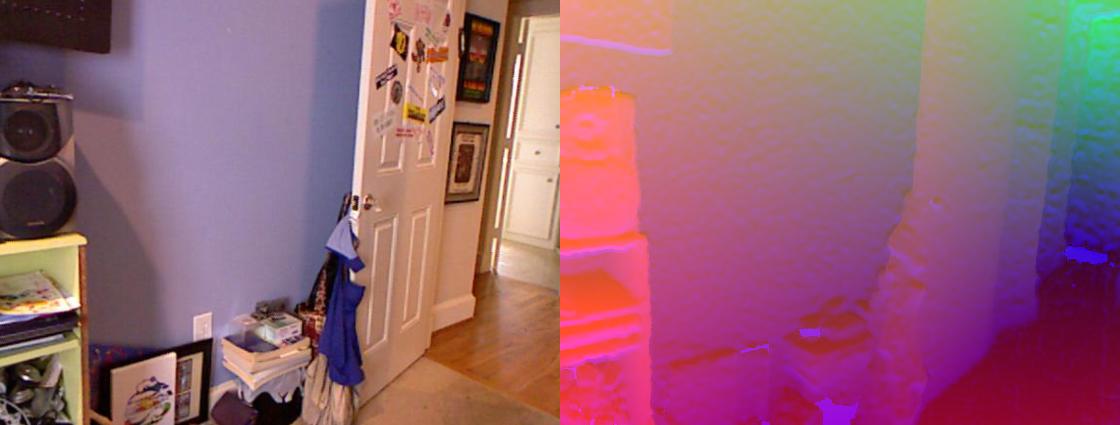}
\centering
\end{minipage}}
\subfigure[\textcolor{red}{\normalsize{Uncertain sample 11 (Living room)}}]{
\centering
\begin{minipage}[t]{0.32\linewidth}
\centering
\includegraphics[width=0.9\linewidth,height=0.45\linewidth]{./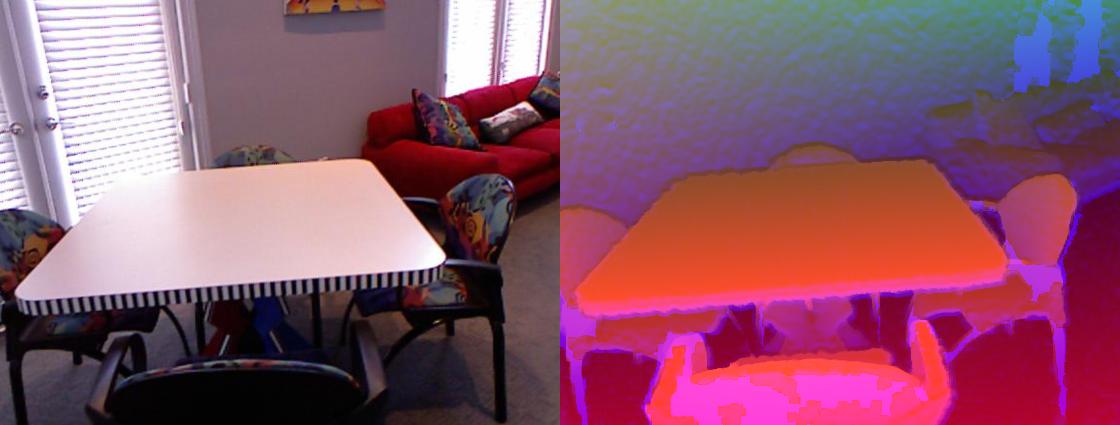}
\centering
\end{minipage}}
\subfigure[\textcolor{red}{\normalsize{Uncertain sample 12 (Living room)}}]{
\centering
\begin{minipage}[t]{0.32\linewidth}
\centering
\includegraphics[width=0.9\linewidth,height=0.45\linewidth]{./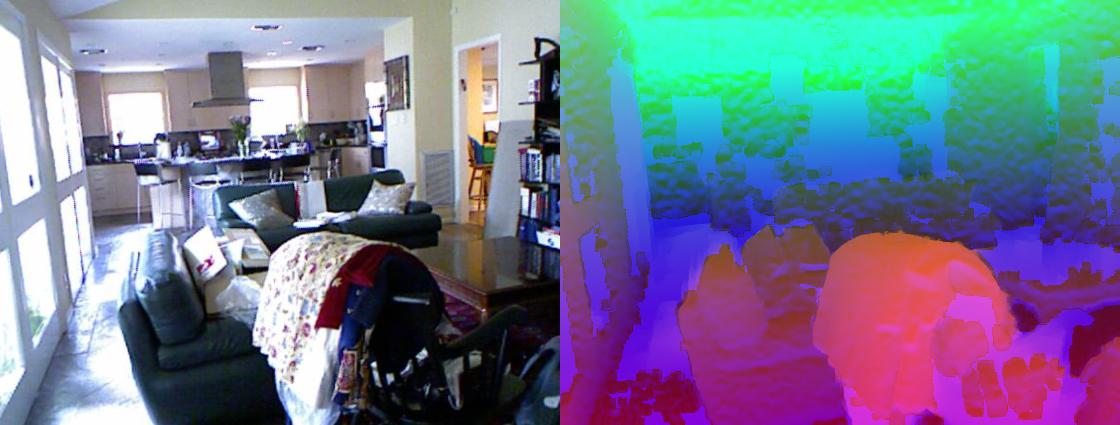}
\centering
\end{minipage}}
\subfigure[\textcolor{red}{\normalsize{Uncertain sample 13 (Home office)}}]{
\centering
\begin{minipage}[t]{0.32\linewidth}
\centering
\includegraphics[width=0.9\linewidth,height=0.45\linewidth]{./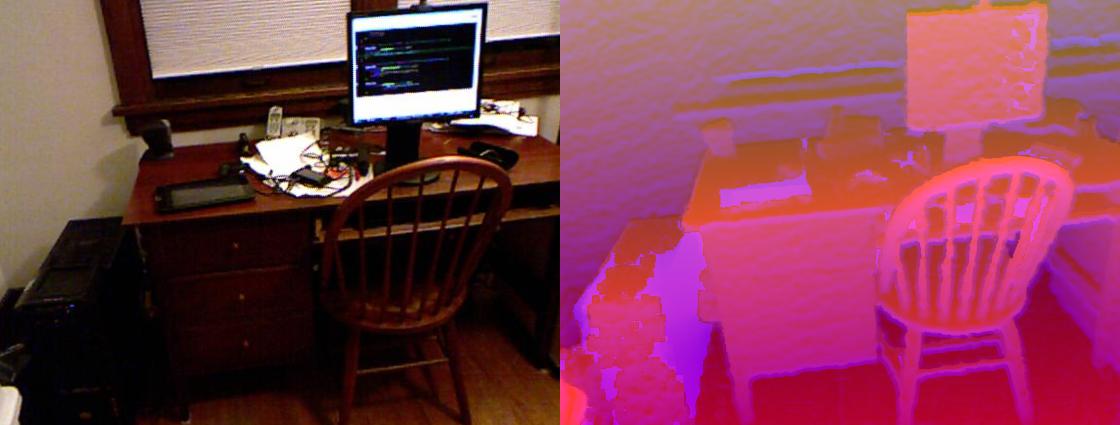}
\centering
\end{minipage}}
\subfigure[\textcolor{red}{\normalsize{Uncertain sample 14 (Kitchen)}}]{
\centering
\begin{minipage}[t]{0.32\linewidth}
\centering
\includegraphics[width=0.9\linewidth,height=0.45\linewidth]{./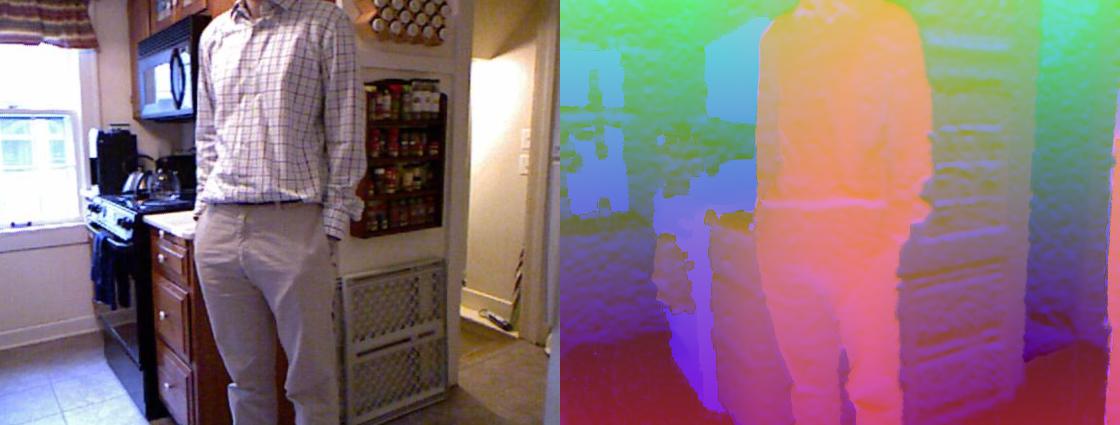}
\centering
\end{minipage}}
\subfigure[\normalsize{Uncertain sample 15 (Bedroom)}]{
\centering
\begin{minipage}[t]{0.32\linewidth}
\centering
\includegraphics[width=0.9\linewidth,height=0.45\linewidth]{./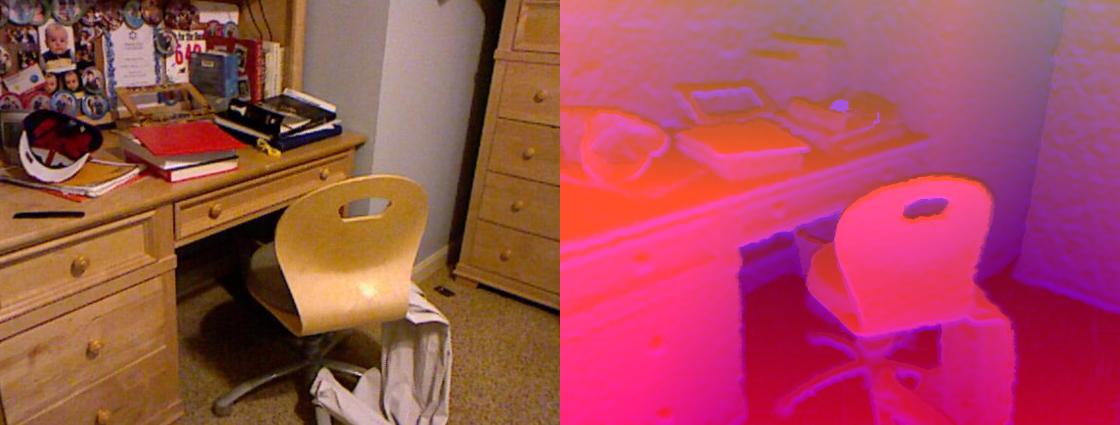}
\centering
\end{minipage}}
\subfigure[\textcolor{red}{\normalsize{Uncertain sample 16 (Others)}}]{
\centering
\begin{minipage}[t]{0.32\linewidth}
\centering
\includegraphics[width=0.9\linewidth,height=0.45\linewidth]{./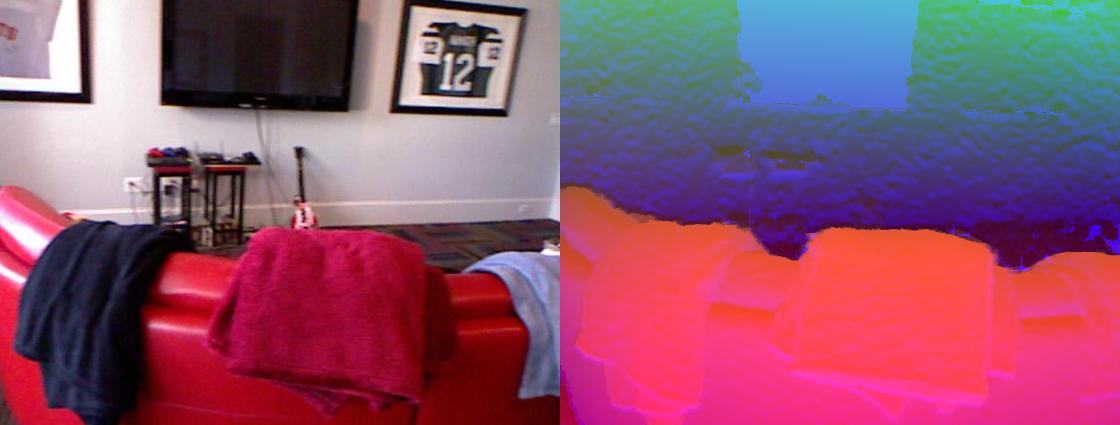}
\centering
\end{minipage}}
\subfigure[\textcolor{red}{\normalsize{Uncertain sample 17 (Kitchen)}}]{
\centering
\begin{minipage}[t]{0.32\linewidth}
\centering
\includegraphics[width=0.9\linewidth,height=0.45\linewidth]{./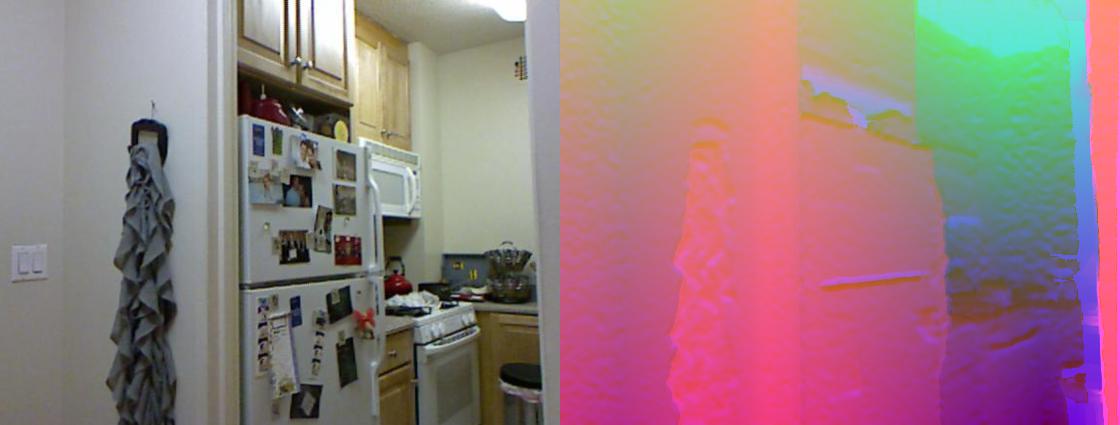}
\centering
\end{minipage}}
\subfigure[\textcolor{red}{\normalsize{Uncertain sample 18 (Living room)}}]{
\centering
\begin{minipage}[t]{0.32\linewidth}
\centering
\includegraphics[width=0.9\linewidth,height=0.45\linewidth]{./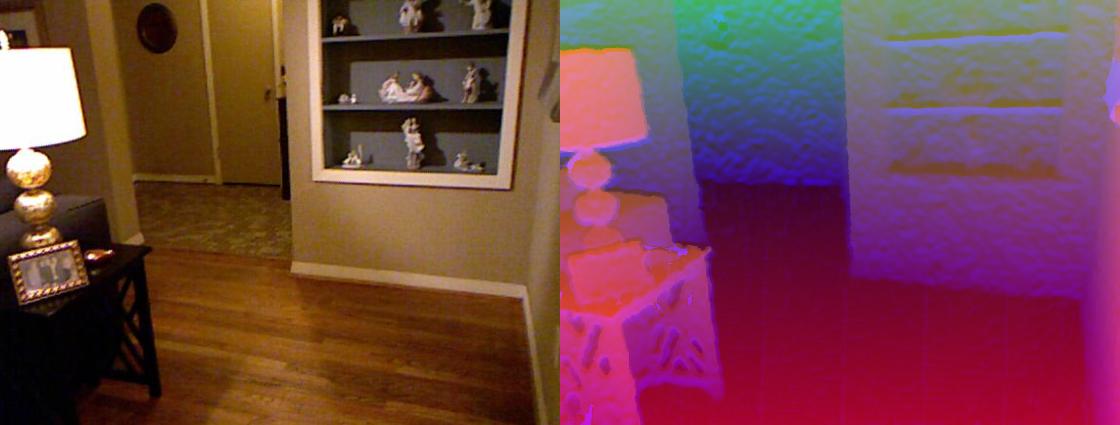}
\centering
\end{minipage}}
%\subfigure[\textcolor{red}{\normalsize{Uncertain sample 19 (Living room)}}]{
%\centering
%\begin{minipage}[t]{0.32\linewidth}
%\centering
%\includegraphics[width=0.9\linewidth,height=0.45\linewidth]{./}
%\centering
%\end{minipage}}
%\subfigure[\textcolor{red}{\normalsize{Uncertain sample 20 (Living room)}}]{
%\centering
%\begin{minipage}[t]{0.32\linewidth}
%\centering
%\includegraphics[width=0.9\linewidth,height=0.45\linewidth]{./}
%\centering
%\end{minipage}}
\caption{Examples with the highest (top 18) prediction uncertainty using the ETMC algorithm on NYUD Depth V2 test data. The ground-truth labels are in brackets. The text in red indicates that these samples are misclassified.}
\label{fig:ETMC_NYUD_vis_more_uncertain}
\end{figure*}
\begin{table*}[!t]
    \centering
    \small
    \renewcommand\arraystretch{1.1}
    \begin{tabular}{p{0.5cm}p{7cm}p{3cm}p{1.5cm}}
        \toprule
        \textbf{Index}&\textbf{Text} & \textbf{Image} & \textbf{Label}\\
        \midrule
        1&
        \raisebox{-.9\height}{\includegraphics[width=210pt]{./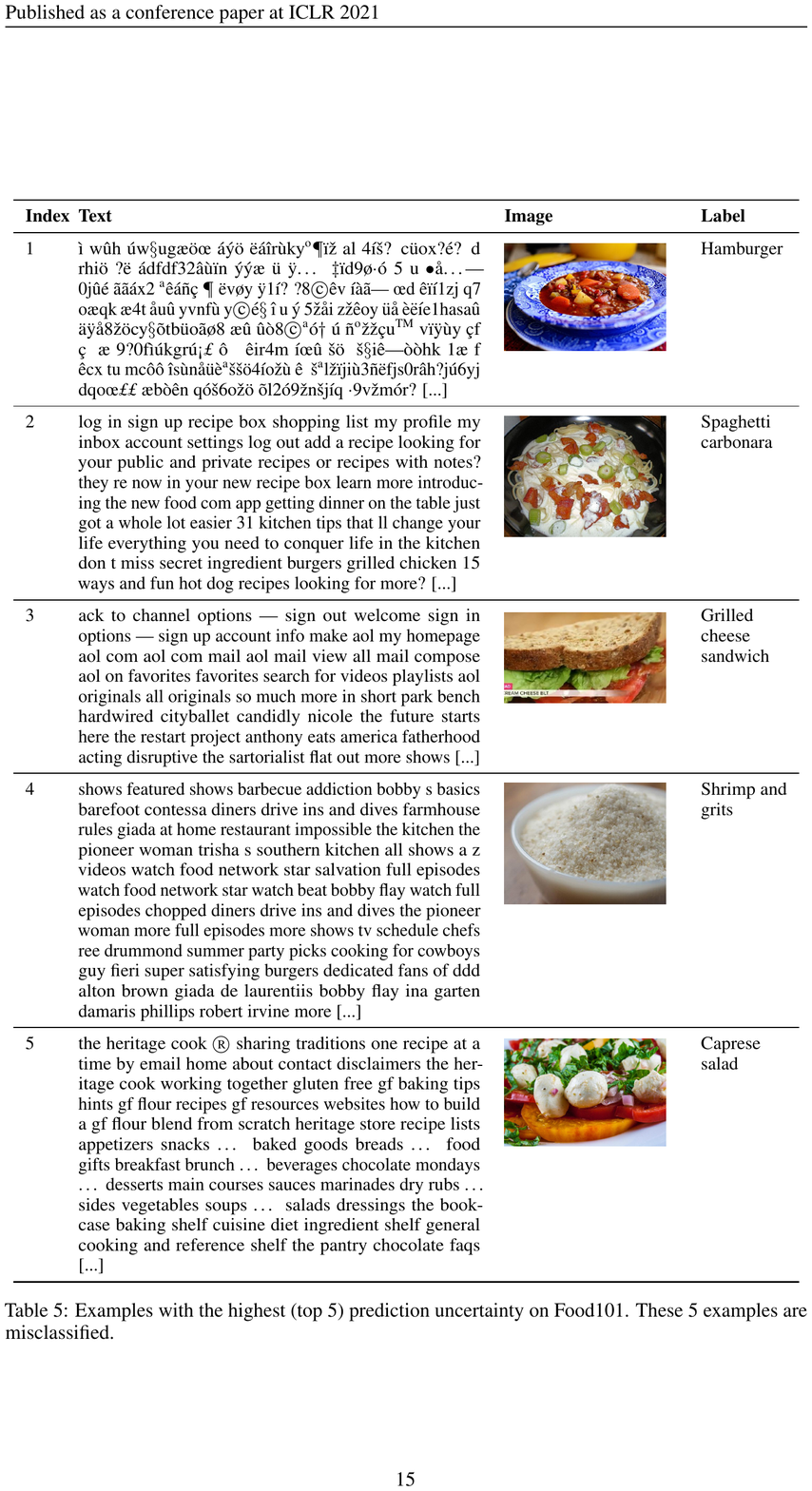}}
        %‹      ì½ wûh úw÷§€ugæöœ áýð«ö ëáîrùkyº¶ïž a¢l 4íš? ÷cüox?é? d r¤hiö ?»ë ádfdf¼32âùƒ×ïn þýýæ¨ þü ÿ… ‡ïd9ø·ó 5 u •å…| 0jûé ããáx2 ªêáñç¬ ¶ ´ëvøy ÿ1í? ?8©êv íàã| œd ðêïí1zj q7² þo„æq×k æ4t åuû ½yv¢nfù y©é§ î u ý 5žåi zžêoy üå ðèë¼íe1hasa¨û¼ ä‹ÿå8žöcy§õtbüoãø8 æû¼ û‹ò8©ªó† ú ¥ñºžžçu™ vïÿùy çf ‹ç ¼ æ 9?0fµìúkgr¢ú›¡»£¤ ô µ þ êir4m íœû šö ¤ š§²iê|òòhk ½1æ f êcx tu mcô÷ô îsù´²nåüèªššö4ío²žù ê ¦ šªlžïji‹ù3ñðëfjs0râ±h?jú6yj dqoœ££ æb‰ò±ê³¦n ¬qóš6ožö õl2ðó9žnšj¤íq ·²9vžmór? [...]
        & \raisebox{-.9\height}{\includegraphics[width=80pt]{./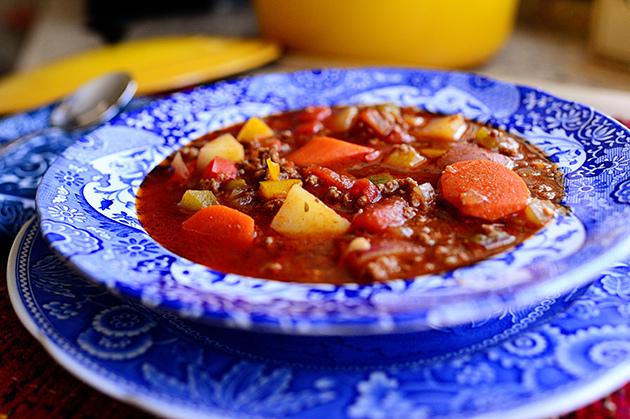}} & Hamburger\\\midrule
        2&log in sign up recipe box shopping list my profile my inbox account settings log out add a recipe looking for your public and private recipes or recipes with notes? they re now in your new recipe box learn more introducing the new food com app getting dinner on the table just got a whole lot easier 31 kitchen tips that ll change your life everything you need to conquer life in the kitchen don t miss secret ingredient burgers grilled chicken 15 ways and fun hot dog recipes looking for more? [...]& \raisebox{-.9\height}{\includegraphics[width=80pt]{./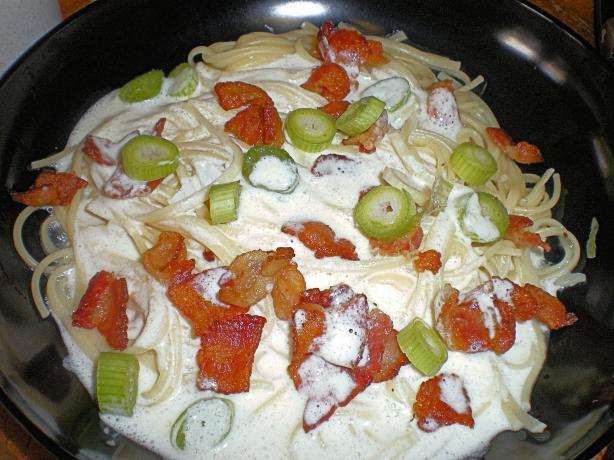}} & Spaghetti carbonara\\\midrule
        3&ack to channel options | sign out welcome sign in options | sign up account info make aol my homepage aol com aol com mail aol mail view all mail compose aol on favorites favorites search for videos playlists aol originals all originals so much more in short park bench hardwired cityballet candidly nicole the future starts here the restart project anthony eats america fatherhood acting disruptive the sartorialist flat out more shows [...] & \raisebox{-.9\height}{\includegraphics[width=80pt]{./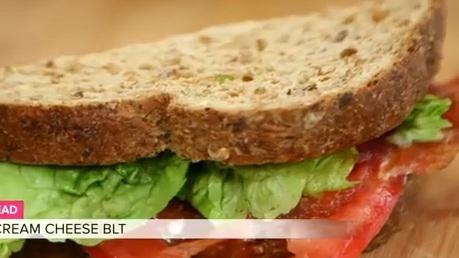}} & Grilled cheese sandwich\\\midrule
        4&shows featured shows barbecue addiction bobby s basics barefoot contessa diners drive ins and dives farmhouse rules giada at home restaurant impossible the kitchen the pioneer woman trisha s southern kitchen all shows a z videos watch food network star salvation full episodes watch food network star watch beat bobby flay watch full episodes chopped diners drive ins and dives the pioneer woman more full episodes more shows tv schedule chefs ree drummond summer party picks cooking for cowboys guy fieri super satisfying burgers dedicated fans of ddd alton brown giada de laurentiis bobby flay ina garten damaris phillips robert irvine more [...] &
        \raisebox{-.9\height}{\includegraphics[width=80pt]{./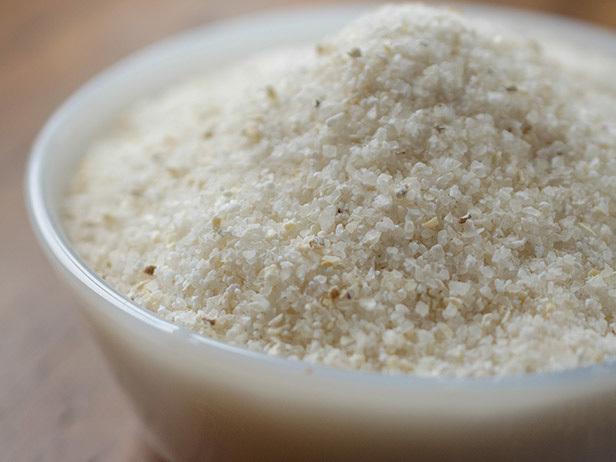}} & Shrimp and grits\\\midrule
        5&the heritage cook ® sharing traditions one recipe at a time by email home about contact disclaimers the heritage cook working together gluten free gf baking tips hints gf flour recipes gf resources websites how to build a gf flour blend from scratch heritage store recipe lists appetizers snacks … baked goods breads … food gifts breakfast brunch … beverages chocolate mondays … desserts main courses sauces marinades dry rubs … sides vegetables soups … salads dressings the bookcase baking shelf cuisine diet ingredient shelf general cooking and reference shelf the pantry chocolate faqs [...] &
        \raisebox{-.9\height}{\includegraphics[width=80pt]{./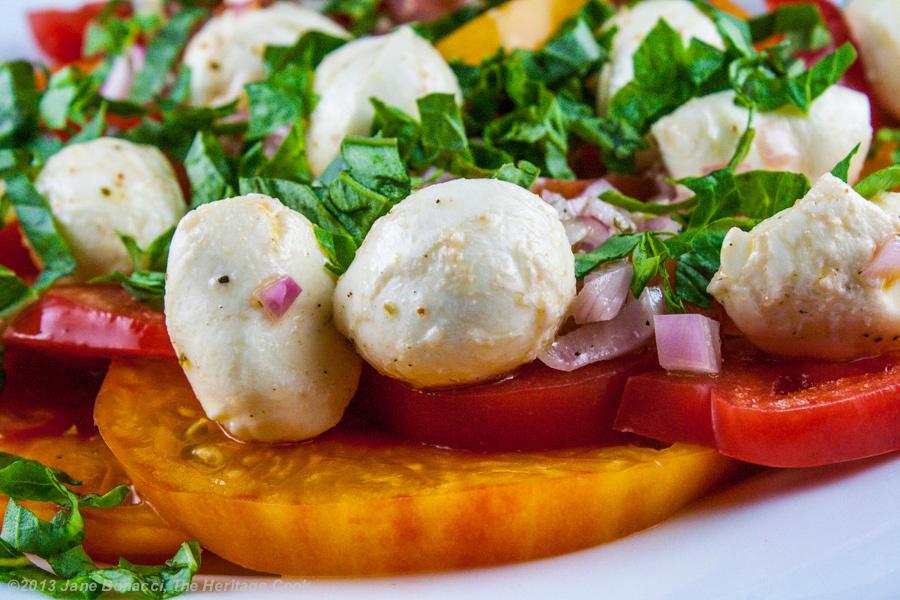}} & Caprese salad
     \\\bottomrule
    \end{tabular}
    \caption{Examples with the highest (top 5) prediction uncertainty on Food101. {These 5 examples are misclassified.}}
    \label{tab:examples1}
\end{table*}
\begin{table*}[!t]
    \centering
    \small
    \renewcommand\arraystretch{1.1}
    \begin{tabular}{p{0.5cm}p{7cm}p{3cm}p{1.5cm}}
        \toprule
        \textbf{Index}&\textbf{Text} & \textbf{Image} & \textbf{Label}\\
        \midrule
        1&gwen s kitchen creations delicious baked goods created in the kitchen for oscar \textbf{macaron} tips and tricks and a recipe by gwen on | i ve made french \textbf{macarons} many many times my first few attempts ended in misery broken shells cracks hollows no feet everything that could go wrong in a bad \textbf{macaron} did however i never stopped trying who knows how much pounds of almond flour meal and powdered sugar i sieved but it was not in vain i learned so much from all these [...] & \raisebox{-.9\height}{\includegraphics[width=80pt]{./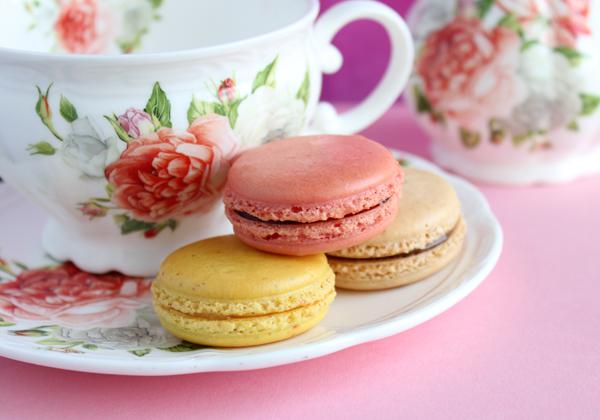}} & Macarons\\\midrule
        2&the pleasure monger telling it as it is menu skip to content home about me advertising collaborative opportunities using my content contact me tag archives \textbf{macaron} recipe sunflower seed \textbf{macarons} with black truffle salted white chocolate ganache 37 replies when the very talented and prolific shulie writer of food wanderings approached me on twitter to do a guest post for her tree nut free \textbf{macaron} series [...]& \raisebox{-.9\height}{\includegraphics[width=80pt]{./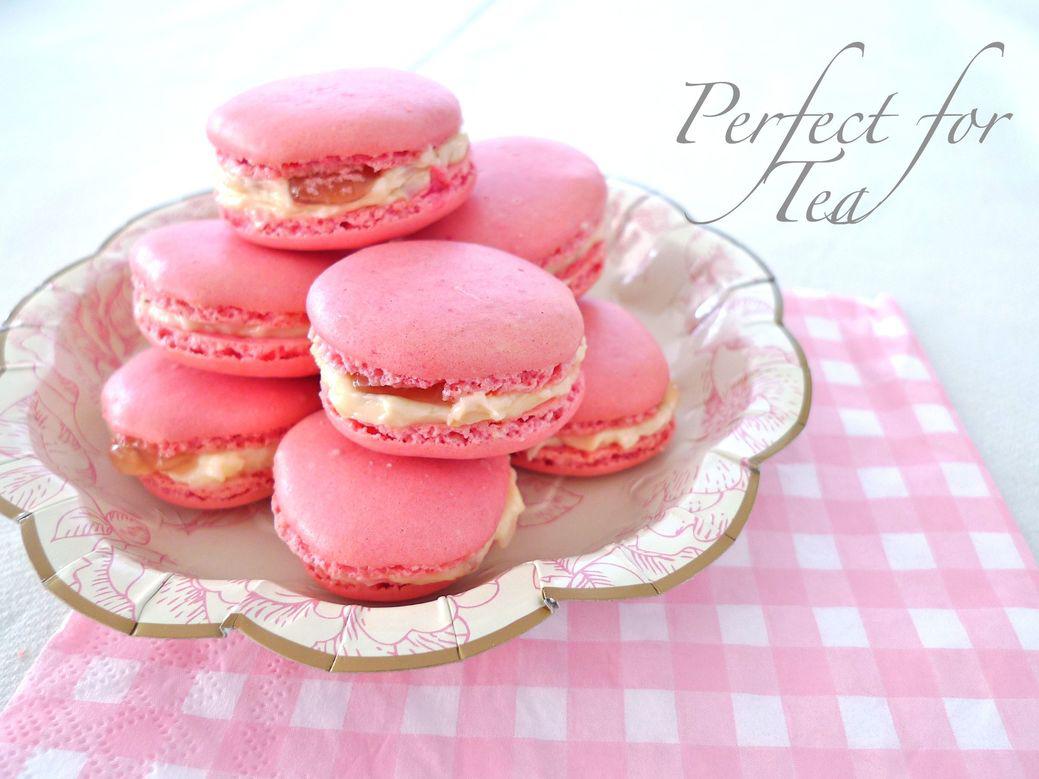}} & Macarons\\\midrule
        3&it s free and you can unsubscribe at any time submit your email address below and we ll send you a confirmation message right away approve with one click and you re done for more information click here get it delivered selected posts best new pastry chef why weight total eclipse of the tart homemade sprinkles about bravetart monday october 24 2011 \textbf{macarons} are for eating tweet when i've posted about \textbf{macarons} before in \textbf{macaron} mythbusters [...] & \raisebox{-.9\height}{\includegraphics[width=80pt]{./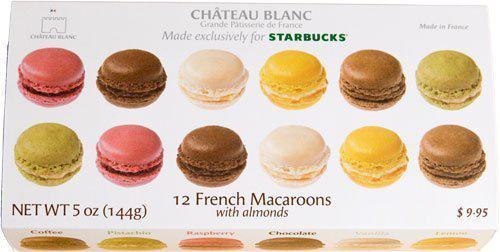}} & Macarons \\\midrule
        4&tastespotting features the delicious life facebook twitter tastespotting want to submit something delicious? login or register first browse by date | category | popularity | random tag \textbf{macarons} daydreamerdesserts blogspot com helping end childhood hunger one \textbf{macaron} at a time s mores banana cream pie apple pie oatmeal raisin piña colada \textbf{macarons} 83305 by tavqueenb save as favorite bakingupforlosttime blogspot com \textbf{macaron} party my first attempt at french \textbf{macarons} [...] &
        \raisebox{-.9\height}{\includegraphics[width=80pt]{./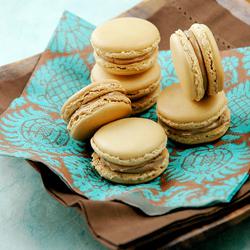}} & Macarons\\\midrule
        5&blue ribbons recipe contests giveaways meet the kitchen crew cookbooks custom cookbooks member cookbooks members choice cookbooks my cookbooks coupons shop knickknacks gift memberships members choice cookbooks my kitchen my profile recipe box cookbooks menu calendar grocery list conversations notifications hide ad trending recipes rice krispy treat \textbf{macarons} rice krispy treat \textbf{macarons} 1 photo pinched 3 times grocery list add this recipe to your grocery list print print this recipe and money saving coupons photo [...] &
        \raisebox{-.9\height}{\includegraphics[width=80pt]{./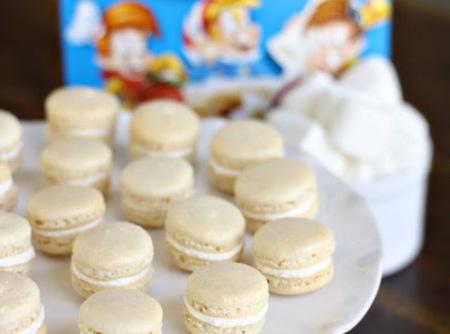}} & Macarons
     \\\bottomrule
    \end{tabular}
    \caption{Examples with the lowest (top 5) prediction uncertainty on Food101. {The above 5 samples are correctly classified.}}
    \label{tab:examples2}
\end{table*}
\end{document}